\newcommand{\BudgetNodesRSIDsparse}{256}
\newcommand{\BudgetNodesRSIDdense}{239}
\newcommand{\RuntimeMeanOsetsparse}{567 ms}
\newcommand{\RuntimeMeanAncestorsparse}{40.9 ms}
\newcommand{\RuntimeMeanParentsparse}{30.5 ms}
\newcommand{\RuntimeMeanOsetdense}{1.68 s}
\newcommand{\RuntimeMeanAncestordense}{207 ms}
\newcommand{\RuntimeMeanParentdense}{173 ms}
\newcommand{\BudgetNodesOsetsparse}{1105}
\newcommand{\BudgetNodesOsetdense}{508}
\newcommand{\BudgetNodesAncestorsparse}{8211}
\newcommand{\BudgetNodesAncestordense}{932}
\newcommand{\BudgetNodesParentsparse}{13601}
\newcommand{\BudgetNodesParentdense}{962}
\newcommand{\ThousandMeanOset}{5.0 ms}
\newcommand{\ThousandMeanAncestor}{3.4 ms}
\newcommand{\ThousandMeanParent}{7.3 ms}
\newcommand{\walknoncausal}{\texttt{\textsc{NonCausal}}}
\newcommand{\walkcausalopen}{\texttt{\textsc{PD$^{T\to}$Open}}}
\newcommand{\walkcausalblocked}{\texttt{\textsc{PD$^{T\to}$Blocked}}}
\newcommand{\walknamopen}{\texttt{\textsc{PD$^{T\arrow{}}$Open}}}
\newcommand{\walknamblocked}{\texttt{\textsc{PD$^{T\arrow{}}$Blocked}}}
\newcommand{\arrow}{\mathrel{\ooalign{\rule[.5ex]{1.8ex}{.45pt}}}}
\newcommand{\doop}[1]{\operatorname{do}\mleft(#1\mright)}
\newcommand{\g}[1][G]{\mathcal{#1}}
\newcommand{\G}{\mathcal{G}}
\newcommand{\Gtrue}{\mathcal{G}_\text{true}}
\newcommand{\Gguess}{\mathcal{G}_\text{guess}}
\newcommand{\none}{\mathsf{none}}
\newcommand{\correct}{\mathsf{correct}}
\newcommand{\false}{\mathsf{incorrect}}
\DeclareMathOperator{\pa}{Pa}
\DeclareMathOperator{\posspa}{PossPa}
\DeclareMathOperator{\ch}{Ch}
\DeclareMathOperator{\possch}{PossCh}
\DeclareMathOperator{\an}{An}
\DeclareMathOperator{\possan}{PossAn}
\DeclareMathOperator{\de}{De}
\DeclareMathOperator{\possde}{PossDe}
\DeclareMathOperator{\forb}{Forb}
\DeclareMathOperator{\possDe}{PossDe}
\DeclareMathOperator{\CN}{Cn}
\DeclareMathOperator{\possCN}{PossCn}
\DeclareMathOperator{\adjacents}{AdjacentNodes}
\newcommand{\cn}[2][T,Y]{\CN(#1,#2)}
\newcommand{\cnb}[2][T,Y]{\CN(\mathbf{#1},#2)}
\newcommand{\posscnb}[2][T,Y]{\possCN(\mathbf{#1},#2)}
\newcommand{\opt}[2][T,Y]{\mathbf{O}(#1,#2)}
\newcommand{\optb}[2][T,Y]{\mathbf{O}(\mathbf{#1},#2)}
\tikzset{
    bidirected/.style={Latex-Latex,dashed},
    directed/.style={-Latex,semithick},
    el/.style = {inner sep=2pt, align=left, sloped}
    point/.style = {circle, draw, inner sep=0.04cm,fill,node contents={}},
    state/.style ={minimum width=0.5cm},
}
\crefname{Definition}{Definition}{Definitions}
\crefname{Example}{Example}{Examples}
\newtheorem{Satz}{Satz}
\theoremstyle{plain}
\newtheorem{Lemma}[Satz]{Lemma}
\newtheorem{Corollary}[Satz]{Corollary}
\newtheorem{Proposition}[Satz]{Proposition}
\theoremstyle{definition}
\def\th@definition{%
  \thm@notefont{\normalfont\itshape}%
  \normalfont %
}
\newtheorem{Example}[Satz]{Example}
\newtheorem{Definition}[Satz]{Definition}
\newcommand*{\algrule}[1][\algorithmicindent]{%
  \makebox[#1][l]{%
    \hspace*{.2em}%
    {\color{gray!50!white}\vrule height .75\baselineskip depth .25\baselineskip width 1pt}
  }
}
\def\ALG@printindent{%
    \ifnum \theALG@nested>0%
    \ifx\ALG@text\ALG@x@notext%
    \else
    \unskip
    \ALG@printindent@tempcnta=1
    \loop
    \algrule[\csname ALG@ind@\the\ALG@printindent@tempcnta\endcsname]%
    \advance \ALG@printindent@tempcnta 1
    \ifnum \ALG@printindent@tempcnta<\numexpr\theALG@nested+1\relax
    \repeat
    \fi
    \fi
}
\patchcmd{\ALG@doentity}{\noindent\hskip\ALG@tlm}{\ALG@printindent}{}{\errmessage{failed to patch}}
\patchcmd{\ALG@doentity}{\item[]\nointerlineskip}{}{}{} %
\algrenewcommand\Call[2]{%
  \textproc{#1}%
  \ifthenelse{\equal{\detokenize{#2}}{}}{}{(#2)}%
}
\newcommand{\Title}{Adjustment Identification Distance: A \texttt{gadjid} for Causal Structure Learning}
\title{\Title}
\author[1]{\href{mailto:<leonard.henckel@ucd.ie>?Subject=Your UAI 2024 paper - gadjid}{Leonard Henckel}{}}
\author[2]{Theo Würtzen}
\author[2,3]{Sebastian Weichwald}
\affil[1]{%
    School of Mathematics and Statistics,
    University College Dublin,
    Ireland
}
\affil[2]{%
     Pioneer Centre for AI,
     University of Copenhagen,
     Denmark
}
\affil[3]{%
    Department of Mathematical Sciences,
    University of Copenhagen,
    Denmark
}
\begin{document}

\maketitle

\begin{abstract}
Evaluating graphs learned by causal discovery algorithms is difficult: The number of edges that differ between two graphs does not reflect how the graphs differ with respect to the identifying formulas they suggest for causal effects. We introduce a framework for developing causal distances between graphs which includes the structural intervention distance for directed acyclic graphs as a special case. We use this framework to develop improved adjustment-based distances as well as extensions to completed partially directed acyclic graphs and causal orders. We develop new reachability algorithms to compute the distances efficiently
and to prove their low polynomial time complexity.
In our package \texttt{gadjid} (open source at \href{https://github.com/CausalDisco/gadjid}{github.com/CausalDisco/gadjid}), we 
provide implementations of our distances; they are orders of magnitude faster with proven lower time complexity than the structural intervention distance and thereby provide a success metric for causal discovery that scales to graph sizes that were previously prohibitive.
\end{abstract}

\section{Introduction}\label{sec:intro}

Inferring the causal effect of a treatment on an outcome from observational data requires qualitative knowledge of the underlying causal structure, for instance in form of a causal graph \citep{pearl2009causality}.
We can, for example, use a causal graph to decide whether a set of covariates forms a valid adjustment set
and enables correct estimation of a causal effect via adjustment
\citep{pearl1995causal,perkovic2018complete}.

Under certain assumptions, we can learn the causal graph that underlies the covariates; a task known as causal discovery \citep[e.g.\@][]{spirtes2000causation,Chickering02,heinze2018causal}. Causal discovery is challenging. First, the causal graph is only identifiable from observational data under restrictive assumptions, such as additive errors satisfying distributional or scale restrictions \citep{shimizu2006linear,park2020identifiability}.
Second,
algorithms
based on existing identifiability results
often require
further assumptions or approximations
to be computationally feasible,
for example,
testing only few of
the combinatorially exploding number of required conditional independence tests
\citep{spirtes2000causation}.
Third, causal discovery from finite data is statistically challenging
and there are pitfalls to evaluating causal discovery algorithms on simulated data \citep{gentzel2019case,weichwald2020causal,kaiser2022unsuitability,reisach2021beware,reisach2023scale}.

The literature has focused on the first two problems.
Yet, for causal discovery to become practically useful,
it is necessary to tackle the third problem and improve the evaluation criteria, benchmarks, and success metrics that guide algorithm development \citep{mooij2016distinguishing,cheng2022evaluation,rios2023benchpress}.
A prerequisite for research into more accurate causal discovery algorithms
is that we quantify that accuracy,
for which we need a distance between a learned graph $\Gguess$ and the true graph $\Gtrue$.
A common and very widely used choice in the literature is the structural Hamming distance (SHD) or variants thereof
which count the number of edges that differ between graphs \citep{tsamardinos2006max,de2009comparison,constantinou2019evaluating}.
The SHD, however, does not reflect how similar graphs are when used to infer interventional distributions:
A graph $\Gguess$ may have a large Hamming distance from $\Gtrue$ but still be a good estimate of $\Gtrue$ for performing causal inference (cf.\@ \cref{corollary: hamming} or \citep{peters2015structural}).
The number of edges that differ between graphs is not a performance metric for causal discovery when the graph is to be used for effect identification.

The literature on comparing causal graphs can broadly be divided into two approaches.
The first approach
considers data-driven graph distances
\citep{viinikka2018intersection,eigenmann2020evaluation,peyrard2021ladder,dhanakshirur2023continuous}.
These are challenging to use as performance metrics for algorithm development, as evaluating these distances generally requires large samples and is only computationally feasible for small graphs.
The second approach
considers only the graph structure and its implications for causal inference to define a graph distance;
an example is the
Structural Intervention Distance (SID) \citep{peters2015structural}.
We focus on distances that consider only the graph structure.
This approach has received less attention than the first approach but offers advantages:
First, it enables comparisons to graphs encoding expert knowledge without having to specify all conditional distributions.
Second, it is independent of the sample size, hyperparameter tuning, and choice of density estimator.

The SID counts interventional densities $f(Y\mid \doop{T=t})$ in $\Gtrue$ that are incorrectly inferred if we instead use $\Gguess$ as the causal graph to compute interventional densities via parent adjustment.
For directed acyclic graphs (DAGs), this amounts to counting the parent sets in $\Gguess$ that are not valid adjustment sets in $\Gtrue$.
Using this characterization, \citet{peters2015structural} provide an algorithm 
with $O(p^4\log(p))$ time complexity in the number of nodes $p$.\footnote{%
For certain adjacency matrices, using the Strassen algorithm for matrix multiplication in the algorithm by \citet{peters2015structural} may reduce the complexity to $O(p^{\log_2(7)+1}\log(p))\approx O(p^{3.8}\log(p))$;
using our novel reachability algorithms we reduce the complexity to $O(p^3)$ for dense and $O(p^2)$ for sparse graphs.
}
They propose a generalization of the SID to completed partially directed acyclic graphs (CPDAGs), which represent equivalence classes of DAGs, by iterating over the DAGs in the equivalence class to compute a multi-set of distances.
As one major use of causal graphs is inferring interventional densities,
the SID is a practically relevant distance.
However, parent adjustment is only one of many approaches to compute causal effects and is in fact statistically inefficient \citep{rotnitzky2020efficient,henckel2019graphical}.
Further, iterating over the DAGs in a Markov equivalence class to calculate the multi-set SID between CPDAGs has exponential time complexity and the resulting non-scalar distance is difficult to interpret. In fact, the CRAN SID package v1.1 requires that the true graph be a directed acyclic graph, that is, it does not implement the distance between two completed partially directed acyclic graphs outlined by \citet{peters2015structural};
calculating the SID between a DAG and CPDAG exactly likewise has exponential time complexity and returns a difficult to interpret multi-set.

\paragraph{Contribution.}
We develop distances between causal graphs that reflect
their dissimilarity when used to infer causal effects.
Specifically,
we propose a framework to construct an identifiability distance from a graphical identification strategy, that is, an algorithmic approach to causal effect identification. 
We posit that such a distance is interesting if the
underlying graphical strategy for causal effect identification
represents a potential practitioner using the graph to infer causal effects. Different assumptions on how an idealized practitioner would use a causal graph lead to different causal graph distances.
We show that our framework 
includes the SID for DAGs as a special case
and use the framework to propose new distances with attractive properties.
We discuss for each distance a) whether the underlying identification strategy is good practice and used in practice, and b) when it is zero.
Within our framework and in contrast to the SID,
our distances canonically generalize to distances between any combination of DAGs and CPDAGs.
We also generalize one of the distances to learned causal node orders.
We develop polynomial time algorithms to compute the distances, which,
to our knowledge, makes them the first causal distances between CPDAGs with a polynomial runtime guarantee.
For the distances using local adjustment strategies,
which in the special case of DAGs includes the SID,
we show that the complexity is $O(p^2)$ for sparse and $O(p^3)$ for dense graphs, irrespective of whether the graphs are DAGs or CPDAGs.
We also show that the complexity of our most complex distance is at most $O(p^4)$.
We provide empirical evidence for the asymptotic time complexities and fast runtimes of our algorithms. 
Finally, we discuss how future advances on sound and complete criteria for causal effect identification could be integrated into our framework to develop distances for more general graph types that allow for unobserved variables.

\paragraph{A \texttt{gadjid} for causal structure learning.}
For our distances, we provide efficient 
Rust-implementations with a Python interface.
Our package
\texttt{gadjid} (open source at \href{https://github.com/CausalDisco/gadjid}{github.com/CausalDisco/gadjid})
enables
researchers to evaluate and benchmark causal discovery algorithms
with causally meaningful and computationally tractable performance metrics
to guide and support the development of
structure learning algorithms.

\section{Preliminaries}

\begin{figure}[t!]
\centering
    \begin{tikzpicture}[scale=1]
      \node[state] (w1) at (-1.5,0) {$V_2$};
      \node[state] (w2) at (0,0) {$V_3$};
      \node[state] (w3) at (-3,0) {$V_1$};
      \node[state] (w4) at (-.75,1) {$V_4$};
      \path (w1) edge[-] (w2)
            (w3) edge[directed] (w1)
            (w4) edge[directed] (w1)
            (w4) edge[directed] (w2)
            (w3) edge[directed,bend right=20] (w2);
      \node[state] (v1) at (3,1) {$V_2$};
      \node[state] (v2) at (4.5,1) {$V_3$};
      \node[state] (v3) at (1.5,1) {$V_1$};
      \node[state] (v4) at (3.75,2) {$V_4$};
      \node[state] (v5) at (3,-1) {$V_2$};
      \node[state] (v6) at (4.5,-1) {$V_3$};
      \node[state] (v7) at (1.5,-1) {$V_1$};
      \node[state] (v8) at (3.75,-0) {$V_4$};
      \path (v1) edge[directed] (v2)
            (v3) edge[directed] (v1)
            (v4) edge[directed] (v1)
            (v4) edge[directed] (v2)
            (v3) edge[directed,bend right=20] (v2);
        \path (v6) edge[directed] (v5)
            (v7) edge[directed] (v5)
            (v8) edge[directed] (v5)
            (v8) edge[directed] (v6)
            (v7) edge[directed,bend right=20] (v6);
        \draw[dashed] (.75,-1.4) -- (.75,2.17);
    \end{tikzpicture}
    \vspace{-.2cm}
    \caption{A CPDAG (left) and the two DAGs in the corresponding Markov equivalence class (right).}
    \label{figure: amenability example}
\end{figure}

We use graphs where nodes represent random variables
and edges causal relationships. 
Here, we provide an overview of the key terminology 
and refer to \cref{supp:preliminaries} for details.

We consider two types of graphs: directed acyclic graphs (DAGs) and completed partially directed acyclic graphs (CPDAGs), see \cref{figure: amenability example}. DAGs are graphs
with directed edges ($\rightarrow$) and without directed cycles. 
DAGs can describe causal
relationships
without feedback loops
\citep{pearl2009causality}.
They also encode conditional independences that can be read off the graph using the d-separation criterion \citep{pearl2009causality}. 
DAGs can be learned from data only under strong assumptions.
However, the class of DAGs encoding the same conditional independences, known as its Markov equivalence class, can be learned under weaker assumptions.
A CPDAG can uniquely represent this equivalence class if there are no hidden variables
\citep{meek1995causal,Chickering02}. CPDAGs contain directed ($\rightarrow$) and undirected ($\arrow$) edges and satisfy further structural properties \citep{meek1995causal}.

\paragraph{Causal DAGs and CPDAGs.}
We consider external interventions $\doop{\mathbf{T}=\mathbf{t}}$ (short $\doop{\mathbf{t}}$) for $\mathbf{T} \subseteq \mathbf{V}$ that set $\mathbf{T}$ to some value $\mathbf{t}$ for the entire population \citep{pearl1995causal}. 
A probability density function $f$ over random variables
$\mathbf{V} = (V_1, \dots, V_p)$
is compatible with a causal DAG $\g = (\mathbf{V}, \mathbf{E})$ if all densities $f(\mathbf{v}\mid\doop{\mathbf{t}})$
obey
\begin{equation*} %
\displaystyle
    f(\mathbf{v}\mid\doop{\mathbf{t}}) = 
    \begin{cases}
        \prod_{V \in \mathbf{V} \setminus \mathbf{T}} f(v\mid \pa(V,\g))& \text{if } \mathbf{T}=\mathbf{t}, \\
        0 & \text{otherwise.}
    \end{cases}
\end{equation*}
This equation is known as truncated factorization formula \citep{pearl2009causality}, manipulated density
formula \citep{spirtes2000causation}, or g-formula \citep{robins1986new}. A density $f$ is
compatible with a CPDAG $\g$ if it is compatible with a causal DAG in the Markov equivalence class represented by $\g$.

\paragraph{Identifying formula.}
Causal graphs are used to estimate the causal effect of a treatment $\mathbf{T}\subseteq \mathbf{V}$ on an outcome $\mathbf{Y}\subseteq\mathbf{V}$ from observational data, that is, to estimate (functionals of) the interventional distribution
$f(\mathbf{y}\mid \doop{\mathbf{t}})$.
To do so, we require an identifying formula for this interventional distribution,
that is, an equation in the observational density that solves for
$f(\mathbf{y}\mid \doop{\mathbf{t}})$
for any $f$ compatible with the causal graph.
We refer to inferring such an identifying formula from a causal graph as inferring the causal effect.
{An effect is identifiable in a causal graph $\g$ if there is at least one identifying formula.}

\paragraph{Valid adjustment.}
Let $\mathbf{T},\mathbf{Y}$, and $\mathbf{Z}$ be pairwise disjoint node sets
in a causal DAG or CPDAG $\g$.  $\mathbf{Z}$ is a valid adjustment set if $f(\mathbf{y}\mid \doop{\mathbf{t}}) = \int f(\mathbf{y}\mid \mathbf{t},\mathbf{z}) f(\mathbf{z}) \,\mathrm{d}\mathbf{z}$ for any density $f$ compatible with $\g$.
Graphical criteria fully characterize valid adjustment sets in DAGs, CPDAGs, and other graph types \citep{perkovic2018complete}.

\paragraph{Causal ordering.}
Let $\g$ be a DAG with node set $\mathbf{V}$. A strict partial order $\prec$ on $\mathbf{V}$ is called a causal order of $\g$ if for all nodes $A,B\in\mathbf{V}$ with $A \rightarrow B$ it holds that $A \prec B$. 
In general, there are multiple causal orders.
For two DAGs $\g$ and $\mathcal{H}$ with node set $\mathbf{V}$, we say that $\g$ respects the causal orders of $\mathcal{H}$ if every causal order of $\g$ is a causal order of $\mathcal{H}$.
We define $\mathrm{pre}_{\prec}(B) = \{A\in\mathbf{V}\mid A\prec B\}$, $\mathrm{post}_{\prec}(A) = \{B\in\mathbf{V}\mid A\prec B\}$, and $\g_\prec$ as the transitively closed DAG with $A\to B$ if and only if $A\prec B$.

\section{Causal Identification Distance}
\label{section: strategic distance def}

We introduce a framework for developing identifiability distances between causal graphs.
This framework lays out how to extend distances to different graph types
and align them with how causal graphs are used to answer causal queries.

\subsection{Framework}

In our framework,
a distance is defined by
a) a sound and complete identification strategy and 
b) a verifier. %
We use the identification strategy to derive identification formulas based on $\Gguess$
and the verifier to evaluate whether the identification formulas obtained on $\Gguess$ are correct in $\Gtrue$. 
Intuitively, the identification strategy represents how an idealized practitioner would use $\Gguess$ to infer causal effects and the verifier evaluates how often the practitioner would be wrong if the ground truth graph were $\Gtrue$.
For simplicity, we only consider single-node interventions
while the framework generalizes when provided a sound and complete identification strategy and verifier (cf.\@ also \cref{sec:osetaid}).

\begin{Definition}[Identification Strategy]
\label{definition: identification strategy}
    An \emph{identification strategy} is an algorithm that for a tuple $(\g,T,Y)$ of a causal graph $\g$ and two distinct nodes $T$ and $Y$ in $\g$, returns the tuple $(T,Y)$ and either an identifying formula $I$ for $f(y\mid \doop{t})$ or $\none$.
An identification strategy $\mathcal{I}$ is sound and complete if a) $\mathcal{I}(\g,T,Y)\neq\none$ if and only if $f(y\mid \doop{t})$ is identifiable in $\g$ and b) all returned identifying formulas are correct for any $f$ compatible with $\g$.
\end{Definition}

\begin{Example}[Parent Adjustment Strategy]
    \label{example: parental adjustment strategy}
    For a DAG $\g$ and two distinct nodes $T$ and $Y$, $\mathbf{P}_T=\pa(T,\g)$ is a valid adjustment set whenever $Y \notin \mathbf{P}_T$. If, on the other hand, $Y \in \mathbf{P}_T$, then, by the acyclicity of $\g$, $Y$ is a non-descendant of $T$ and so there is no causal effect from $T$ on $Y$. We can combine these two results to obtain the sound and complete \emph{parent adjustment strategy}
    \begin{align*}
        \mathcal{I}_{P}(\g,T,Y)= \begin{cases}
            \int f(y\mid t , \mathbf{p}_T) f(\mathbf{p}_T) \, \mathrm{d}\mathbf{p}_T & \text{if } Y \notin \mathbf{P}_T,\\
            f(y) & \text{else.}
        \end{cases}
    \end{align*}
In a DAG, all causal effects are identifiable and therefore $\mathcal{I}_P$ never returns $\none$.
\end{Example}

\begin{Definition}[Verifier]
\label{definition: verification strategy}
    A \emph{verifier} is an algorithm that given a graph $\g$, distinct nodes $T$ and $Y$ in $\g$, and an identifying formula $I$ for $f(y\mid \doop{t})$, verifies whether $I$ is correct for all densities compatible with $\g$
    and returns either $\correct$ or $\false$. 
    For an input of $\none$ it verifies that the effect is not identifiable in $\g$, that is, that no identifying formula exists.
\end{Definition}

Identification has been widely studied; for example, various sufficient conditions for the validity of adjustment sets are known \citep{pearl1993bayesian,maathuis2013generalized}.
Verification, however, has received limited attention
and no algorithm is available to verify an arbitrary identifying formula for DAGs or CPDAGs.
Yet, for some types of identifying formulas, necessary and sufficient graphical criteria exist and these can be used for verification.
In particular, for identifying formulas that use adjustment, we can use the necessary and sufficient
adjustment criterion for verification  \citep{shpitser2010validity,perkovic2018complete}.
A necessary and sufficient criterion also exists for instrumental variables but only for linear models \citep{henckel2023graphical}.
We focus on adjustment-based identification strategies but our framework is amenable to other strategies provided a corresponding verifier exists.

\begin{Example}[Adjustment-Verifier for DAGs]
\label{example: DAG adjustmnet verifier}
    The identification strategy $\mathcal{I}_P$ relies on two identification principles: a) valid adjustment and b) non-descent.
    The verifier $\mathcal{V}_{\text{adj}}$ in \cref{algorithm: DAG adjustment verifier} is simple, sound, and complete
    and uses the adjustment criterion and a non-descent check
    for the verification of adjustment-based identification strategies. %
    In DAGs all effects are identifiable
    and so
    the verifier rejects any $\none$-identification formula as $\false$,
    that is,
    the if-branch in line 5 is not reached
    and only included for completeness.
\end{Example}

\begin{Definition}[$\mathcal{I}$-Specific Identification Distance]
\label{definition: strategy specific distance}
    Given a sound and complete identification strategy $\mathcal{I}$,
    we define the \emph{$\mathcal{I}$-specific identification distance} between two graphs $\Gtrue$ and $\Gguess$ with
    common node set $\mathbf{V}$ as the number of identification 
    formulas
    inferred by 
    $\mathcal{I}$ on 
    $\Gguess$ that are incorrect relative to $\Gtrue$, that is,
\begin{multline*}
d^{\mathcal{I}}(\Gtrue,\Gguess,\mathbf{S}) \\
=
\sum_{(T, Y) \in \mathbf{S}}
\mathbb{1}_{\{\false\}}\big(\
\mathcal{V}\left(
\Gtrue,
\mathcal{I}(\Gguess,T,Y)
\right)\big)
\end{multline*}
    where $\mathbf{S}\subseteq \overline{\mathbf{S}}=\{ (T,Y)\in \mathbf{V}\times\mathbf{V} \mid T\neq Y \}$
    and $\mathcal{V}$ is a verifier.
    Unless otherwise noted, we
    use all $p(p-1)$ pairs of distinct nodes in $\mathbf{V}$
    and write $d^{\mathcal{I}}(\Gtrue,\Gguess)=d^{\mathcal{I}}(\Gtrue,\Gguess,\overline{\mathbf{S}})$.\footnote{%
    The flexibility to choose other sets $\mathbf{S'}\subset \overline{\mathbf{S}}$
    allows one to tailor the distance $d^\mathcal{I}(\Gtrue,\Gguess,\mathbf{S'})$
    to consider only some specific nodes of interest as treatment or effect nodes
    or, given a suitable identification strategy and verifier,
    to consider multi-node interventions
    when comparing graphs
    (see also \cref{sec:osetaid}).
    }
\end{Definition}

Formally, the $\mathcal{I}$-specific identification distance between $\Gtrue$ and $\Gguess$ counts how many of the identification formulas obtained by using the identification strategy $\mathcal{I}$ on $\Gguess$ are incorrect for $\Gtrue$. Intuitively, the distance is the number of causal effects an idealized practitioner would wrongly infer from $\Gguess$, if $\mathcal{I}$ resembled how the practitioner would use the graph $\Gguess$ for causal inference while the true DAG were $\Gtrue$. By construction, strategy-specific identification distances are asymmetric in their input graphs.

\begin{Example}[SID is the $\mathcal{I}_{P}$-Specific Distance for DAGs]
Let $\Gtrue$ and $\Gguess$ be  DAGs with common node set $\mathbf{V}$.
Then $d^{\mathcal{I}_P}(\Gtrue,\Gguess)$ coincides with $\operatorname{SID}(\Gtrue,\Gguess)$ as defined by \citet{peters2015structural}.
For CPDAGs, however, the SID is defined as the multi-set obtained by calculating the SID for each DAG in the Markov equivalence class
and is not a distance in our framework;
in \cref{sec:cpdagdistances} we present the canonical extension of the $\mathcal{I}_P$-specific distance to CPDAGs that outputs a scalar and retains interpretability.
\label{example: SID}
\end{Example}

\begin{algorithm}[t]
\caption{Adjustment-Verifier $\mathcal{V}_{\text{adj}}$}\label{algorithm: DAG adjustment verifier}
\begin{algorithmic}[1]
\State \textbf{Input}: Graph $\g$, tuple $(T,Y)$, identifying formula $I$
\State \textbf{Output}: Validity indicator $V\in\{\correct,\false\}$
\vspace{.25em}
\State $V \gets \false$
\If{$I=\none$ and $f(y\mid \doop{t})$ not identifiable in $\g$} 
\State $V \gets \correct$ 
\ElsIf{$I=f(y)$ and $Y \in \mathrm{NonDe}(T,\g)$} 
\State $V \gets \correct$ 
\ElsIf{$I=\int f(y \mid t,\mathbf{z}) f(\mathbf{z}) \ \,\mathrm{d}\mathbf{z}$ and $\mathbf{Z}$ is a valid adjustment set relative to $(T,Y)$ in $\g$}\State $V \gets \correct$
\EndIf
\vspace{.25em}
\State \Return{V}
\end{algorithmic}
\end{algorithm}

\subsection{General Properties}
\label{section: strategic distance properties}

Any identification distance between a DAG and a supergraph of that DAG is zero.
A corollary
highlights that identification distances
differ from the SHD:
there exist DAGs for which identification distances 
are maximally different from the SHD.
Proofs are provided in \cref{supp:proofs}.

\begin{Proposition}[Distance to Super-DAG is Zero]
    Let $\mathcal{I}$ be a sound and complete identification strategy for DAGs. %
    For any DAG $\Gtrue$, it holds that
    if $\Gguess$ is a super-DAG of $\Gtrue$,
    then    
    $d^{\mathcal{I}}(\Gtrue,\Gguess)=0$.
    \label{prop: supergraph distance}
\end{Proposition}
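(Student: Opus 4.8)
The plan is to use that $d^{\mathcal{I}}(\Gtrue,\Gguess)$ is a sum of nonnegative indicator terms, one per pair $(T,Y)\in\overline{\mathbf{S}}$, so it vanishes precisely when the verifier accepts every formula inferred on $\Gguess$, that is, when $\mathcal{V}(\Gtrue,\mathcal{I}(\Gguess,T,Y))=\correct$ for all $(T,Y)$. It therefore suffices to show that each identifying formula obtained from $\Gguess$ is correct relative to $\Gtrue$. Since $\Gguess$ is a DAG, every effect $f(y\mid\doop{t})$ is identifiable in it, so soundness and completeness of $\mathcal{I}$ ensure that $I:=\mathcal{I}(\Gguess,T,Y)\neq\none$ and that $I$ is a correct identifying formula for all densities $f$ compatible with $\Gguess$; in particular the $\none$-branch of the verifier never arises.

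The key step is a \emph{compatibility-transfer} argument showing that every density $f$ compatible with $\Gtrue$ is also compatible with $\Gguess$. Fix a node $V$ and let $\mathbf{W}=\pa(V,\Gguess)\setminus\pa(V,\Gtrue)$ collect the parents gained in the super-DAG. Because $\Gtrue$ and $\Gguess$ share the node set, every edge of $\Gtrue$ is an edge of $\Gguess$, and $\Gguess$ is acyclic, no $W\in\mathbf{W}$ can be a descendant of $V$ in $\Gtrue$: otherwise a directed path $V\to\cdots\to W$ in $\Gtrue$ would also lie in $\Gguess$ and, together with the edge $W\to V$ present in $\Gguess$, close a directed cycle. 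Hence every element of $\mathbf{W}$ is a non-descendant of $V$ in $\Gtrue$, and the local Markov property of $\Gtrue$, which $f$ satisfies as it is compatible with $\Gtrue$, gives that $V$ is conditionally independent of $\mathbf{W}$ given $\pa(V,\Gtrue)$. Consequently $f(v\mid\pa(V,\Gtrue))=f(v\mid\pa(V,\Gguess))$ for every $V$, and substituting these equalities into the truncated factorization shows that, for every intervention target, the $\Gtrue$- and $\Gguess$-factorizations of $f(\mathbf{v}\mid\doop{\mathbf{t}})$ coincide. Thus $f$ is compatible with $\Gguess$.

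Combining the two observations finishes the argument: $I$ is correct for every $f$ compatible with $\Gguess$, and since the densities compatible with $\Gtrue$ form a subset of these, $I$ is correct for every $f$ compatible with $\Gtrue$ as well. By the definition of a verifier, $\mathcal{V}(\Gtrue,\mathcal{I}(\Gguess,T,Y))=\correct$, so each indicator term is $0$ and the sum $d^{\mathcal{I}}(\Gtrue,\Gguess)$ is $0$. I expect the main obstacle to be the compatibility-transfer step, specifically verifying the factor equality $f(v\mid\pa(V,\Gtrue))=f(v\mid\pa(V,\Gguess))$; this is where acyclicity of the super-DAG and the local Markov property do the real work, and it is the only place where the super-DAG hypothesis, rather than mere soundness of $\mathcal{I}$, is used.
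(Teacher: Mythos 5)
Your proof is correct and follows essentially the same route as the paper: soundness and completeness of $\mathcal{I}$ on the DAG $\Gguess$, a compatibility-transfer between the two graphs, and then verifier acceptance for every pair $(T,Y)$. You additionally work out in detail the transfer step (the extra parents are non-descendants in $\Gtrue$ by acyclicity of $\Gguess$, then the local Markov property gives the factor equality) that the paper only asserts in a parenthetical, and you state the inclusion in the correct direction --- every $f$ compatible with $\Gtrue$ is compatible with $\Gguess$ --- whereas the paper's prose words this containment the other way around.
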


As 
$d^{\mathcal{I}}(\Gtrue,\Gtrue)=0$,
identification distances
are pre-metrics.
\cref{prop: supergraph distance} is a consequence of all causal effects in a DAG being identifiable; %
adding edges removes the information that certain effects are absent
and may reduce the number of correct identifying formulas, but never to zero.
For other graph types such as CPDAGs, this is not the case (cf.\@ \cref{sec:cpdagdistances}).
As a corollary,
DAGs may be close in
identification distance
but far in SHD.

\begin{Corollary}[Identification Distances Differ from the SHD]
    \label{corollary: hamming}
    Let $d^{\mathcal{I}}$ be a strategy-specific identification distance. Let $\Gguess$ be a fully connected DAG with $p$ nodes and $\Gtrue$ the empty DAG on the same node set.
    Then
    the SHD $d_{H}$ is maximal
    and maximally different from
    $d^\mathcal{I}$:
    \begin{equation*}
        d_{H}(\Gtrue,\Gguess) -d^{\mathcal{I}}(\Gtrue,\Gguess)=p(p-1)/2 - 0.
    \end{equation*}
\end{Corollary}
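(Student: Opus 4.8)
The plan is to compute each of the two terms on the right-hand side separately and then combine them. The statement asserts that for $\Gguess$ the fully connected DAG and $\Gtrue$ the empty DAG on a common node set of $p$ nodes, the SHD equals $p(p-1)/2$ while the identification distance $d^{\mathcal{I}}$ equals $0$.

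First I would handle the SHD term. The empty DAG $\Gtrue$ has no edges, and the fully connected DAG $\Gguess$ has an edge between every unordered pair of nodes; since a DAG on $p$ nodes that is complete (one directed edge per pair consistent with some topological order) has exactly $\binom{p}{2}=p(p-1)/2$ edges, every one of these edges is present in $\Gguess$ but absent in $\Gtrue$. As the SHD counts edges on which the two graphs disagree, we obtain $d_H(\Gtrue,\Gguess)=p(p-1)/2$, which is the maximal possible value for graphs on $p$ nodes. I would also note that this is maximal, since no DAG on $p$ nodes can have more than $\binom{p}{2}$ edges.

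Next I would show $d^{\mathcal{I}}(\Gtrue,\Gguess)=0$. Here the key observation is that $\Gguess$, being fully connected, is a super-DAG of $\Gtrue$: it contains all (zero) edges of the empty graph $\Gtrue$ and shares the same node set, and it is acyclic by assumption. Hence \cref{prop: supergraph distance} applies directly and yields $d^{\mathcal{I}}(\Gtrue,\Gguess)=0$ for any sound and complete identification strategy $\mathcal{I}$. Substituting both computed values into the difference gives $d_H(\Gtrue,\Gguess)-d^{\mathcal{I}}(\Gtrue,\Gguess)=p(p-1)/2-0$, as claimed.

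The only subtlety, and the main thing to pin down, is verifying that $\Gguess$ genuinely qualifies as a super-DAG of $\Gtrue$ in the sense required by \cref{prop: supergraph distance}: a fully connected \emph{DAG} must orient its complete set of edges acyclically, so I would make explicit that ``fully connected DAG'' means a complete graph oriented along a fixed topological order (a tournament with no cycles), which is indeed acyclic and contains every edge of the empty DAG. Once this identification with the hypotheses of \cref{prop: supergraph distance} is clear, the result follows immediately and no further case analysis on the identification strategy or verifier is needed, since \cref{prop: supergraph distance} already covers every sound and complete $\mathcal{I}$.
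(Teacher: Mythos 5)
Your proposal is correct and follows exactly the paper's intended route: the paper presents this statement as an immediate consequence of \cref{prop: supergraph distance}, obtained by observing that the fully connected DAG is a super-DAG of the empty DAG (giving $d^{\mathcal{I}}(\Gtrue,\Gguess)=0$) and that the two graphs differ in all $p(p-1)/2$ possible edges (giving the maximal SHD). Your additional care in noting that a fully connected DAG must be a complete graph oriented acyclically along a topological order is a sound clarification but does not constitute a different approach.
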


\section{DAG Distances}
\label{section: DAG distances}

We propose three adjustment-based distances for DAGs and extend them to CPDAGs in \cref{sec:cpdagdistances}.
We propose a \emph{parent adjustment distance} which between DAGs corresponds to the SID but, in contrast to the SID, generalizes canonically to CPDAGs.
We develop an \emph{ancestor adjustment distance}
that
assigns low distance to graphs with similar causal orders and an \emph{Oset adjustment distance}
that uses a statistically efficient identification strategy. 
We discuss how each distance corresponds to different assumptions on how a practitioner would use a graph for causal inference.
Users need to choose (a combination of) distances
based on how they envision the graph will be used for causal reasoning in a downstream task.
Depending on the downstream task, even the SHD may be considered a causal graph distance, for example, if the downstream task merely involves reasoning about the existence of direct cause-effect relationships but not the identification or estimation of those direct effects.

\subsection{Parent Adjustment Distance}

We call $d^{\mathcal{I}_P}$ the \emph{parent adjustment distance (Parent-AID)}.
The Parent-AID is an identification distance (cf.\@ \cref{example: SID}) but yields unintuitive results between graphs with the same causal orders (cf.\@ \cref{lemma: causal order})
and uses an inefficient adjustment strategy (cf.\@ \cref{sec:osetaid});
we include it for completeness,
as it includes the SID for DAGs as a special case, but, in contrast to the SID, canonically extends to CPDAGs within our framework (cf.\@ \cref{sec:cpdagdistances}).
We develop refined adjustment-based distances in the next two subsections.

Parent adjustment is used in practice \citep{gascon2015prenatal,sunyer2015association}. %
To reason about the effect of intervening on a variable $T$,
it requires one to know only the direct causes of $T$ but not the full causal graph;
if a practitioner knew the parent sets for all nodes infact they would know the full causal DAG.
The Parent-AID assumes a practitioner who follows this common practice.
The parent adjustment strategy is local,
that is, for any pair $(T,Y)$ the adjustment set only depends on $T$ but not $Y$.
We use this to improve the time complexity of calculating the distance (cf.\@ \cref{section: implementation}).
For DAGs, $d^{\mathcal{I}_P}(\Gtrue,\Gguess)=0$ if and only if $\Gguess$ is a supergraph of $\Gtrue$ \citep{peters2015structural}.

\subsection{Ancestor Adjustment Distance}

Many causal discovery algorithms learn an ordering of the nodes in a separate first step \citep{shojaie2010penalized,buhlmann2014cam,chen2019causal,park2020identifiability}
and pairwise causal effects can be learned from the correct causal order alone \citep[Section 2.6]{buhlmann2014cam}.
We formalize this known result as follows.

\begin{Lemma}[Ancestors are Valid Adjustment Sets]
\label{lemma: non-descendants}
    Let $T$ and $Y$ be two distinct nodes in a DAG $\g$.
    Then any set $\mathbf{Z}$ such that $\pa(T,\g) \subseteq \mathbf{Z} \subseteq \mathrm{NonDe}(T,\g)$ and $Y\notin \mathbf{Z}$ is a valid adjustment set for $(T,Y)$ in $\g$. As a corollary, given an order $\prec$, $\mathrm{pre}_{\prec}(T)$ is a valid adjustment set for all $Y \notin \mathrm{pre}_{\prec}(T)$ in all DAGs for which $\prec$ is a causal order.
\end{Lemma}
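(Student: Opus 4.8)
The plan is to establish validity through Pearl's back-door criterion, which for a DAG asserts that a set $\mathbf{Z}$ disjoint from $\{T,Y\}$ is a valid adjustment set for $(T,Y)$ whenever (i) no node in $\mathbf{Z}$ is a descendant of $T$, and (ii) $\mathbf{Z}$ blocks every back-door path from $T$ to $Y$, i.e.\@ every path leaving $T$ through an edge pointing into $T$. I would first record that $\mathbf{Z}$ is disjoint from $\{T,Y\}$: we have $Y\notin\mathbf{Z}$ by hypothesis, and $T\notin\mathbf{Z}$ since $\mathbf{Z}\subseteq\mathrm{NonDe}(T,\g)$ and $T$ is not among its own non-descendants. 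I would also note that $Y\notin\mathbf{Z}$ together with $\pa(T,\g)\subseteq\mathbf{Z}$ forces $Y\notin\pa(T,\g)$, which rules out the degenerate single-edge back-door path $T\leftarrow Y$ and thereby guarantees that every back-door path has a genuine intermediate first node.

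The two containments split neatly across the two conditions. Condition (i) is immediate from the upper containment $\mathbf{Z}\subseteq\mathrm{NonDe}(T,\g)$, since no non-descendant of $T$ is a descendant of $T$. The substance lies in condition (ii), where I would take an arbitrary back-door path $\pi$ from $T$ to $Y$ and argue that it is blocked at its very first node. By definition $\pi$ begins $T\leftarrow W\cdots$ for some $W\in\pa(T,\g)$, and the lower containment $\pa(T,\g)\subseteq\mathbf{Z}$ gives $W\in\mathbf{Z}$. The edge joining $W$ to $T$ on $\pi$ points away from $W$, so $W$ cannot be a collider on $\pi$; it is a non-collider lying in the conditioning set and therefore blocks $\pi$. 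As $\pi$ was arbitrary, $\mathbf{Z}$ blocks all back-door paths, completing the verification of the back-door criterion and hence the main claim.

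For the corollary I would check that $\mathbf{Z}=\mathrm{pre}_{\prec}(T)$ satisfies both containments whenever $\prec$ is a causal order of $\g$. Every parent $W$ of $T$ obeys $W\prec T$ by the definition of a causal order, so $\pa(T,\g)\subseteq\mathrm{pre}_{\prec}(T)$. Conversely, any strict descendant $D$ of $T$ is reached from $T$ by a directed path, each of whose edges forces a $\prec$-relation, and transitivity of $\prec$ then yields $T\prec D$, so $D\notin\mathrm{pre}_{\prec}(T)$; hence $\mathrm{pre}_{\prec}(T)\subseteq\mathrm{NonDe}(T,\g)$. Combined with the hypothesis $Y\notin\mathrm{pre}_{\prec}(T)$, the set $\mathrm{pre}_{\prec}(T)$ meets the assumptions of the main statement and is therefore a valid adjustment set; and since the argument only invoked $\prec$-relations implied by edges, the conclusion holds uniformly for every DAG of which $\prec$ is a causal order.

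I expect no serious obstacle, as this is a classical fact; the one point requiring care is the collider analysis in condition (ii), where one must confirm that the first parent $W$ is genuinely a non-collider so that its membership in $\mathbf{Z}$ blocks rather than unblocks $\pi$. A self-contained alternative would be to verify the sound-and-complete generalized adjustment criterion of \citet{perkovic2018complete} directly, but for this sufficiency statement the back-door route is the most direct.
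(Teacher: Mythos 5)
Your proof is correct, but it verifies a different graphical criterion than the paper does. You invoke Pearl's back-door criterion (no descendants of $T$ in $\mathbf{Z}$; all paths into $T$ blocked), whereas the paper verifies the generalized adjustment criterion of \citet{perkovic2018complete} stated as \cref{def:gac}: it checks the forbidden-set condition via $\forb(T,Y,\g)\subseteq\de(T,\g)$ and $\mathbf{Z}\cap\de(T,\g)=\emptyset$, and then checks blocking of \emph{all} non-causal paths --- your argument for paths into $T$ (the first node is a parent, hence a non-collider in $\mathbf{Z}$) is exactly the paper's first case, but the paper must additionally handle non-causal paths leaving $T$ through an outgoing edge, which it blocks by noting any such path contains a collider in $\de(T,\g)$ whose descendants avoid $\mathbf{Z}$. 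Your route is shorter and more classical because the back-door theorem packages that second case into its own soundness proof; the paper's route has the advantage of staying within the criterion it formally defines and uses algorithmically throughout (\cref{def:gac}), so the same case analysis transfers to the CPDAG analogue (\cref{proposition: ancestral CPDAGs}), where the back-door criterion is not available in its classical DAG form. A further small plus on your side: you spell out the corollary about $\mathrm{pre}_{\prec}(T)$ (parents precede $T$; descendants come after $T$ by transitivity of $\prec$), which the paper leaves implicit, and you correctly flag the degenerate case $Y\in\pa(T,\g)$, which is ruled out by $Y\notin\mathbf{Z}\supseteq\pa(T,\g)$ and guarantees the blocking non-collider is a non-endpoint node.
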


Thus, it is possible to derive identification formulas in $\Gguess$ that are also correct in $\Gtrue$,
if $\Gguess$ respects the causal orders of $\Gtrue$.
Yet, even if two DAGs have the same causal orders, the Parent-AID between them can be large.

\begin{Lemma}[Parent-AID Misrepresents Causal Order]
\label{example: causal order}
\label{lemma: causal order}
Let $\Gtrue^p$ be the fully connected DAG over $p$ causally-ordered nodes
$\{V_1,...,V_p\}$
and $\Gguess^p$ the chain $V_1\to V_2\cdots\to V_p$
(cf.\@ \cref{figure: parental adjustment example}).
Then,
despite $\Gtrue$ and $\Gguess$ respecting each others causal orders, $d^{\mathcal{I}_{P}}(\Gtrue^p,\Gguess^p) = p^2-4p+4$
which is close to its maximal value $p(p-1)$
in the sense that
\begin{equation*}
\lim_{p \to \infty} {d^{\mathcal{I}_{P}}(\Gtrue^p,\Gguess^p)}/{p(p-1)} = 1.
\end{equation*}
\end{Lemma}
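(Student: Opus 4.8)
The plan is to compute $d^{\mathcal{I}_{P}}(\Gtrue^p,\Gguess^p)$ directly by classifying, for each ordered pair $(V_i,V_j)$ with $i\neq j$, the formula that $\mathcal{I}_P$ returns on $\Gguess^p$ and whether the adjustment-verifier $\mathcal{V}_{\text{adj}}$ of \cref{algorithm: DAG adjustment verifier} accepts it in $\Gtrue^p$. First I would record the two structures: in the chain, $\pa(V_i,\Gguess^p)=\{V_{i-1}\}$ for $i\ge 2$ and $\pa(V_1,\Gguess^p)=\emptyset$; in the complete DAG, $\pa(V_i,\Gtrue^p)=\{V_1,\dots,V_{i-1}\}$ and $\mathrm{NonDe}(V_i,\Gtrue^p)=\{V_1,\dots,V_i\}$. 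Both graphs admit the single causal order $V_1\prec\cdots\prec V_p$ (the chain forces it by transitivity of $\prec$, the complete DAG by its edges), so each respects the other's causal orders, which justifies the ``despite'' clause in the statement.

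The crux is the characterization of valid adjustment sets in $\Gtrue^p$. For a pair $(V_i,V_j)$ with $i<j$ an effect is present, and I claim the \emph{unique} valid adjustment set is $\pa(V_i,\Gtrue^p)=\{V_1,\dots,V_{i-1}\}$ (validity follows from \cref{lemma: non-descendants}). By the adjustment criterion \citep{perkovic2018complete}, the forbidden set is $\{V_i,\dots,V_p\}$, so any valid set is contained in $\{V_1,\dots,V_{i-1}\}$; and for every $k<i$ the proper non-causal path $V_i\leftarrow V_k\rightarrow V_j$ is open unless $V_k$ is adjusted for, forcing all of $V_1,\dots,V_{i-1}$ in and giving uniqueness. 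For a pair $(V_i,V_j)$ with $j<i$ there is no effect, and because $\Gtrue^p$ contains the direct edge $V_j\rightarrow V_i$, the length-one proper non-causal path $V_i\leftarrow V_j$ cannot be blocked by any set, so no adjustment set is valid; the only correct formula is $f(y)$, accepted through the non-descent branch since $V_j\in\mathrm{NonDe}(V_i,\Gtrue^p)$.

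With these facts the case analysis is mechanical. The strategy returns $f(y)$ exactly when $Y\in\pa(V_i,\Gguess^p)$, i.e.\ when $j=i-1$, and this is always correct because $V_{i-1}\in\mathrm{NonDe}(V_i,\Gtrue^p)$. Otherwise it returns adjustment by $\{V_{i-1}\}$ (or by $\emptyset$ when $i=1$), which is correct precisely when it equals the unique valid set of $\Gtrue^p$. This occurs only for $i\in\{1,2\}$ with $j>i$ (where $\{V_1,\dots,V_{i-1}\}$ is $\emptyset$ or $\{V_1\}$); for $i\ge 3$ it is incorrect for every $j\neq i-1$, whether $j>i$ (too small an adjustment set) or $j<i-1$ (no valid adjustment set exists). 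Summing, each $T=V_i$ with $i\ge 3$ contributes $(p-i)$ incorrect pairs with $j>i$ and $(i-2)$ incorrect pairs with $j<i-1$, i.e.\ $p-2$ incorrect pairs regardless of $i$. There are $p-2$ such values of $i$, so $d^{\mathcal{I}_{P}}(\Gtrue^p,\Gguess^p)=(p-2)^2=p^2-4p+4$, and the limit follows since $(p^2-4p+4)/(p(p-1))\to 1$.

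The main obstacle I anticipate is the adjustment-set characterization in $\Gtrue^p$: arguing uniqueness of the valid set for causal pairs and the complete failure of adjustment for anti-causal pairs via the direct edge, together with careful bookkeeping of the boundary indices $i\in\{1,2\}$ and $j=i-1$ so the final count is not off by a linear term. Everything else reduces to a direct evaluation of $\mathcal{I}_P$ and $\mathcal{V}_{\text{adj}}$.
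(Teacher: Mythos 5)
Your proof is correct and follows essentially the same route as the paper's: evaluate $\mathcal{I}_P$ on the chain, characterize the valid adjustment sets in the complete DAG (the unique set $\{V_1,\dots,V_{i-1}\}$ for causal pairs, none for anti-causal pairs), and count, arriving at $(p-2)^2$; you merely count mistakes directly where the paper counts the $3p-4$ correct formulas and subtracts, and you spell out the uniqueness/non-existence arguments that the paper only asserts. One cosmetic slip: with the paper's convention that a node is its own descendant, $\mathrm{NonDe}(V_i,\Gtrue^p)=\{V_1,\dots,V_{i-1}\}$ rather than $\{V_1,\dots,V_i\}$, which is immaterial here since only pairs with $T\neq Y$ are considered.
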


Considering DAGs as distant that respect each others causal orders may be unintuitive.
We therefore propose an alternative adjustment strategy and distance.

\begin{Definition}[Ancestor Adjustment Strategy and Distance]
\label{definition: ancestral distance}
Given two distinct nodes $T$ and $Y$ in a DAG $\g$, let $\mathbf{A}_T=\mathrm{An}(T,\g) \setminus \{T\}$ and $\mathbf{D}_T=\de(T,\g)$. We define the \emph{ancestor adjustment strategy} as
\begin{align*}
        \mathcal{I}_{A}(\g,T,Y)= \begin{cases}
            \int f(y\mid t , \mathbf{a}_T) f(\mathbf{a}_T) \,\mathrm{d}\mathbf{a}_T & \text{ if } Y \in \mathbf{D}_T, \\ %
            f(y) & \text{ else,}
        \end{cases}%
    \end{align*}
which is a sound and complete identification strategy per \cref{prop: ancestral adjustment}.
We call the corresponding distance $d^{\mathcal{I}_A}$ the \emph{ancestor adjustment distance (Ancestor-AID)}.
\end{Definition}

\begin{figure}[t!]
\centering
   \begin{tikzpicture}[scale=1]
      \node[state] (v1) at (0,0) {$V_1$};
      \node[state] (v2) at (1.5,0) {$V_2$};
      \node[state] (v3) at (3,0) {$V_3$};
      \node[state] (v4) at (4.5,0) {$\dots$};
      \node[state] (v5) at (6,0) {$V_{p}$};
      \path (v1) edge[directed] (v2)
            (v2) edge[directed] (v3)
            (v3) edge[directed] (v4)
            (v4) edge[directed] (v5)
            
            (v1) edge[directed, bend left=20, dashed] (v3)
            (v1) edge[directed, bend left=20, dashed] (v4)
            (v1) edge[directed, bend left=20, dashed] (v5)
            (v2) edge[directed, bend right=20, dashed] (v4)
            (v2) edge[directed, bend right=20, dashed] (v5)
            (v3) edge[directed, bend right=20, dashed] (v5);
    \end{tikzpicture}
\vspace{-.2cm}
\caption{Fully connected and chain DAG in \cref{example: causal order}.}
    \label{figure: parental adjustment example}
\end{figure}

\begin{Proposition}[Ancestor-AID Reflects Causal Order]
    The ancestor adjustment strategy $\mathcal{I}_{A}$ is sound and complete for DAGs
    and so the corresponding ancestor adjustment distance $d^{\mathcal{I}_A}$ is the $\mathcal{I}_A$-specific identification distance.
    Further, for any two DAGs $\Gtrue$ and $\Gguess$ with the same node set, %
        $d^{\mathcal{I}_{A}}(\Gtrue,\Gguess) = 0$
    if and only if $\Gguess$ respects the causal orders of $\Gtrue$.%
    \label{prop: ancestral adjustment}
\end{Proposition}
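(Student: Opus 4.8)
The plan is to prove the two assertions in turn. For the first—that $\mathcal{I}_A$ is sound and complete in the sense of \cref{definition: identification strategy}—condition (a) is vacuous: every single-node effect in a DAG is identifiable and $\mathcal{I}_A$ never returns $\none$. For condition (b) I would handle the two branches separately. When $Y\in\mathbf{D}_T=\de(T,\g)$, I apply \cref{lemma: non-descendants} with $\mathbf{Z}=\mathbf{A}_T=\an(T,\g)\setminus\{T\}$: here $\pa(T,\g)\subseteq\mathbf{A}_T\subseteq\mathrm{NonDe}(T,\g)$, since proper ancestors of $T$ are non-descendants of $T$ by acyclicity, and $Y\notin\mathbf{A}_T$, since $Y$ is a proper descendant of $T$ and cannot simultaneously be a proper ancestor; hence the adjustment formula is valid. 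In the else branch $Y\notin\de(T,\g)$, so $Y$ is a non-descendant of $T$, the effect is null, and $f(y)$ is correct, matching the non-descent principle of \cref{example: parental adjustment strategy}.

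The crucial preparatory step for the second assertion is to restate ``$\Gguess$ respects the causal orders of $\Gtrue$'' in terms of ancestor sets. Observing that the proper-ancestor relation of a DAG is its minimal causal order, every causal order of $\Gguess$ is a causal order of $\Gtrue$ exactly when each edge of $\Gtrue$ is a proper-ancestor relation in $\Gguess$; transitivity then upgrades this to $\an(V,\Gtrue)\subseteq\an(V,\Gguess)$ for every $V$, equivalently $\de(T,\Gtrue)\subseteq\de(T,\Gguess)$ for every $T$. I would use this descendant-set inclusion throughout.

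The equivalence then follows by inspecting the verifier $\mathcal{V}_{\text{adj}}$ of \cref{algorithm: DAG adjustment verifier} pair by pair. Assuming the inclusion, fix $(T,Y)$: if $Y\notin\de(T,\Gguess)$, then $\mathcal{I}_A$ returns $f(y)$ and the inclusion gives $Y\notin\de(T,\Gtrue)$, so the non-descent check passes; if $Y\in\de(T,\Gguess)$, then $\mathcal{I}_A$ adjusts for $\mathbf{A}_T^{\Gguess}=\an(T,\Gguess)\setminus\{T\}$, which \cref{lemma: non-descendants} certifies as valid in $\Gtrue$ once $\pa(T,\Gtrue)\subseteq\mathbf{A}_T^{\Gguess}\subseteq\mathrm{NonDe}(T,\Gtrue)$ and $Y\notin\mathbf{A}_T^{\Gguess}$ are established. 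For the converse I use the contrapositive: if the inclusion fails, some pair has $Y\in\de(T,\Gtrue)$ but $Y\notin\de(T,\Gguess)$, so $\mathcal{I}_A(\Gguess,T,Y)=f(y)$ is rejected by $\mathcal{V}_{\text{adj}}$ because $Y\notin\mathrm{NonDe}(T,\Gtrue)$, giving $d^{\mathcal{I}_A}(\Gtrue,\Gguess)\geq 1$.

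The step I expect to be the main obstacle is the inclusion $\mathbf{A}_T^{\Gguess}\subseteq\mathrm{NonDe}(T,\Gtrue)$ in the forward direction, whose membership condition and constraint live in different graphs. I would settle it by contradiction: a node $W$ that is a proper ancestor of $T$ in $\Gguess$ but a descendant of $T$ in $\Gtrue$ would, through $\an(W,\Gtrue)\subseteq\an(W,\Gguess)$, also make $T$ a proper ancestor of $W$ in $\Gguess$, creating a directed cycle and contradicting that $\Gguess$ is acyclic. The remaining containments $\pa(T,\Gtrue)\subseteq\mathbf{A}_T^{\Gguess}$ and $Y\notin\mathbf{A}_T^{\Gguess}$ are then direct consequences of the inclusion and of the acyclicity of $\Gguess$.
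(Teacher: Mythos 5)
Your proof is correct and takes essentially the same route as the paper's: soundness via \cref{lemma: non-descendants}, the forward direction by translating ``respects the causal orders'' into the inclusions $\an(V,\Gtrue)\subseteq\an(V,\Gguess)$ (equivalently $\de(T,\Gtrue)\subseteq\de(T,\Gguess)$) and re-applying \cref{lemma: non-descendants} in $\Gtrue$, and the converse by exhibiting a pair with $Y\in\de(T,\Gtrue)\setminus\de(T,\Gguess)$ whose formula $f(y)$ the verifier rejects. The only differences are cosmetic: you explicitly justify the order-to-inclusion equivalence (which the paper asserts inline) and establish $\mathbf{A}_T^{\Gguess}\subseteq\mathrm{NonDe}(T,\Gtrue)$ by contradiction, where the paper uses the direct chain $\an(T,\Gguess)\setminus\{T\}\subseteq\mathrm{NonDe}(T,\Gguess)\subseteq\mathrm{NonDe}(T,\Gtrue)$.
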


Due to \cref{prop: ancestral adjustment}, the Ancestor-AID is preferable to the Parent-AID for evaluating the causal order of a learned graph. It can be used both as a replacement and to complement the Parent-AID.

The ancestor adjustment strategy is local, that is, for any pair $(T,Y)$ the adjustment set only depends on $T$ but not $Y$.
Adjusting for the ancestors has some advocates \citep{rubin2008design} and is at least as statistically efficient as parent adjustment \citep{henckel2019graphical}. 
A practitioner more confident in the ability of causal discovery algorithms to learn causal orders rather than all specific edges and exact parent sets, may prefer the ancestor adjustment strategy over the parent adjustment strategy to infer causal effects from a learned graph.

\subsection{Oset Adjustment Distance}\label{sec:osetaid}

In practice, identifying formulas are a tool to estimate a causal effect of interest.
Different identifying formulas correspond to different estimators for this effect.
For example, in linear models we can estimate the average treatment effect with an ordinary least squares regression of $Y$ on $T$ and $\mathbf{Z}$;
this estimator is consistent for any valid adjustment set $\mathbf{Z}$.
Other properties of the estimator, such as its asymptotic variance, however, depend on the adjustment set.

We can use the causal graph to decide which valid adjustment sets result in statistically efficient estimators;
importantly, for a large class of estimators, the valid adjustment set $\pa(T,\g)$ is close to the least efficient among all valid adjustment sets
\citep{rotnitzky2020efficient,witte2020efficient,henckel2019graphical}. 
The parents are therefore an inefficient adjustment set
and parent adjustment is perhaps not good practice.
As an alternative \citet{henckel2019graphical} have proposed the optimal adjustment set. 

\begin{Definition}[Optimal Adjustment Set (Oset)]
    Let $T$ and $Y$ be two distinct nodes in a DAG $\g$. Then the \emph{optimal adjustment set (Oset)} $\opt{\g}$ is defined as
    \begin{equation*}
        \opt{\g} = \pa(\cn{\g},\g) \setminus \forb(T,Y,\g)
    \end{equation*}
where $\cn{\g}$ are the causal and $\forb(T,Y,\g)$ the forbidden nodes as defined in \cref{supp:preliminaries}.
\end{Definition}

If $Y \in \de(T,\g)$, then $\opt{\g}$ is a valid adjustment set whenever a valid adjustment set exists.
For a large class of estimators, the Oset is the most statistically efficient among all valid adjustment sets.
We use this result to propose another adjustment-based identification distance.

\begin{Definition}[Oset Adjustment Strategy and Distance]
\label{definition: optimal distance}
Given two distinct nodes $T$ and $Y$ in a DAG $\g$, let $\mathbf{O}_T=\opt{\g}$ and $\mathbf{D}_T=\de(T,\g)$. We define the \emph{Oset adjustment strategy} as
\begin{align*}
        \mathcal{I}_{O}(\g,T,Y)= \begin{cases}
            \int f(y\mid t , \mathbf{o}_T) f(\mathbf{o}_T) \,\mathrm{d}\mathbf{o}_T & \text{ if } Y \in \mathbf{D}_T, \\ %
            f(y) & \text{ else,}
        \end{cases}
    \end{align*}
which is a sound and complete identification strategy per \cref{prop: optimal adjustment DAGs}.
We call the corresponding 
distance $d^{\mathcal{I}_O}$ the \emph{Oset adjustment distance (Oset-AID)}.
\end{Definition}

\begin{Proposition}[Oset-AID is the $\mathcal{I}_O$-Specific Distance]
    \label{prop: optimal adjustment DAGs}
    The Oset adjustment strategy $\mathcal{I}_{O}$ is sound and complete for DAGs %
    and so the corresponding Oset adjustment distance $d^{\mathcal{I}_{O}}$ is the $\mathcal{I}_O$-specific identification distance.%
\end{Proposition}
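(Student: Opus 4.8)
The plan is to verify that $\mathcal{I}_O$ satisfies the two conditions of \cref{definition: identification strategy}; once soundness and completeness are established, the conclusion that $d^{\mathcal{I}_O}$ equals the $\mathcal{I}_O$-specific identification distance is immediate from \cref{definition: strategy specific distance}, whose only hypothesis is that $\mathcal{I}_O$ be a sound and complete identification strategy. Condition~(a) is trivial in the DAG setting: since every causal effect is identifiable in a DAG (cf.\@ \cref{example: parental adjustment strategy}) and $\mathcal{I}_O$ always returns one of two adjustment-type formulas and never $\none$, both sides of the equivalence ``$\mathcal{I}_O(\g,T,Y)\neq\none$ iff $f(y\mid\doop{t})$ is identifiable'' are always true, so it holds. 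It therefore remains to prove condition~(b), namely that both returned formulas are correct for every $f$ compatible with $\g$, which I would do by splitting along the two branches of $\mathcal{I}_O$.

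For the branch $Y\notin\mathbf{D}_T=\de(T,\g)$, the strategy returns $f(y)$, and I would argue by non-descent exactly as for the parent strategy: because $T\neq Y$ and $Y$ is not a descendant of $T$, intervening on $T$ leaves the marginal law of $Y$ unchanged, so the truncated factorization yields $f(y\mid\doop{t})=f(y)$. This reuses the same non-descent principle already invoked for $\mathcal{I}_P$ and needs no new argument.

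For the branch $Y\in\mathbf{D}_T$, the strategy returns the adjustment formula with adjustment set $\opt{\g}$, and the task is to show that $\opt{\g}$ is a valid adjustment set for $(T,Y)$ in $\g$. The cited validity property of the Oset \citep{henckel2019graphical} guarantees that, whenever $Y\in\de(T,\g)$, the set $\opt{\g}$ is valid \emph{provided some valid adjustment set exists}; thus the only gap to close, and the main obstacle, is establishing that such a set exists. I would supply it via the parents: since $Y\in\de(T,\g)$ and $Y\neq T$, there is a directed path $T\to\cdots\to Y$, so acyclicity forbids an edge $Y\to T$ and hence $Y\notin\pa(T,\g)$; by \cref{example: parental adjustment strategy} the set $\pa(T,\g)$ is then a valid adjustment set, witnessing existence. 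Invoking the Oset property completes the soundness check, and together with condition~(a) this proves that $\mathcal{I}_O$ is sound and complete and therefore that $d^{\mathcal{I}_O}$ is the $\mathcal{I}_O$-specific identification distance.
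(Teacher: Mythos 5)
Your proposal is correct and follows essentially the same route as the paper, whose proof is a one-line appeal to the results of \citet{henckel2019graphical} for the Oset's validity together with the fact that the causal effect on a non-descendant is the observational density. The only added value in your write-up is that you make explicit what the paper leaves inside the citation: the existence of \emph{some} valid adjustment set (needed to trigger the Oset's validity guarantee) is witnessed by $\pa(T,\g)$, which works because $Y\in\de(T,\g)$ and acyclicity force $Y\notin\pa(T,\g)$.
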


Oset adjustment has seen some early adoption by practitioners \citep{steiger2021causal}.
Given the Oset's efficiency guarantee, the Oset-AID assumes that a practitioner takes efficiency into account when selecting valid adjustment sets.
The Oset adjustment strategy
is non-local as the Oset
depends on both $T$ and $Y$.
As a result, the Oset adjustment distance is computationally expensive with polynomial complexity of one order higher than that of the Parent- and Ancestor-AID which use local adjustment strategies (cf.\@ \cref{section: implementation}).
Further, we do not have a graphical characterization of all cases where the Oset adjustment distance is zero
(cf.\@ \cref{example: zero optimal distance} in \cref{section: additional examples}).

\paragraph{Joint interventions.}
Another advantage of the Oset adjustment strategy over the parent or ancestor adjustment strategies is that
the Oset---in contrast to the parents or ancestors---is a valid adjustment set whenever a valid adjustment set exists,
even if we consider joint interventions.
As such, the Oset adjustment strategy enables generalizations of the Oset adjustment distance to settings where $\mathbf{S}$
may contain multi-node interventions.
However, adjustment is not sound and complete for effects of joint interventions
and there may exist identifiable effects that are not identifiable via adjustment.
Therefore, this generalization is strictly speaking not a strategy-specific identification distance.
Identifiable joint intervention effects that cannot be identified via adjustment are characterized in Corollary 27 of \citet{perkovic2018complete} and can be correctly identified by other strategies \citep{nandy2017estimating,huang2006identifiability}.
However, there is no verifier for these alternative strategies.
More research into verification is required to develop a proper strategy-specific distance that considers joint interventions.

\section{CPDAG Distances}\label{sec:cpdagdistances}

In general and without strong assumptions, the true causal DAG cannot be identified or
learned, even from infinite data \citep[e.g.\@][]{peters2014causal}. 
Instead,
many causal discovery algorithms target the corresponding Markov equivalence class and aim to learn its CPDAG \citep[e.g.][]{Chickering02}.
To evaluate common causal discovery algorithms,
we thus also need easy to compute and interpret distances
between CPDAGs.
Our strategy-specific distance framework
provides a recipe on how to develop such distances:
a) devise a sound and complete identification strategy and b) devise a corresponding verifier.
We follow this recipe to propose new and computationally attractive distances for CPDAGs based on the parent, ancestor, and Oset adjustment strategies.
The distances operate directly on the CPDAGs, assess the compatibility of identification formulas between the two graphs, and return a scalar distance.
This improves upon previous approaches that iterate over an exponentially large number of Markov equivalent DAGs
to calculate a multi-set of distances, which is computationally prohibitive and difficult to interpret.
We also discuss potential distances across graph types and their pitfalls.

\subsection{CPDAG to CPDAG Distances}

In contrast to DAGs, not all causal effects are identifiable given a CPDAG \citep{meek1995causal}.
For example, we cannot identify the effect of $A$ on $B$ given the CPDAG $A\arrow B$. 
To extend the SID to CPDAGs, \citet{peters2015structural} combine two approaches. 
The first, is to consider all DAGs in the Markov equivalence class of the CPDAG $\Gguess$ and compute a multi-set of distances using the SID for DAGs.
The second, is to ignore all $(T,Y)$ node tuples for which the effect of $T$ on $Y$ is not identifiable in $\Gtrue$.
In principle, the first approach could also be applied to $\Gtrue$ and the second to $\Gguess$.
Either way, both approaches of extending the SID to CPDAGs have drawbacks.

The first approach yields a difficult to interpret multi-set of values and is in general computationally infeasible beyond graphs with very few nodes and small Markov equivalence class. For example,
let
$\Gtrue$ be the empty CPDAG (which is also a DAG)
and
$\Gguess$ the fully connected CPDAG. 
The SID between the empty and the fully connected CPDAG is a multi-set of $0$s but to compute it one needs to iterate over all fully connected DAGs.
Arguably, the two CPDAGs are also maximally different when used to infer causal effects, since all effects are identifiable in the empty CPDAG but no effect is identifiable in the fully connected CPDAG.
The second approach discards valuable information. For example, if no effect is identifiable in $\Gtrue$, then the SID between $\Gtrue$ and any other CPDAG $\Gguess$ is a multi-set of $0$s.
The SID between CPDAGs as proposed by \citet{peters2015structural} inherits these drawbacks\footnote{In fact, the CRAN SID package v1.1 requires the true graph to be a DAG and does not implement the distance between two CPDAGs outlined by \citet{peters2015structural}.}
and in contrast to the SID between DAGs, is not an easy-to-interpret identification distance within our framework.
 
Our framework offers an alternative. Indeed, following our framework of identification distances there is a canonical solution to developing distances between CPDAGs:
have identification strategies return $\none$ in case an effect is non-identifiable in $\Gguess$ and
treat non-identifiability as a claim that we can verify in $\Gtrue$ just like we verify identifying formulas returned by the identification strategy.
Identification distances always return a single scalar value and – by sidestepping iteration over exponentially large Markov equivalence classes and using efficient verification algorithms on CPDAGs – they are computationally tractable even for large CPDAGs.
Furthermore, as identification distances they are interpretable since they capture how often a practitioner would wrongly infer a causal effect when using the learned instead of the true CPDAG.
Importantly, we do not presuppose that a practitioner would (randomly) pick a DAG within the Markov equivalence class of the learned CPDAG and just use that DAG for causal inference;
instead, we posit a practitioner would only infer those effects that are identifiable in the learned CPDAG and
would rather look into learning a more refined graph than attempting to reason about non-identifiable effects.

As a result, however, there is no clear relationship of the strategy-specific identification distances between two DAGs, to the distances between the two corresponding CPDAGs. For example, the DAGs $V_1 \rightarrow \dots\rightarrow V_p$ and $V_1 \leftarrow \cdots \leftarrow V_p$ have the same CPDAG and yet the adjustment distance between the two DAGs is maximal (irrespective of the strategy).
In contrast to DAGs (cf.\@ \cref{section: strategic distance properties}), adding edges in CPDAGs may render some causal effects non-identifiable and no analogous statement to \cref{prop: supergraph distance} holds for identification distances between CPDAGs; indeed, since all effects are identifiable in the empty CPDAG but none in the fully connected CPDAG, their identification distance is maximal.
Further, the distance between any two CPDAGs in which no effect is identifiable is zero. For example, if $\Gtrue$ and $\Gguess$ are both CPDAGs consisting of a single undirected path connecting all nodes, then their distance is zero even though they may not have a single edge in common.
While this behavior is less extreme than for the SID where the distance between
a true CPDAG with no identifiable effects and
any learned CPDAG
is zero, it may nonetheless seem unintutive.
However, following the interpretation of identification distances, the distance of zero between any two CPDAGs in which no effect is identifiable is reasonable since a practitioner given such a $\Gguess$ would conclude that no effect is identifiable as is indeed the case in $\Gtrue$.

In CPDAGs, non-identifiability is characterized 
by a graphical condition called amenability by \citet{perkovic2018complete,perkovic2020identifying},
which for single-node interventions is as follows.

\begin{Proposition}[Amenability]
Consider distinct nodes $T$ and $Y$ in a CPDAG $\g$. The interventional density $f(y\mid \doop{t})$ is identifiable if and only if there exists no possibly directed path from $T$ to $Y$ that starts with an undirected edge. If this holds, we say that $(\g,T,Y)$ is amenable.
\end{Proposition}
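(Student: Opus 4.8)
The plan is to prove the Amenability proposition by connecting it to the known sound and complete adjustment theory for CPDAGs \citep{perkovic2018complete}, rather than reproving identifiability from scratch. The statement claims that for distinct nodes $T$ and $Y$ in a CPDAG $\g$, the effect $f(y\mid\doop{t})$ is identifiable if and only if there is no possibly directed path from $T$ to $Y$ beginning with an undirected edge. I would first recall the definition of a possibly directed path and the notion of amenability as stated by \citet{perkovic2018complete}: a CPDAG is amenable relative to $(T,Y)$ precisely when every possibly directed path from $T$ to $Y$ starts with a directed edge out of $T$. Thus the graphical condition in the proposition is, by definition, amenability.

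For the \emph{if} direction (no such path implies identifiability), the key step is to invoke the completeness of the generalized adjustment criterion: \citet{perkovic2018complete} show that when $(\g,T,Y)$ is amenable, a valid adjustment set exists and the effect is identifiable via adjustment. Since adjustment-identifiability implies identifiability, this direction follows immediately. For the \emph{only if} direction (identifiability implies no such path), I would argue the contrapositive: if there exists a possibly directed path from $T$ to $Y$ starting with an undirected edge $T \arrow W$, then the Markov equivalence class contains at least two DAGs—one orienting the edge as $T \to W$ and another as $W \to T$—that disagree on the interventional density $f(y\mid\doop{t})$. This disagreement shows no single identifying formula can hold across all DAGs compatible with $\g$, so the effect is not identifiable. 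The existence of these two orientations follows from the defining properties of CPDAGs (every undirected edge is orientable in both directions within the equivalence class subject to Meek's rules), which is exactly the machinery \citet{meek1995causal} and \citet{perkovic2018complete} establish.

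The main obstacle is the \emph{only if} direction: one must carefully produce two compatible DAGs whose interventional densities genuinely differ, not merely whose formulas look different. Constructing these DAGs requires respecting the CPDAG's background structure (that orienting one undirected edge may force further orientations via Meek's rules) and then exhibiting a specific parametrization in which the two resulting truncated factorizations yield distinct densities $f(y\mid\doop{t})$. Because this is already the content of the non-amenability half of the characterization by \citet{perkovic2018complete,perkovic2020identifying}, the cleanest route is to cite their result directly rather than rebuild the orientation argument. I therefore expect the proof to reduce almost entirely to identifying the proposition's condition with amenability and then citing the sound-and-complete adjustment-criterion results, with the only genuine work being to confirm that amenability as defined there coincides verbatim with ``no possibly directed path from $T$ to $Y$ starting with an undirected edge'' in the single-node intervention setting.
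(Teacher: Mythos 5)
Your overall route is the same as the paper's: the paper does not prove this proposition itself, but states it as the known characterization of identifiability in CPDAGs due to \citet{perkovic2018complete,perkovic2020identifying}, restricted to single-node interventions. Your plan to identify the stated graphical condition with amenability and then cite that characterization is exactly the paper's treatment, and your observation that the condition coincides with amenability is correct, since a possibly directed path out of $T$ in a CPDAG can only begin with $T \to W$ or $T \arrow W$.

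However, your sketch of the \emph{if} direction contains a genuine error: amenability of $(\g,T,Y)$ does \emph{not} imply that a valid adjustment set exists, and \citet{perkovic2018complete} do not show this. Take the CPDAG $\g$ given by the v-structure $W \to T \gets Y$ (with $W$ and $Y$ non-adjacent). There is no possibly directed path from $T$ to $Y$ at all, so $(\g,T,Y)$ is vacuously amenable and the effect is identifiable, namely $f(y\mid\doop{t}) = f(y)$ because $Y \notin \possde(T,\g)$. Yet no valid adjustment set relative to $(T,Y)$ exists: the proper non-causal path $T \gets Y$ has no intermediate node and cannot be blocked by any $\mathbf{Z}$ disjoint from $\{T,Y\}$. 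Identifiability and identifiability via adjustment are distinct notions, which is precisely why the paper's identification strategies $\mathcal{I}_P$, $\mathcal{I}_A$, $\mathcal{I}_O$ all carry a separate ``$f(y)$'' branch and why the verifier in \cref{algorithm: DAG adjustment verifier} has a non-descent check in addition to the adjustment check. The repair is a case split: if $Y \notin \possde(T,\g)$, then $Y$ is a non-descendant of $T$ in every DAG of the equivalence class, so $f(y)$ is an identifying formula; if $Y \in \possde(T,\g)$ and $(\g,T,Y)$ is amenable, then a valid adjustment set does exist (by \citet{perkovic2018complete}, or by the paper's own \cref{proposition: ancestral CPDAGs}, noting that $Y \notin \an(T,\g)$ in this case since CPDAGs admit no possibly directed cycles through a directed edge). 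Your \emph{only if} sketch is right in spirit but, as you acknowledge, exhibiting two Markov equivalent DAGs whose interventional densities genuinely differ is the hard part; deferring it to the citation is appropriate, with the caveat that the identifiability characterization (as opposed to the adjustment characterization) is the content of \citet{perkovic2020identifying} rather than of \citet{perkovic2018complete}.
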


Equipped with graphical conditions for non-identifiability and validity of adjustment sets in CPDAGs,
we can apply the adjustment verifier in \cref{algorithm: DAG adjustment verifier} to CPDAGs.
Within our framework, any sound and complete adjustment-based identification strategy together with this verifier defines a strategy-specific identification distance for CPDAGs.
To extend the above adjustment distances to CPDAGs,
we need to extend the identification strategies by adding an amenability check
such that
$\none$ is returned for $(\g,T,Y)$ that are not amenable
and else the return values of the identification strategies $\mathcal{I}_P$, $\mathcal{I}_A$, and $\mathcal{I}_O$ are the same as for DAGs.

The parent adjustment strategy is sound and complete for CPDAGs
\cite[Corollary 4.2]{maathuis2013generalized},
as are the Oset \citep[Theorem 3]{henckel2019graphical}
and ancestor adjustment strategies (\cref{proposition: ancestral CPDAGs}, \cref{supp:proofs}).
Thus, we generalize
the Parent-AID, the Ancestor-AID, and the Oset-AID
to distances between CPDAGs.

\subsection{DAG, CPDAG, and Order Distances}

\paragraph{DAG to CPDAG distance.}
Given a suitable identification strategy and verifier,
we can define a strategy-specific distance between graphs of different type.
For example, since the presented adjustment strategies and verifier apply to both DAGs and CPDAGs,
our distances accept any combination of DAG and CPDAG as $\Gtrue$ and $\Gguess$.
Yet, such a distance may be unintuitive:
The distance between a DAG $\Gtrue$
and the CPDAG $\Gguess$ that encodes the Markov equivalence class of $\Gtrue$
is generally non-zero
as some effects are non-identifiable in the CPDAG;
the distance to this correct CPDAG may in fact be further than to another CPDAG that encodes a Markov equivalence class that does not contain $\Gtrue$ (cf.\@ \cref{example: DAG to CPDAG}, \cref{section: additional examples}).
A cross-graph-type distance may still be useful when comparing 
an algorithm that can learn a DAG, such as LiNGAM \citep{shimizu2006linear}, to an algorithm that cannot, such as GES \citep{Chickering02}.
Our implementation therefore accepts DAG-to-CPDAG and CPDAG-to-DAG comparisons.

\paragraph{Transformations to compare alike.}
An alternative is
to transform one graph type to the other and then apply a distance between DAGs or between CPDAGs.
To obtain a proxy for the distance between a DAG $\Gtrue$ and a CPDAG $\Gguess$,
for example,
one could pick a DAG corresponding to $\Gguess$ and compare that to $\Gtrue$;
common approaches are
a) to sample a DAG in the Markov equivalence class of $\Gguess$ or
b) to orient undirected edges in $\Gguess$ for which a corresponding edge in DAG $\Gtrue$ exists correctly and the remaining undirected edges randomly while ensuring acyclicity.
Both approaches are ad-hoc, non-deterministic, and ignore causal information in the CPDAG $\Gguess$, such as
claims about which effects are not identifiable. 
Our CPDAG distance enables a principled alternative: transform the DAG $\Gtrue$ to its corresponding CPDAG and then compare the CPDAG corresponding to the true DAG to the learned CPDAG.
This approach is natural, when 
the test data is simulated according to a DAG $\Gtrue$
    and we compare the performance of two CPDAG learning algorithms, such as GES and PCALG\footnote{In finite samples the output of PCALG may not be a CPDAG and for these graphs no identification strategy exists. As a result, it may be necessary to resolve PCALG conflicts or to use non-causal distances when evaluating the performance of PCALG \citep{wahl2024metrics}.} \citep{Chickering02,spirtes2000causation}.

\paragraph{DAG to order distance.} 
Given a strict partial order $\prec$ on nodes $\mathbf{V}$ we can define the identification strategy 
\begin{align*}
        \mathcal{I}_\text{ord}(\prec,T,Y)= \begin{cases}
            \int f(y\mid t , \mathbf{b}_T) f(\mathbf{b}_T) \,\mathrm{d}\mathbf{b}_T & \text{ if } Y \in \mathbf{A}_T, \\ %
            f(y) & \text{ else,}
        \end{cases}%
    \end{align*}
where $\mathbf{B}_T= \mathrm{pre}_{\prec}(T)$ and $\mathbf{A}_T= \mathrm{post}_{\prec}(T)$. By \cref{lemma: non-descendants} this strategy is sound and complete and we can verify the returned identification formulas in a DAG $\Gtrue$ using $\mathcal{V}_{\text{adj}}$. As such we obtain a strategy-specific distance $d^{\mathcal{I}_\text{ord}}(\Gtrue,\prec_{\mathrm{guess}})=d^{\mathcal{I}_A}(\Gtrue, \g_{\prec_\text{guess}})$ between DAGs and strict partial orders.
The strategy-specific distance $d^{\mathcal{I}_\text{ord}}(\Gtrue,\prec_{\mathrm{guess}})$ counts the 
number of identification formulas derived from the partial order $\prec_{\mathrm{guess}}$ that would be wrong if the true graph were $\Gtrue$. It allows for direct comparison between a learned causal order and the true DAG. It offers an alternative to existing approaches, such as rank correlations or order distances that lower bound the SHD \citep{rolland2022score} and that may be difficult to interpret or computationally expensive because in general the causal order of a graph is neither unique nor a total order.

\section{Implementation}
\label{section: implementation}

\begin{figure*}
{\centering
\includegraphics[draft=false]{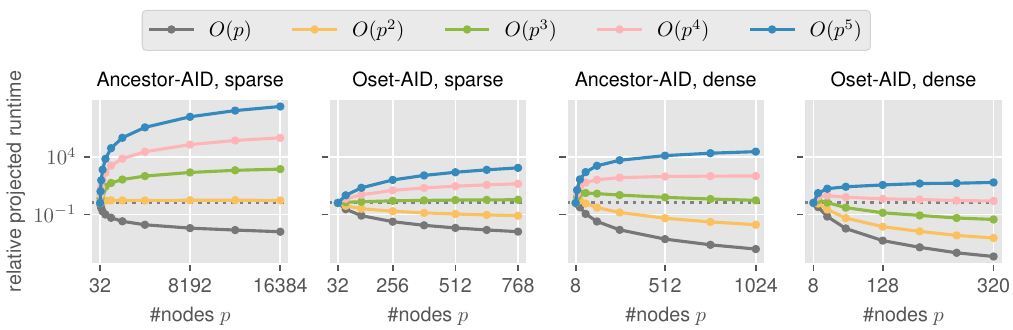}
}
\definecolor{Osecond}{rgb}{0.984314,0.756863,0.368627}%
\definecolor{Othird}{rgb}{0.556863,0.729412,0.258824}%
\definecolor{Ofourth}{rgb}{1.000000,0.709804,0.721569}%
\caption{Empirical results on the algorithmic time complexity of calculating the Ancestor-AID $d^{\mathcal{I}_A}$ and the Oset-AID $d^{\mathcal{I}_O}$ between random sparse
and dense
graphs.
    We project the runtime under the different time complexities based on the smallest graphs in each panel
    and visualize the projected runtime as a fraction of the observed empirical runtime;
    if the relative projected runtime increases/decreases with increasing number of nodes,
    the considered time complexity suggests a faster/slower increase of runtime than empirically observed.
The empirical analysis suggests that our implementation of the Ancestor-AID achieves the time complexity of {\colorbox{Osecond}{$O(p^2)$}} for sparse and {\colorbox{Othird}{$O(p^3)$}} for dense graphs,
and that the implementation of the Oset-AID
achieves the time complexity of {\colorbox{Othird}{$O(p^3)$}} for sparse and {\colorbox{Ofourth}{$O(p^4)$}} for dense graphs.
See \cref{app:complexityexperiment} for details.
}
\label{fig:runtime}
\end{figure*}

We sketch our implementation of the distances for CPDAG inputs with $p$ nodes and $m$ edges;
see
\cref{section: appendix implementation}
for details.
First, consider
a single tuple $(T,Y)$.
For the identification strategy, we check whether $(\Gguess,T,Y)$ is amenable
and if so
compute
a) $\pa(T,\Gguess)$ for the Parent-AID,
b)
$\de(T,\Gguess)$ and $\an(T,\Gguess)$ for the Ancestor-AID,
or  c)
$\de(T,\Gguess)$ and $\opt{\Gguess}$\footnote{%
In \cref{lemma: alternative optimal}, \cref{section: additional examples},
we prove a characterization of the Oset that, given amenability,
simplifies its computation.
}
for the Oset-AID.
For the verifier, we check
whether a) $(\Gtrue,T,Y)$ is amenable, b) $Y \in \mathrm{NonDe}(T,\Gtrue)$, or c) the proposed adjustment set 
is a valid adjustment set for $(T,Y)$ in $\Gtrue$.
Algorithms exist to perform each of these steps in $O(p+m)$ time
\citep{vanconstructing},
so we can compute the distances in $O(p^2(p+m))$ time
by iterating over all tuples.

We
improve this complexity
for the Parent- and Ancestor-AID
by sharing computations between tuples instead of evaluating identification strategy and verifier for each of the $p(p-1)$ tuples separately.
For this we use
reachability algorithms, which
are inspired by the Bayes-Ball algorithm \citep{geiger1989dseparation,shachter1998bayes}:
They start from a node set, walk along edges per fixed rules,
and return the set of all reached nodes \citep[cf.\@][Appendix C]{wienobst2022finding}.
Reachability algorithms find all nodes with a certain property that depends on the rules used.
For example, given $T$ and the rule to continue only along $\rightarrow$ edges, the search algorithm finds all nodes in $\de(T,\g)$ in $O(p+m)$ time.
There are reachability algorithms to compute $\de(T,\g)$, $\an(T,\g)$, and similar sets. 
We develop new walk-status-aware reachability algorithms
that, given a graph $\g$, treatment $T$, and candidate adjustment set $\mathbf{Z}$,
return a) all nodes such that $(\g,T,Y)$ is amenable,
or b) all nodes $Y$ such that $\mathbf{Z}$ is a valid adjustment set for $(T,Y)$ in $\g$
(\cref{algorithm: amenability,algorithm: VAS check}).

These reachability algorithms
enable our computationally efficient implementation. 
When using local adjustment strategies,
we can fix a $T$ and compute both the identification strategy
and verifier for all $Y$ via at most six reachability algorithms.
Selecting each node as $T$ once,
we can calculate the Parent- and Ancestor-AID in $O(p(p+m))$ time;
this amounts to $O(p^2)$, the optimum, for sparse graphs with $m\in O(p)$
and to $O(p^3)$ for dense graphs.
The asymptotic
runtime complexity of the Oset-AID remains $O(p^2(p+m))$
since $\opt{\g}$ depends on both $T$ and $Y$.

The original SID implementation has $O(p^4\log(p))$ runtime
for DAGs and exponential runtime for CPDAGs \citep{peters2015structural}.
Our implementation of the related Parent-AID
between either DAGs or CPDAGs
has runtime $O(p^3)$ for dense and $O(p^2)$ for sparse graphs.
To our knowledge, our distances are the first causal distances between CPDAGs with a polynomial runtime guarantee.

\section{Empirical Results}

We provide a simulation study quantifying the empirical runtime of our algorithms.
In an additional simulation study, we compare the three distances we propose in this paper across various pairs of graphs.

\subsection{Empirical Runtime Analysis}

\label{sec: empirical runtime}

We calculate
distances
with
our
\texttt{gadjid} 
package
version 0.1.0,
implemented in Rust
and using a graph memory layout purposefully designed for fast memory access in reachability algorithms.
We use the CRAN SID package v1.1 and run all experiments on a laptop
with 8 GB RAM and 4-core i5-8365U processor.
We draw DAGs with $p$ nodes, uniformly random total order of nodes,
and edges compatible with this order independently drawn with probability
$20/(p-1)$ for sparse graphs with $10p$ edges in expectation and
$0.3$ for dense graphs with $0.3p(p-1)/2$ edges in expectation.

To empirically validate the theoretical asymptotic runtime complexities,
we evaluate the Ancestor-AID and the Oset-AID on random DAGs.
For each graph size,
we record the runtime averaged over $5$ repetitions.
Based on the runtimes for the smallest graphs,
we project what runtimes we would expect for larger graphs under various time complexities.
\cref{fig:runtime} shows the results
and \cref{app:complexityexperiment} provides details.

Next, we draw $11$ pairs of random DAGs,
calculate a distance,
and if the median runtime is less than $60$ seconds, we increase the number of nodes by one and repeat;
we repeat until the median runtime exceeds $60$ seconds and obtain:
\begin{center} %
\begin{tabular}{llrr}
 \multicolumn{4}{l}{Maximum graph size feasible within 1 minute}\\
\multicolumn{2}{l}{Method}\hspace{2.9cm}                & sparse\hspace{.6cm}                                                                                 & dense                                                                               \\
\midrule
\multicolumn{2}{l}{Parent-AID}            & \texttt{\BudgetNodesParentsparse}\hspace{.6cm}                                                                                        & \texttt{\BudgetNodesParentdense}                 \\
\multicolumn{2}{l}{Ancestor-AID}          & \texttt{\BudgetNodesAncestorsparse}\hspace{.6cm}                                                                                      & \texttt{\BudgetNodesAncestordense}                                                            \\
\multicolumn{2}{l}{Oset-AID}              & \texttt{\BudgetNodesOsetsparse}\hspace{.6cm}                                                                                          & \texttt{\BudgetNodesOsetdense}                                                               \\
\multicolumn{2}{l}{SID}                 & \texttt{\BudgetNodesRSIDsparse}\hspace{.6cm}                                                                                          & \texttt{\BudgetNodesRSIDdense}                                                                \\
\end{tabular}
\end{center} %

\noindent Finally, we consider the graph sizes for which the average runtime of the SID first exceeded one minute, and the extremely sparse graphs from \citet{peters2015structural}; for $11$ random pairs of graphs of that size and sparsity,
we obtain the following average runtimes:

\begin{center} %
\begin{tabular}{lrrr}
\multicolumn{4}{l}{Average runtime}   \\
Method       & \begin{tabular}[c]{@{}l@{}}x-sparse\footnotemark\\ $p=1000$\end{tabular} & \begin{tabular}[c]{@{}l@{}}sparse\\ $p=\BudgetNodesRSIDsparse{}$\end{tabular} & \begin{tabular}[c]{@{}l@{}}dense\\ $p=\BudgetNodesRSIDdense{}$\end{tabular} \\
\midrule
Parent-AID   & \texttt{\ThousandMeanParent}           & \texttt{\RuntimeMeanParentsparse{}}     & \texttt{\RuntimeMeanParentdense{}}     \\
Ancestor-AID & \texttt{\ThousandMeanAncestor}         & \texttt{\RuntimeMeanAncestorsparse} & \texttt{\RuntimeMeanAncestordense} \\
Oset-AID     & \texttt{\ThousandMeanOset}             & \texttt{\RuntimeMeanOsetsparse}   & \texttt{\RuntimeMeanOsetdense\phantom{m}}  \\
SID &
\texttt{\char`\~1–2 h\phantom{m}} &
\texttt{\char`\~60 s\phantom{m}} &
\texttt{\char`\~60 s\phantom{m}}
\end{tabular}
\end{center} %

\footnotetext{%
We denote the sparse graphs with $0.75p$ expected edges considered
in \citet{peters2015structural} as extremely sparse (x-sparse);
for x-sparse $1000$-node random graphs,
\citet{peters2015structural} reported a runtime of almost 7000 s,
which on our hardware took \texttt{\char`\~}1 h %
(running $1$ instead of $11$ repetitions).
}

\subsection{Distance Comparison}
\label{sec: distance comparison}

To compare the distances and empirically demonstrate that they capture distinct information,
we draw random pairs of graphs and compute
the Parent-AID, Ancestor-AID, Oset-AID, and SHD between these pairs.
For the distances between $300$ pairs of $30$-node graphs
where $\Gtrue$ is a random dense graph (sampled as described above)
and $\Gguess$ is the graph obtained by removing one edge from $\Gtrue$ at random,
we obtain the following correlation matrix.

\begin{center}\small
\begin{tabular}{l|lll}
\toprule
 & Ancestor-AID & Oset-AID & Parent-AID \\
\midrule
Ancestor-AID & 1 & 0.7281 & 0.0886 \\
Oset-AID & 0.7281 & 1 & 0.2080 \\
Parent-AID & 0.0886 & 0.2080 & 1 \\
\bottomrule
\end{tabular}
\end{center}

Further, the average distances are Ancestor-AID: $2.0$, Oset-AID: $5.9$, and Parent-AID: $11.2$
(while the SHD between all these graph pairs is $1$). We provide a corresponding scatter plot between the distances in \cref{fig:dense-removal}, \cref{app:removal}.
The results highlight that the number of wrongly inferred causal effects if we delete an edge from the true DAG, depends on the choice of identification strategy. 

When benchmarking causal discovery algorithms, the distance should be chosen in line with
the downstream task the graph will eventually be used for.
If the task is to reason about the existence of direct cause-effect relationships, the SHD is a natural choice.
If the task is to infer causal effects, there are multiple options.
The Parent-, Ancestor-, and Oset-AID are three such options, each corresponding to different assumptions on the behavior of an idealized practitioner who will use $\Gguess$ to infer causal effects.
This simulation experiment and additional experiments in \cref{app:comparison} underline that the choice of distance is practically important when benchmarking causal discovery algorithms.

\section{Discussion}

Our framework gives a recipe
for developing distances for other graph types, such as maximal ancestral graphs that allow for hidden variables:
Find a sound and complete identification strategy and a corresponding verifier.
While the adjustment-based identification strategies we use for DAGs and CPDAGs are not sound and complete for settings with hidden variables, sound and complete alternatives exist \citep{huang2006identifiability,shpitser2006identification}.
Yet, there are no verifiers for these alternatives.
Therefore, advances on causal effect identification and in particular verification are needed before we can develop distances for
other graph types so as to aid the development of causal discovery under latent confounding.
Nonetheless, the framework for strategy-specific identification distances provides a handbook on how to develop such a distance as the necessary methodology for causal effect identification and verification becomes available.

\begin{acknowledgements} %
We thank Alexander G.\@ Reisach for valuable discussions and feedback on an earlier draft of the present manuscript.
We also thank the anonymous 
reviewers for constructive feedback that helped improve the presentation.
\end{acknowledgements}

\bibliography{references}

\newpage

\onecolumn

\title{\Title\\(Supplementary Material)}
\maketitle

\appendix

\section{Additional Preliminaries}\label{supp:preliminaries}

\paragraph{Simple graphs with directed and undirected edges.}
A simple graph $\mathcal{G}=(\mathbf{V},\mathbf{E})$ over nodes $\mathbf{V}=\{V_i\mid i\in[d]\}$
with edges $\mathbf{E}$
is a graph where there is at most one edge between any two nodes.
A graph is directed, if all edges are directed edges $\rightarrow$,
and partially directed, if all edges are directed edges $\rightarrow$ or undirected edges $\arrow$.
Two nodes are adjacent, if an edge connects them. In particular a node is adjacent to itself. 

\paragraph{Walks and paths.} A walk $w$ is a sequence of nodes $(T,...,Y)$ where each successive pair of nodes is adjacent. The nodes $T$ and $Y$ are called endpoint nodes on $w$.
A path $p$ is a sequence of distinct nodes $(T,...,Y)$ and is a special case of a walk.
A walk
$w$ is possibly directed from $T$ to $Y$ if no directed edge along the path is directed towards $T$ (a possibly directed walk is sometimes called possibly causal). 
A walk
$w$ is directed from $T$ to $Y$ if all edges along the path are directed and directed towards $Y$ (a directed walk is sometimes called causal); every directed walk is also a possibly directed walk.
We often consider walks that are not possibly directed as they contain at least one edge facing towards $T$,
for ease, we call such a walk $w$ non-causal.
A directed path from $T$ to $Y$ together with $Y\to T$ forms a cycle.

\paragraph{DAGs and PDAGs.}
A directed acyclic graph (DAG) is a simple directed graph, that is, all edges are directed, that has no cycles.
A partially directed acyclic graph (PDAG) is a simple graph that is partially directed and has no cycles.
A DAG is also a PDAG.
A walk $w$ from a set $\mathbf{T}$ to a set $\mathbf{Y}$ is a walk from some node $T \in \mathbf{T}$ to some node $Y\in\mathbf{Y}$, that is, $T$ and $Y$ are the endpoint nodes of the walk.
The walk $w$ is called proper, if it only contains one node in $\mathbf{T}$.
Given two walks $w=(A,\dots,B)$ and $w'=(B,\dots,C)$ we let $w \oplus w' = (A,\dots,B,\dots,C)$ denote the walk we obtain by concatenating $w$ and $w'$.

\paragraph{Node relationships in DAGs and PDAGs.}
If the edge $T \to Y$ or $T \arrow Y$ exists,
$T$ is a possible parent of $Y$ and $Y$ a possible child of $T$.
Let
$\posspa(Y,\g)$
denote the set of all possible parents of $Y$
and 
$\possch(T,\g)$ the set of all possible children of $T$.
If there is a possibly directed path from $T$ to $Y$ or if $T=Y$,
$T$ is a possible ancestor of $Y$
and $Y$ a possible descendant of $T$. Let
$\possan(Y,\g)$
denote the set of all possible ancestors of $Y$
and
$\possde(T,\g)$
the set of all possible descendants of $T$.
If all edges are directed, we analogously define the set of parents $\pa(Y,\g)$, children $\ch(T,\g)$, ancestors $\an(Y,\g)$, and descendants $\de(T,\g)$.
For a set $\mathbf{T}$ we define $\posspa(\mathbf{T},\g) = \bigcup_{T \in \mathbf{T}} \posspa(T,\g)$;
we analogously define $\possch(\mathbf{T},\g),\possan(\mathbf{T},\g),\possde(\mathbf{T},\g),\ch(\mathbf{T},\g),\pa(\mathbf{T},\g),\de(\mathbf{T},\g)$, and $\an(\mathbf{T},\g)$.
We also define\linebreak $\mathrm{NonDe}(T,\g) = \mathbf{V} \setminus \possde(T,\g)$ which in the DAG case reduces to $\mathrm{NonDe}(T,\g) = \mathbf{V} \setminus \de(T,\g)$. %

\paragraph{Supergraph.} A graph $\g=(\mathbf{V},\mathbf{E})$ is a called a supergraph of a graph $\g'=(\mathbf{V},\mathbf{E}')$ if  $\mathbf{E}' \subseteq \mathbf{E}$. We say that $\g$ is a super-DAG of $\g'$ if $\g$ is a DAG, and define super-CPDAG analogously.

\paragraph{Colliders, v-structures, and definite-status paths.}
A node $V$ in a PDAG $\g$ is a collider on a path $p$ if $p$ contains the subpath $U \rightarrow V \leftarrow W$. 
Node $V$ on a path $p$ is called a definite non-collider on $p$ if $p$ contains a subpath such that (a) $U \leftarrow V$, (b) $V \rightarrow W$, or (c) $U \arrow V \arrow W$ and $U$ and $W$ are not adjacent in $\g$. A path $p$ is of definite status if every node on $p$ is a collider, a definite-status non-collider, or an endpoint node.
We define all terms analogously for walks.
A path of the form $U \rightarrow V \leftarrow W$ in $\g$ is called a v-structure if $U$ and $V$ are not adjacent in $\g$.

\paragraph{Blocking and d-separation in PDAGs.} 
Let $\mathbf{Z}$ be a set of nodes in a PDAG $\g$.
A definite-status path $p$ is blocked given $\mathbf{Z}$ if $p$ either contains a non-collider $N \in \mathbf{Z}$ or a collider $C$ such that $\de(C,\g) \cap \mathbf{Z} = \emptyset$. 
A definite-status walk $w$ is blocked given $\mathbf{Z}$ if $p$ either contains a non-collider $N \in \mathbf{Z}$ or a collider $C$ such that $C \notin \mathbf{Z}$.
A definite-status path or walk that is not blocked given $\mathbf{Z}$ is said to be open or d-connecting given $\mathbf{Z}$. 
Given three pairwise disjoint node sets $\mathbf{T},\mathbf{Y},\mathbf{Z}$ in a PDAG $\g$ we say that $\mathbf{T}$ is d-separated from $\mathbf{Y}$ given $\mathbf{Z}$ in $\g$ and write $\mathbf{T} \perp_{\g} \mathbf{Y} \mid \mathbf{Z}$ if $\mathbf{Z}$ blocks all definite-status paths from $\mathbf{T}$ to $\mathbf{Y}$ or equivalently all definite-status walks. If it does not, we say that 
$\mathbf{T}$ and $\mathbf{Y}$ are d-connected given $\mathbf{Z}$. 

\paragraph{CPDAGs, Markov property, and Markov equivalence.} 
A density $f$ is Markov with respect to a DAG $\g$ if for any three pairwise disjoint node sets $\mathbf{T},\mathbf{Y},\mathbf{Z}$ in $\g$ such that $\mathbf{T} \perp_{\g} \mathbf{Y} \mid \mathbf{Z}$, $\mathbf{T}$ is conditionally independent of $ \mathbf{Y}$ given $\mathbf{Z}$. 
Two DAGs that encode the same set of d-separation statements are called Markov equivalent.
Given a DAG $\g$, the set of all DAGs that are Markov equivalent to $\g$ is called a Markov equivalence class and can be represented by a completed partially directed acyclic graph (CPDAG); a special subset of PDAGs characterized by \citet{meek1995causal}. Note that a DAG is in general not a CPDAG.
Given a CPDAG $\mathcal{C}$ let $[\mathcal{C}]$ denote the corresponding equivalence class. For any DAG $\mathcal{D} \in [\mathcal{C}]$ we say that $\mathcal{C}$ is the CPDAG of $\mathcal{D}$. The CPDAG $\mathcal{C}$ has the same v-structures and adjacencies as any DAG in $[\mathcal{C}]$. Further, an edge $\rightarrow$ in $\mathcal{C}$ implies that every DAG in $[\mathcal{C}]$ contains that edge $\rightarrow$.
An edge $\arrow$ in $\mathcal{C}$ implies that some DAG in $[\mathcal{C}]$ contains the edge $\rightarrow$ and some other DAG in $[\mathcal{C}]$ contains the edge $\leftarrow$.

\paragraph{Causal and forbidden Nodes.}
Given set $\mathbf{T}$ and $\mathbf{Y}$ in a DAG or CPDAG $\g$, we define the causal nodes $\cnb{\g}$ to be all nodes that lie on proper directed paths from $\mathbf{T}$ to $\mathbf{Y}$ and are not in $\mathbf{T}$. In a CPDAG $\g$, we define the possibly causal nodes $\posscnb{\g}$ to be all nodes that lie on proper possibly directed paths from $\mathbf{T}$ to $\mathbf{Y}$ and are not in $\mathbf{T}$. We define the forbidden nodes as $\forb(\mathbf{T},\mathbf{Y},\g) = \possde(\posscnb{\g},\g) \cup \mathbf{T}$.

\paragraph{Amenability.} Let $\mathbf{T}$ and $\mathbf{Y}$ be disjoint node sets in a DAG, CPDAG, MAG or PAG $\g$. We say that $(\g,\mathbf{T},\mathbf{Y})$ is amenable if every proper possibly directed path from $\mathbf{T}$ to $\mathbf{Y}$ begins with an edge $\rightarrow$.

The generalized adjustment criterion by \citet{perkovic2018complete}
provides necessary and sufficient graphical conditions for a set to be a valid adjustment set:
\begin{Definition}[Generalized Adjustment Criterion]\label{def:gac}
Let $\mathbf{T}, \mathbf{Y}$, and $\mathbf{Z}$ be pairwise disjoint node sets in a DAG, CPDAG, MAG or PAG $\g$.
Then $\mathbf{Z}$ satisfies the generalized adjustment criterion relative to $(\mathbf{T}, \mathbf{Y})$ in $\g$ if the following three conditions hold:
\begin{description}
\item[(Amenability)] $(\g,\mathbf{T},\mathbf{Y})$ is amenable, and
\item[(Forbidden set)] $\mathbf{Z}\cap\forb(\mathbf{T},\mathbf{Y},\g) = \emptyset$, and
\item[(Blocking)] all proper definite-status non-causal paths from $\mathbf{T}$ to $\mathbf{Y}$ are blocked by $\mathbf{Z}$ in $\g$.
\end{description}
\end{Definition}

\section{Proofs}\label{supp:proofs}

\begin{proof}[Proof of \cref{prop: supergraph distance} (\nameref{prop: supergraph distance})]
    Let $I=\mathcal{I}(\Gguess,{T},{Y})$ be an identifying formula. Since $\mathcal{I}$ is sound and complete, $I=f(\mathbf{y}\mid \doop{\mathbf{t}})$ for any $f$ compatible with $\Gguess$. Since any $f$ compatible with $\Gguess$ is also compatible with $\Gtrue$
    (since all parent sets in $\Gguess$ are supersets of parent sets in $\Gtrue$)
    it follows that
    $I=f(\mathbf{y}\mid \doop{\mathbf{t}})$ for any $f$ compatible with $\Gtrue$.
    Therefore, $\mathcal{V}(\Gtrue,\mathcal{I}(\Gguess, T, Y))$ returns $\correct$.
    Since this is true for any tuple $({T},{Y})$ our claim follows.
\end{proof}

\begin{proof}[Proof of \cref{lemma: non-descendants} (\nameref{lemma: non-descendants})]
    Since $\forb(T,Y,\g) \setminus \de(T,\g) = \emptyset$, $\mathbf{Z}$ satisfies the forbidden set condition of \cref{def:gac}. Let $p$ be a non-causal path from $T$ to $Y$.
    If $p$ begins with an edge into $T$, it contains a node in $\pa(T,\g)$ 
    which is therefore also a node in $\mathbf{Z}$ that is a non-collider on $p$.
    It follows that $p$ is blocked. If $p$ begins with an edge out of $T$, $p$ is either directed or it must contain a collider $C \in \de(T,\g)$. Since $\mathbf{Z} \cap \de(T,\g)=\emptyset$ and $\de(C,\g) \subseteq \de(T,\g)$ it follows that $p$ is blocked given $\mathbf{Z}$. Therefore, $\mathbf{Z}$ satisfies the adjustment criterion.
\end{proof}

\begin{proof}[Proof of \cref{lemma: causal order} (\nameref{lemma: causal order})]
Here, $\pa(V_t,\Gguess)=V_{t-1}$ for $t-1\geq 1$  and therefore
the parent adjustment strategy $\mathcal{I}_P(\Gguess,V_t,V_y)$ returns
\begin{equation*}
    \begin{cases}
    \int f(v_y \mid v_t,v_{t-1}) f(v_{t-1}) \, \mathrm{d}v_{t-1} & \text{if } y \neq t - 1 \geq 1, \\
    f(v_y\mid v_t) \text{\ \ \small("empty adjustment set") } & \text{if } \phantom{y \neq.\!} t-1 = 0,\\
    f(v_y) & \text{if } y = t-1 \geq 1.
\end{cases}
\end{equation*}
We now apply the verifier $\mathcal{V}^{\mathcal{D}}_{\text{adj}}$.
Consider first inputs to the verifier of the form $I=f(v_y)$.
The DAGs $\Gtrue$ and $\Gguess$ have the same causal ordering and therefore any one of the $p-1$ such inputs with $y=t-1 \geq 1$ is correct.
Consider now the remaining inputs to the verifier of the adjustment form.
Since $V_i \in \pa(V_t,\Gtrue)$ for all $i < t-1$,
no valid adjustment set exists for the effect of $V_t$ on such a $V_i$ in $\Gtrue$.
Further, if $t < y$ then the only valid adjustment set is $\{V_{1},\dots,V_{t-1}\}$. Therefore, $\pa(V_t,\Gguess)$ is a valid adjustment set in $\Gtrue$ if and only if $t=1$ or $t=2$ and $y > t$.
Using $\mathcal{I}_{P}$ we therefore obtain exactly $3p-4 = (p-1) + (p-1) + (p-2)$
identification formulas in $\Gguess$ that are also correct in $\Gtrue$,
while the remaining $(p^2-p) - (3p-4) = p^2-4p+4$ are false.
It follows that 
$$\lim_{p \to \infty} d^{\mathcal{I}_{P}}(\Gguess,\Gtrue) /(p^2-p) = 1.$$
\end{proof}

\begin{proof}[Proof of \cref{prop: ancestral adjustment} (\nameref{prop: ancestral adjustment})]
    Consider two nodes $T$ and $Y$ in a DAG $\Gguess$. If $Y \in \mathbf{D}_T=\de(T,\Gguess)$, then $\mathbf{A}_T=\an(T,\Gguess)$ is a valid adjustment set in $\Gguess$ by \cref{lemma: non-descendants}. If $Y \notin \mathbf{D}_T$, then $f(y\mid \doop{t})=f(y)$. Therefore, $\mathcal{I}_{A}$ is sound and complete. 

    Consider two DAGs $\Gtrue$ and $\Gguess$, such that $\Gguess$ respects the causal orders of $\Gtrue$, that is, $\de(T,\Gtrue) \subseteq \de(T,\Gguess)$ or equivalently $\an(T,\Gtrue) \subseteq \an(T,\Gguess)$ for all nodes $T$. Fix a pair $(T,Y)$ and consider $I_{TY}=\mathcal{I}_{A}(\g,T,Y)$. If $Y \notin \de(T,\Gguess)$ then $Y \notin \de(T,\Gtrue)$ and therefore, $I_{TY}=f(y)$ is a correct identifying formula in $\Gtrue$. If $Y\in\de(T,\Gguess)$ then $Y \notin \an(T,\Gguess)$, which implies that $Y \notin \an(T,\Gtrue)$  and $Y \notin \pa(T,\Gtrue)$. Since $\pa(T,\Gtrue) \subseteq \an(T,\Gtrue) \subseteq \an(T,\Gguess) \subseteq \mathrm{NonDe}(T,\Gguess)$ and $\mathrm{NonDe}(T,\Gguess) = \mathbf{V} \setminus \de(T,\Gguess) \subseteq \mathbf{V} \setminus \de(T,\Gtrue)$ we can therefore invoke \cref{lemma: non-descendants} to conclude that $I_{TY}$ is correct in $\Gtrue$. Therefore, $d^{\mathcal{I}_{A}}(\Gtrue,\Gguess) = 0$.
    
    Suppose now that $\Gguess$ does not respect 
    the causal order of $\Gtrue$, that is, there exists a pair $T,Y$ such that $Y \in \de(T,\Gtrue) \setminus \de(T,\Gguess)$. Since $Y \notin \de(T,\Gguess)$, $I_{TY}=f(y)$ but this is wrong in $\Gtrue$.
\end{proof}

\begin{proof}[Proof of \cref{prop: optimal adjustment DAGs} (\nameref{prop: optimal adjustment DAGs})]
    The soundness and completeness of $\mathcal{I}_O$ follows by the results of \citet{henckel2019graphical} and that the causal effect on a non-descendant is the observational density.
\end{proof}

\section{Additional Examples and Results}
\label{section: additional examples}

\begin{Example}[Counter-Examples where Oset Adjustment Distance is (not) Zero]
    \label{example: zero optimal distance}
    Let  $\Gtrue^1$ be the graph from \cref{subfig: Gtrue zero optimal distance_one}, $\Gtrue^2$ be the graph from \cref{subfig: Gtrue zero optimal distance_two}, and $\Gguess$ be the graph from \cref{subfig: Gguess zero optimal distance}. 
    
    Consider the pair $(\Gtrue^1,\Gguess)$. Since, $V_2$ is an isolated node in $\Gtrue^1$ any identifying formula produced by $\mathcal{I}_O$ for a pair involving $V_2$ will be correct irrespective of $\Gguess$. Further, $\mathcal{I}_O(\Gguess,V_3,V_1)=f(v_1)$ and $\mathcal{I}_O(\Gguess,V_1,V_3)=f(v_3\mid v_1)$. Since $V_1 \notin \de(V_3,\Gtrue^1)$ and the empty set is a valid adjustment set relative to $(V_1,V_3)$ in $\Gtrue^1$ it follows that $d^{\mathcal{I}_O}(\Gtrue^1,\Gguess)=0$. This shows that the Oset adjustment distance may be zero even if $\Gguess$  is not a supergraph of $\Gtrue$.

    Consider the pair $(\Gtrue^2,\Gguess)$. Since $I_O(\Gguess,V_2,V_3)=f(v_3\mid v_2)$ and the empty set is not a valid adjustment set relative to $(V_2,V_3)$ in $\Gtrue^2$, $d^{\mathcal{I}_O}(\Gtrue^2,\Gguess)\neq 0$. This shows that $\Gguess$ respecting the causal order of $\Gtrue$ does not ensure that the Oset adjustment distance is zero.
\end{Example}

\begin{Example}[Correct CPDAG may be further from true DAG than Incorrect CPDAG]
\label{example: DAG to CPDAG}
Let $\Gtrue^D$ be a fully connected DAG. Let $\Gguess^1$ be the corresponding CPDAG, that is, the fully connected CPDAG. Since every effect in $\Gguess$ is non-identifiable it follows that for any strategy-specific distance $d^{\mathcal{I}}(\Gtrue,\Gguess^1)=p(p-1)$. Let $\Gguess^2$ be the empty CPDAG. Since exactly half the effects in $\Gtrue$ are zero it follows that for any identification strategy that uses a descendant check, such as  $\mathcal{I}_A$ or $\mathcal{I}_O$, $d^{\mathcal{I}}(\Gtrue,\Gguess^2)=p(p-1)/2$
    
\end{Example}

\begin{Proposition}[Ancestor Adjustment Strategy is Sound and Complete for CPDAGs]
    \label{proposition: ancestral CPDAGs}
    Consider nodes $T$ and $Y$ in a CPDAG $\g$, such that $(\g,T,Y)$ is amenable. Then $\an(T,\g)$ is a valid adjustment set relative to $(T,Y)$ in $\g$.
\end{Proposition}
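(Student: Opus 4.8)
The plan is to show that $\mathbf{Z}=\an(T,\g)\setminus\{T\}$ satisfies the generalized adjustment criterion of \cref{def:gac} relative to $(T,Y)$, which by \citet{perkovic2018complete} is equivalent to $\mathbf{Z}$ being a valid adjustment set. (Implicitly $Y\notin\an(T,\g)$, as is the case whenever the strategy $\mathcal{I}_A$ actually returns the adjustment formula, namely when $Y\in\de(T,\g)$: a directed ancestor of $T$ cannot be a descendant of $T$ without a cycle, so no causal effect would be present otherwise.) The amenability condition of \cref{def:gac} holds by assumption, so only the forbidden-set and blocking conditions remain. Throughout I would rely on one structural fact about CPDAGs: they contain no \emph{partially directed cycle} (a cycle of forward-directed and undirected edges with at least one directed edge), which follows from the chain-component decomposition in which directed edges run between components that themselves form a DAG. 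Its consequence, used repeatedly, is that no node can be both a proper directed ancestor and a possible descendant of $T$, i.e.\ $\bigl(\an(T,\g)\setminus\{T\}\bigr)\cap\possde(T,\g)=\emptyset$.

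For the \textbf{forbidden-set condition} I would first show $\forb(T,Y,\g)\subseteq\possde(T,\g)$. Every possibly causal node $C\in\possCN(T,Y,\g)$ lies on a proper possibly directed path out of $T$, hence $C\in\possde(T,\g)$; transitivity of the possible-descendant relation then gives $\possde(C,\g)\subseteq\possde(T,\g)$, so $\forb(T,Y,\g)=\possde(\possCN(T,Y,\g),\g)\cup\{T\}\subseteq\possde(T,\g)$. Combined with the structural consequence above, $\mathbf{Z}\cap\forb(T,Y,\g)=\emptyset$.

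The main work, and the main obstacle, is the \textbf{blocking condition}: every proper definite-status non-causal path $p=\langle T=V_0,\dots,V_k=Y\rangle$ must be blocked by $\mathbf{Z}$. I would split on the first edge. If it points into $T$, then $V_1\in\pa(T,\g)\subseteq\mathbf{Z}$ and, since its edge to $T$ points away from $V_1$, it is a non-collider in $\mathbf{Z}$, so $p$ is blocked. Otherwise the first edge is $T\to V_1$ or undirected $T\arrow V_1$; because $p$ is non-causal it contains an edge pointing back towards $T$, and I take $V_i$ at the first such edge, so that $V_{i+1}\to V_i$ with $1\le i\le k-1$. Since the edge $V_{i-1}\text{--}V_i$ is \emph{not} oriented towards $T$ (by minimality of $i$), none of the definite-non-collider patterns can hold at $V_i$; as $p$ is of definite status, $V_i$ must therefore be a collider with $V_{i-1}\to V_i$. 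The prefix $\langle V_0,\dots,V_i\rangle$ has no edge pointing back towards $T$, so it is possibly directed and $V_i\in\possde(T,\g)$.

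Finally I would show the collider $V_i$ blocks $p$, i.e.\ $\de(V_i,\g)\cap\mathbf{Z}=\emptyset$: any $D\in\de(V_i,\g)\cap\an(T,\g)$ would yield a directed path $V_i\to\cdots\to D\to\cdots\to T$, making $V_i$ a proper directed ancestor of $T$ while simultaneously $V_i\in\possde(T,\g)$ — a partially directed cycle, impossible in a CPDAG. Hence every such $p$ is blocked, completing the blocking condition and the verification of \cref{def:gac}. The delicate step is this collider analysis when $p$ leaves $T$ along an undirected edge: ruling out the definite-non-collider cases depends crucially on the definite-status assumption, and concluding that the collider is genuinely blocked depends on the no-partially-directed-cycle property; the amenability hypothesis enters only as the first condition of the criterion but is essential, since without it no set could be a valid adjustment set.
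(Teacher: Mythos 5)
Your proof is correct and follows essentially the same route as the paper's: amenability holds by assumption, the forbidden-set condition is verified, and the blocking condition is established by splitting on the first edge of a definite-status non-causal path, using the definite-status property to force a collider in $\possde(T,\g)$ at the first backward-facing edge, whose descendants cannot intersect $\an(T,\g)$. The only difference is that you spell out details the paper asserts without proof — the inclusion $\forb(T,Y,\g)\subseteq\possde(T,\g)$ and the fact $\de(C,\g)\cap\an(T,\g)=\emptyset$, both via the absence of partially directed cycles in CPDAGs — and you correctly handle the technicalities of excluding $T$ from the adjustment set and of $Y\notin\an(T,\g)$.
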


\begin{proof}
    Since by assumption $(\g,T,Y)$ is amenable and $\an(T,\g)$ satisfies the forbidden set condition of \cref{def:gac}, it only remains to show that $\an(T,\g)$ satisfies the blocking condition. To see this consider a definite-status non-causal path $p$ from $T$ to $Y$. If $p$ begins with an edge $\leftarrow$ it contains a node in $\pa(T,\g)$ and is therefore blocked given $\an(T,\g)$. If it begins with an edge $\arrow$ or $\rightarrow$, then it must contain at least one collider $C \in \possde(T,\g)$ by the fact that it is of definite status and may therefore not contain $- V \leftarrow$ but contains at least one backwards facing edge. Since $\de(C,\g) \cap \an(T,\g) = \emptyset$, it follows that $p$ is blocked given $\an(T,\g)$.
\end{proof}

\begin{Lemma}[Oset-Characterization Simplifies Given Amenability]
    Consider two node sets $\mathbf{T}$ and $\mathbf{Y}$ in a CPDAG $\g$ such that $(\g,\mathbf{T},\mathbf{Y})$ is amenable. Then 
    $$
    \optb{\g} = \pa(\de(\mathbf{T},\g) \cap \mathrm{PropAn}(\mathbf{Y},\mathbf{T},\g),\g) \setminus \de(\mathbf{T},\g),
    $$
    where $\mathrm{PropAn}(\mathbf{Y},\mathbf{T},\g)$ is the set of all nodes $N$, such that there exists a directed path from $N$ to some $Y \in \mathbf{Y}$ that does not contain any nodes in $\mathbf{T}$.
    \label{lemma: alternative optimal}
\end{Lemma}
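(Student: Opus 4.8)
The plan is to unfold the definition $\optb{\g} = \pa(\cnb{\g},\g) \setminus \forb(\mathbf{T},\mathbf{Y},\g)$ and reduce the claim to two set identities. The first, which needs no amenability, is the characterization of the causal nodes $\cnb{\g} = \de(\mathbf{T},\g) \cap \mathrm{PropAn}(\mathbf{Y},\mathbf{T},\g)$. The second is that on the set $\pa(\cnb{\g},\g)$ the forbidden set may be replaced by $\de(\mathbf{T},\g)$, that is, $\pa(\cnb{\g},\g) \setminus \forb(\mathbf{T},\mathbf{Y},\g) = \pa(\cnb{\g},\g) \setminus \de(\mathbf{T},\g)$. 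Together these give the claimed formula. For the first identity, a node $N \notin \mathbf{T}$ lies on a proper directed path from $\mathbf{T}$ to $\mathbf{Y}$ if and only if it is reachable from $\mathbf{T}$ by a directed path (so $N \in \de(\mathbf{T},\g)$) and reaches $\mathbf{Y}$ by a directed path avoiding $\mathbf{T}$ (so $N \in \mathrm{PropAn}(\mathbf{Y},\mathbf{T},\g)$); the nontrivial direction glues a directed path to $N$, truncated at the last node in $\mathbf{T}$, with a directed $N$-to-$\mathbf{Y}$ path, and acyclicity of the CPDAG guarantees the resulting directed walk is a path through $N$.

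For the second identity, the inclusion $\optb{\g} \subseteq \pa(\cnb{\g},\g) \setminus \de(\mathbf{T},\g)$ also holds without amenability. Indeed, if $P \in \pa(\cnb{\g},\g) \cap \de(\mathbf{T},\g)$, then either $P \in \mathbf{T} \subseteq \forb(\mathbf{T},\mathbf{Y},\g)$, or $P \notin \mathbf{T}$ and, appending the edge $P \to C$ into a causal node $C$ to a directed $C$-to-$\mathbf{Y}$ path, one sees $P$ is itself a causal node, whence $P \in \cnb{\g} \subseteq \possde(\posscnb{\g},\g) \subseteq \forb(\mathbf{T},\mathbf{Y},\g)$. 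So every element of $\pa(\cnb{\g},\g) \cap \de(\mathbf{T},\g)$ is forbidden and is discarded by both set differences.

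The reverse inclusion is where amenability enters, and it suffices to show that any $P \in \pa(\cnb{\g},\g) \cap \forb(\mathbf{T},\mathbf{Y},\g)$ lies in $\de(\mathbf{T},\g)$. Such a $P$ is either in $\mathbf{T} \subseteq \de(\mathbf{T},\g)$, or lies in $\possde(\posscnb{\g},\g) \subseteq \possde(\mathbf{T},\g)$ while being a proper ancestor of $\mathbf{Y}$ (via $P \to C \to \cdots \to Y$). The crux therefore reduces to the amenability lemma: \emph{if $(\g,\mathbf{T},\mathbf{Y})$ is amenable and $V \notin \mathbf{T}$ satisfies $V \in \possde(\mathbf{T},\g) \cap \mathrm{PropAn}(\mathbf{Y},\mathbf{T},\g)$, then $V \in \de(\mathbf{T},\g)$.} I would prove this by taking a shortest possibly directed path $\tau = (W_0,\dots,W_k = V)$ from $\mathbf{T}$ to $V$ and showing it is directed. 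If its first undirected edge $W_i \arrow W_{i+1}$ were preceded by a directed edge $W_{i-1} \to W_i$, the orientation rules characterizing CPDAGs \citep{meek1995causal} would force $W_{i-1}$ and $W_{i+1}$ adjacent (otherwise $W_i \to W_{i+1}$ is oriented), while acyclicity together with the fact that an undirected edge admits both orientations in the equivalence class rules out $W_{i+1} \to W_{i-1}$; the remaining orientations of that chord yield a shorter possibly directed path, contradicting minimality. Hence the only candidate undirected edge is the first one, $W_0 \arrow W_1$; but appending to it a possibly directed path from $W_1$ to $\mathbf{Y}$ — obtained by reducing the possibly directed walk $(W_1,\dots,V)\oplus(V\to\cdots\to Y)$ to a path — produces a proper possibly directed $\mathbf{T}$-to-$\mathbf{Y}$ path that starts with an undirected edge, contradicting amenability. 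Thus $\tau$ is directed and $V \in \de(\mathbf{T},\g)$.

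The main obstacle is this amenability lemma, and within it the two-stage shortest-path argument: first propagating directedness past the initial directed prefix using the CPDAG orientation rules and acyclicity, and then turning a hypothetical undirected first edge into a violation of amenability. The delicate bookkeeping is checking that the reduced $\mathbf{T}$-to-$\mathbf{Y}$ path remains proper and retains its undirected first edge, which relies on the interior of a shortest path meeting $\mathbf{T}$ only at its start and on the directed tail $V \to \cdots \to Y$ avoiding $\mathbf{T}$. The remaining steps are routine path-gluing together with the acyclicity of CPDAGs.
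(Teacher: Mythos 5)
Your proof is correct, and its overall decomposition matches the paper's: both unfold $\optb{\g} = \pa(\cnb{\g},\g)\setminus\forb(\mathbf{T},\mathbf{Y},\g)$, identify $\cnb{\g} = \de(\mathbf{T},\g)\cap\mathrm{PropAn}(\mathbf{Y},\mathbf{T},\g)$ by path-gluing, note that every node of $\pa(\cnb{\g},\g)\cap\de(\mathbf{T},\g)$ is itself causal and hence forbidden (so it is discarded by both set differences), and then prove the converse inclusion, which is the only place amenability enters. The genuine difference is how that crux is handled. The paper dispatches it in one line by citing Lemma E.6 of \citet{henckel2019graphical}, which asserts that under amenability $\forb(\mathbf{T},\mathbf{Y},\g)\subseteq\de(\mathbf{T},\g)$; you instead prove from first principles the special case you need, namely that every $P\in\pa(\cnb{\g},\g)\cap\forb(\mathbf{T},\mathbf{Y},\g)$ — being a possible descendant of $\mathbf{T}$ and a proper ancestor of $\mathbf{Y}$ — is an actual descendant of $\mathbf{T}$. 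Your shortest-path argument for this is sound: an undirected edge preceded by a directed edge is excluded because the orientation rules of \citet{meek1995causal} force the chord between $W_{i-1}$ and $W_{i+1}$, acyclicity in a consistent extension rules out $W_{i+1}\to W_{i-1}$, and the remaining chord orientations shortcut the path, contradicting minimality; an undirected \emph{first} edge extends to a proper possibly directed path from $\mathbf{T}$ to $\mathbf{Y}$ starting with $\arrow$, contradicting amenability (and the bookkeeping you flag — the interior of a shortest path avoids $\mathbf{T}$, the directed tail avoids $\mathbf{T}$, and loop-cutting preserves the first edge because the start node never recurs — is exactly what is needed). In effect you re-derive the cited lemma restricted to $\pa(\cnb{\g},\g)$. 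What each approach buys: the paper's citation yields a three-line proof resting on an external, stronger fact; your version is self-contained, makes explicit where amenability and the CPDAG orientation rules are used, and is the argument one would need anyway if such a lemma were unavailable, for instance when adapting the characterization to other graph classes.
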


\begin{proof}
    Recall that $\optb{\g} = \pa(\mathrm{Cn}(\mathbf{T},\mathbf{Y},\g),\g) \setminus \forb(\mathbf{T},\mathbf{Y},\g)$. Clearly, 
    $$
    \mathrm{Cn}(\mathbf{T},\mathbf{Y},\g) = \de(\mathbf{T},\g) \cap \mathrm{PropAn}(\mathbf{Y},\mathbf{T},\g)
    $$
    by the definition of $\mathrm{Cn}(\mathbf{T},\mathbf{Y},\g)$. Since every causal node is in $\mathrm{PropAn}(\mathbf{Y},\mathbf{T},\g)$ so is every node in $\optb{\g}$. Since every node that is both in $\mathrm{PropAn}(\mathbf{Y},\mathbf{T},\g)$ and $\de(\mathbf{T},\g)$ is a causal node and therefore forbidden it follows that $\optb{\g} \cap \de(\mathbf{T},\g) = \emptyset$. By Lemma E.6 by \citet{henckel2019graphical}, $\forb(\mathbf{T},\mathbf{Y},\g) \subseteq \de(\mathbf{T},\g)$ and therefore removing all nodes in $\de(\mathbf{T},\g)$ from $\pa(\mathrm{Cn}(\mathbf{T},\mathbf{Y},\g),\g)$ is equivalent to removing all nodes in $\forb(\mathbf{T},\mathbf{Y},\g)$. 
\end{proof}

\begin{figure}
\begin{subfigure}{0.3\linewidth}
 \centering
     \begin{tikzpicture}[scale=.9]
      \node[state] (v1) at (0,0) {$V_1$};
      \node[state] (v2) at (1.5,0) {$V_2$};
      \node[state] (v3) at (3,0) {$V_3$};
      \path (v1) edge[directed, bend left] (v3);
    \end{tikzpicture}
    \caption{– $\Gtrue^1$}
    \label{subfig: Gtrue zero optimal distance_one}
\end{subfigure}
\hfill
\begin{subfigure}{0.3\linewidth}
 \centering
\begin{tikzpicture}[scale=.9]
      \node[state] (v1) at (0,0) {$V_1$};
      \node[state] (v2) at (1.5,0) {$V_2$};
      \node[state] (v3) at (3,0) {$V_3$};
      \path (v1) edge[directed] (v2)
            (v2) edge[directed] (v3)
            (v1) edge[directed] (v2);
    \end{tikzpicture}
    \caption{– $\Gguess$}
    \label{subfig: Gguess zero optimal distance}
\end{subfigure}
\hfill
\begin{subfigure}{0.3\linewidth}
 \centering
     \begin{tikzpicture}[scale=.9]
      \node[state] (v1) at (0,0) {$V_1$};
      \node[state] (v2) at (1.5,0) {$V_2$};
      \node[state] (v3) at (3,0) {$V_3$};
      \path (v1) edge[directed, bend left] (v3)
            (v1) edge[directed] (v2)
            (v2) edge[directed] (v3);
    \end{tikzpicture}
    \caption{– $\Gtrue^2$}
    \label{subfig: Gtrue zero optimal distance_two}
\end{subfigure}
    \caption{DAGs for \cref{example: zero optimal distance}.}
    \label{fig: zero optimal distance}
\end{figure}

\section{Algorithm Development}
\label{section: appendix implementation}

Many graph properties involved in the validation of adjustment sets,
such as amenability, forbidden nodes, or blocking,
are based on the (non-)existence of paths with certain properties.
For example, two nodes $T$ and $Y$ are d-connected in a DAG given $\mathbf{Z}$
if and only if a path exists between them that is not blocked by $\mathbf{Z}$.
We can reformulate the problem of verifying whether such a path exists as a reachability task:
Starting from $T$, try reaching $Y$ by following all possible paths that are not blocked by $\mathbf{Z}$
until either you reach $Y$ or you have exhausted all possible paths.

The \emph{Bayes-Ball} algorithm \citep{geiger1989dseparation,shachter1998bayes} %
uses the reachability framework to obtain all nodes in a DAG that are d-connected to $T$ given $\mathbf{Z}$.
A key insight for its efficient implementation is that a d-connecting walk exists between two nodes if and only if a d-connecting path exists between them;
thus, we can avoid having to
a) check for each collider along the path that one of its descendants is in $\mathbf{Z}$ before continuing
and to
b) follow all possible paths for which we need to keep track of all previously visited nodes along any given path traversal
(to avoid visiting the same node more than once).
Instead, we can traverse along walks and continue a walk from $V$ to $W$ along an incoming/outgoing edge if
a') $W$ has not previously been reached through an incoming/outgoing edge
and if
b') the walk is not blocked by $\mathbf{Z}$
(if we face $\to V \gets W$, we continue from $V$ to $W$ only if $V \in \mathbf{Z}$,
otherwise we continue only if $V \not\in\mathbf{Z}$).
The benefit of a') and b') over a) and b) is that
both conditions are local to the current node in a walk and verifiable
without querying ancestor sets or storing and checking against all previously visited nodes.
In this walk reachability algorithm,
each node is visited at most twice and each edge considered a constant number of times \citep[see, for example, Appendix A in][]{wienobst2022finding}
and therefore its runtime is linear in the number of nodes $p$ plus the number of edges $m$;
in dense graphs the number of edges grows quadratic in the number of nodes and consequently the runtime is $O(p^2)$ for dense graphs.

\citet{wienobst2022finding} generalize the algorithmic concept underlying \emph{Bayes-Ball} to a class of DAG search algorithms
akin to a depth-first graph search that recursively visits neighbouring nodes that have not been reached by the same kind of edge before.
Since each node $V$ is visited at most once per edge type (for example, $\to V$ or $\gets V$ in DAGs),
the runtime of their \emph{gensearch} algorithm is also $O(p^2)$.
Rule tables encode the conditions for continuing on a given walk
based on how the current node was reached and how the potential next node $W$ would be reached.
Which rule table we use for the \emph{gensearch} algorithm
determines the properties of the nodes that will be reached and therefore returned by the algorithm; for example,
they show that a sequence of \emph{gensearch} algorithms with carefully chosen rule tables
finds a minimal adjustment set in $O(p+m)$ time.

Key to implementing our adjustment distances efficiently,
is a new walk-status-aware reachability algorithm
that, given a DAG or CPDAG $\g$ with $p$ nodes and $m$ edges,
set $\mathbf{Z}$, and treatment nodes $\mathbf{T}$,
returns all nodes $Y$ such that $\mathbf{Z}$
is a valid adjustment set for $(\mathbf{T},Y)$ in $\g$
in $O(p+m)$ time.
We use this algorithm to verify an adjustment set for many $Y$ simultaneously.
To implement this algorithm, 
we prove a modified adjustment criterion,
adapt the reachability algorithm for finding d-connected nodes in DAGs
to account for walks that are not of definite-status in CPDAGs,
implement an amenability check,
and show how to find nodes that satisfy all conditions of the modified adjustment criterion with only one reachability algorithm.
We proceed as follows:
\begin{itemize}

\item In \cref{app:modifiedcriterion}
we prove a modified adjustment criterion that translates all conditions of the generalized adjustment criterion \citep{perkovic2018complete} into conditions on the (non-)existence of certain walks.
An adjustment criterion in terms of walks allows us to verify whether it holds using only reachability algorithms.

\item In \cref{app:blockingincpdags}
we show how to verify blocking in
CPDAGs with a reachability algorithm that uses no non-local information to verify whether a walk is definite-status or not.

\item In \cref{app:walkstatus} we provide motivation and intuition for our decision to add a walk-status to reachability algorithms that is propagated forward in the depth-first search traversal of the graph.
The addition of a walk status allows us to track walks that do not transmit reachability,
but may change status and become walks for which we need to track nodes reached by such a walk.

\item
In \cref{app:efficientalg}, we demonstrate how we use our new reachability algorithm to calculate the Parent- and Ancestor-AID between DAGs or CPDAGs with $O(p(p+m))$ and the Oset-AID with $O(p^2(p+m))$ time complexity.

\end{itemize}

\subsection{Modified Adjustment Criterion}
\label{app:modifiedcriterion}

We prove a modified adjustment criterion that translates all conditions of the generalized adjustment criterion \citep{perkovic2018complete} into conditions on the (non-)existence of certain walks.
Having an adjustment criterion exclusively in terms of walks allows us to use reachability algorithms to verify it.

\begin{Lemma}[Modified Adjustment Criterion for Walk-Based Verification]
    Consider nodes $\mathbf{T}$ and $\mathbf{Y}$ in a DAG or CPDAG $\g$ and a node set $\mathbf{Z}$ in $\g$. The set $\mathbf{Z}$ fulfills the adjustment criterion if and only if
    \begin{enumerate}
        \item every proper possibly directed walk from $\mathbf{T}$ to $\mathbf{Y}$ begins with a directed edge out of $\mathbf{T}$,  and
        \item no proper possibly directed walk from $\mathbf{T}$ to $\mathbf{Y}$ contains a node in $\mathbf{Z}$, and
        \item every proper definite-status walk from $\mathbf{T}$ to $\mathbf{Y}$ that contains a backwards facing edge is blocked by $\mathbf{Z}$.
    \end{enumerate}
    \label{lemma: alternative adjustment criterion}
\end{Lemma}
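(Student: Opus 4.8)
The plan is to prove the stated equivalence by showing that the three walk conditions hold jointly if and only if $\mathbf{Z}$ satisfies the generalized adjustment criterion of \cref{def:gac}, that is, amenability, the forbidden-set condition, and the blocking condition. The bridge between the path-based criterion and the walk-based conditions is a small toolbox of walk-to-path reductions obtained by loop removal: (i) every possibly directed walk from $\mathbf{T}$ to $\mathbf{Y}$ contains, as a subsequence, a possibly directed path with the same first edge; and (ii) every open (d-connecting) definite-status walk reduces to an open definite-status path, the fact underlying the correctness of Bayes-Ball. Since a path is a special walk, the ``walk $\Rightarrow$ path'' direction of each condition is immediate, and the real work lies in the converses.

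First I would dispatch condition~1 against amenability. As paths are walks, condition~1 implies amenability directly. For the converse I would assume a proper possibly directed walk beginning with an undirected edge $T \arrow V$ and apply reduction~(i), arranging the loop removal so that the initial edge is retained; this yields a proper possibly directed path beginning with an undirected edge, contradicting amenability.

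The heart of the argument is that the forbidden-set condition does \emph{not} match a single walk condition but is split between conditions~2 and~3, so I would prove $(\text{forbidden} \wedge \text{blocking}) \Rightarrow (\text{conditions 2 and 3})$ and its converse jointly. For the forward part: any $W\in\mathbf{Z}$ on a proper possibly directed walk is a possible descendant of a causal node lying on that walk (using acyclicity of the CPDAG to handle undirected detours), hence $W\in\forb(\mathbf{T},\mathbf{Y},\g)$, so the forbidden-set condition forces condition~2; and any open non-causal definite-status walk reduces by~(ii) to an open definite-status path, which, if still non-causal, violates blocking, and, if it has become possibly directed, must owe its openness to a turn-around collider in $\mathbf{Z}$ that is a descendant of a causal node and hence forbidden --- again a contradiction, yielding condition~3. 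For the converse I would argue contrapositively from a forbidden $W\in\mathbf{Z}$ (necessarily a possible descendant of a causal node, since $\mathbf{Z}$ is disjoint from $\mathbf{T}$): if $W$ can reach $\mathbf{Y}$ along a possibly directed walk, splicing it with a possibly directed walk from $\mathbf{T}$ through the witnessing causal node to $W$ produces a possibly directed walk through $W$ and violates condition~2; otherwise I would build a non-causal definite-status walk that goes out to $W$ and returns, so that $W$ is an open collider, violating condition~3. Finally, condition~3 $\Rightarrow$ blocking is immediate, since a non-causal definite-status path is such a walk.

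The main obstacle I anticipate is exactly this dichotomy: making precise that a forbidden node in $\mathbf{Z}$ is witnessed \emph{either} on a possibly directed walk (condition~2) \emph{or} as an open collider on a non-causal walk (condition~3), and choosing the witness node --- for example the causal-path descendant nearest the causal path, with a directed edge into it so it is a genuine collider --- so that the constructed walk is truly open. The CPDAG case compounds this, because reductions~(i) and~(ii) must preserve definite status, and a node's non-collider status can hinge on the non-adjacency of its neighbours; I would therefore verify explicitly that loop removal and the open-walk-to-path reduction never create an ambiguous-status node in the undirected-edge configurations.
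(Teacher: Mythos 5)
Your overall architecture (walk-to-path reductions, matching condition~1 to amenability, splitting the forbidden-set condition between conditions~2 and~3) tracks the paper's proof fairly closely, and your argument for condition~2 is correct and in fact more explicit than the paper's. However, one step of your converse is genuinely wrong, not merely compressed: the claim that ``condition~3 $\Rightarrow$ blocking is immediate, since a non-causal definite-status path is such a walk.'' In this paper blocking for paths and blocking for walks are \emph{different} notions, and they differ precisely at colliders: a definite-status path is blocked when some collider $C$ satisfies $\de(C,\g)\cap\mathbf{Z}=\emptyset$, whereas a definite-status walk is blocked as soon as some collider lies outside $\mathbf{Z}$. Hence a proper non-causal definite-status path that is \emph{open in the path sense} may well be \emph{blocked in the walk sense}, and applying condition~3 to the path itself yields no contradiction. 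Concretely, take the DAG with edges $T\to C$, $A\to C$, $A\to Y$, $C\to Z'$ and $\mathbf{Z}=\{Z'\}$: the path $T\to C\gets A\to Y$ is path-open (the collider $C$ has descendant $Z'\in\mathbf{Z}$), so the GAC blocking condition fails, yet viewed as a walk this path is blocked because $C\notin\mathbf{Z}$. To violate condition~3 you must build a different walk by detouring from each collider down to $\mathbf{Z}$ and back, here $T\to C\to Z'\gets C\gets A\to Y$, which turns $Z'$ into a collider inside $\mathbf{Z}$ and $C$ into a non-collider outside $\mathbf{Z}$ while retaining a backward edge (and one must re-route whenever a detour meets $\mathbf{T}$, to keep the walk proper). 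This detour construction is exactly the first case of the paper's ``only if'' direction; it cannot be replaced by the observation that paths are walks.

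A second soft spot is the forward argument for condition~3 in the case where the reduced path becomes possibly directed. First, an open non-causal definite-status walk need not contain any collider at all (e.g.\ $T\gets A\to\dots\to Y$), so ``must owe its openness to a turn-around collider in $\mathbf{Z}$'' does not apply there; you need to show that for such walks the reduction preserves the initial backward edge (the start node never repeats on a proper walk, and a node entered against an arrow is a definite non-collider, so no shortcut removes the first edge), forcing these walks into your ``still non-causal'' branch. Second, for walks that do have colliders, the assertion that the relevant collider is a possible descendant of a possibly causal node is the crux and requires proof: for DAGs it follows because loop removal uses only edges of the walk, so the reduced possibly directed path must begin with the walk's first edge $T\to V_1$, making $V_1$ causal and the first collider (whose prefix is all forward) a descendant of $V_1$; for CPDAGs the definite-status-repairing shortcuts make this delicate. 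The paper states this step with similar brevity, so this gap is shared, but your write-up should treat it as the main thing to be proved rather than an incidental obstacle.
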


\begin{proof}
We prove our claims for the CPDAG case as the DAG case can be shown with the same basic arguments but is simpler. We first show that if $\mathbf{Z}$ does not satisfy the adjustment criterion then it also does not satisfy the alternative adjustment criterion. Since the two criteria both assume amenability we can assume amenability holds. Suppose that there exists a proper definite-status non-causal path $p$ from $\mathbf{T}$ to $\mathbf{Y}$ that is open given $\mathbf{Z}$. Consider all colliders $C_1, \dots, C_k$ and the corresponding directed paths $q_1,\dots,q_k$ from $C_i$ to some node  $Z_i \in \mathbf{Z}$. If any of the $q_i$ contains a node in $T' \in \mathbf{T}$ we can replace $p$ with $q_i(T',C_i) \oplus p(C_i,Y)$ so without loss of generality we can assume this is not the case. By appending the $q_i$ to $p$ we obtain a proper definite-status walk from $\mathbf{T}$ to $\mathbf{Y}$ that contains an edge $\leftarrow$, inherited from $p$. 

We can therefore assume, no such $p$ exists and that $\mathbf{Z} \cap \forb(\mathbf{T},\mathbf{Y},\g) \neq \emptyset$. Since
$\forb(\mathbf{T},\mathbf{Y},\g) =$\linebreak $\possDe(\posscnb{\g},\g)$ %
we can in fact assume that there exists a node $Z \in \mathbf{Z} \cap (\possDe(\posscnb{\g},\g) \setminus \posscnb{\g})$. For such a node $Z$ there exists a directed path $p_1$ from $T$ to $Z$ by Lemma E.6 of \citet{henckel2019graphical} and a possibly directed path $p_2$ from some causal node $N$. Since $N$ is possibly causal there also exists a possibly directed path $p_3$ from $N$ to some node  in $Y \in \mathbf{Y}$. We can choose all three paths to not contain other nodes in $\mathbf{Z}$. Let $I$ be the node closest to $Z$, where $p_2$ and $p_3$ intersect and consider $p_4=p_2(Z,I) \oplus p_3(I,Y)$. Note that $p_4$ is colliderless and by taking shortcuts we obtain a definite-status path $p^*_4$, that is also colliderless. By assumption on $Z$, $p^*_4$ must be non-causal and therefore the walk $w=p_1 \oplus p_4^*$ has $Z$ as a definite-status collider, contains no other node in $\mathbf{Z}$ and all other nodes are endpoint nodes or definite-status non-colliders. The walk $w$ therefore violates the blocking condition of the modified criterion. 

We now show that if $\mathbf{Z}$ satisfies the adjustment criterion then it satisfies the alternative adjustment criterion. Again we can assume amenability holds. It suffices to show that any proper non-causal definite-status walk $w$ from $T \in \mathbf{T}$ to $Y\in \mathbf{Y}$ is blocked given $\mathbf{Z}$. Suppose $w$ is colliderless. This in particularly implies that $w$ begins with an edge $T \leftarrow N$. By snipping cycles and appropriate shortcuts we obtain a definite-status, colliderless, non-causal path, where we use that $N$ will always be of definite status and therefore the directed edge into $T$ will never be removed. This path $p$ is blocked given $\mathbf{Z}$ and therefore contains a node in $\mathbf{Z}$. Therefore, so does $w$ which implies that it is blocked given $\mathbf{Z}$. Suppose now that $w$ contains colliders $C_1,\dots,C_k$. By snipping cycles and taking shortcuts we can again obtain a definite-status path $p$ from $T$ to $Y$. Suppose $p$ is possibly directed, that is, consists of possibly causal nodes. Then at least on of the colliders must be a descendant of a possibly causal node and therefore $w$ is blocked. If $p$ is not possibly directed it must either contain a non-collider in $\mathbf{Z}$ that is also a non-collider on $w$ or a collider $C$, such that $\de(C,\g) \cap \mathbf{Z} \neq \emptyset$. In the former case, $w$ is obviously blocked. In the latter case, at least one of the $C_i$ must satisfy $C_i \notin \mathbf{Z}$ which again suffices to conclude that $w$ is blocked.
\end{proof}

With this modified adjustment criterion we can
algorithmically verify that a set $\mathbf{Z}$ is a valid adjustment set for $(\mathbf{T}, Y)$ in DAG or CPDAG $\g$,
by verifying that
\begin{enumerate}

\item no proper possibly directed walk that does not begin with a directed edge out of $\mathbf{T}$
reaches $Y$, and

\item no proper possibly directed walk that contains a node in $\mathbf{Z}$
reaches $Y$, and

\item no proper definite-status non-causal walk that is not blocked by $\mathbf{Z}$
reaches $Y$.

\end{enumerate}
Condition 1.\@ is equivalent to Condition 1.\@ in \cref{lemma: alternative adjustment criterion};
since we need to verify that no such walk reaches $Y$, the reachability algorithm will need to continue walking possibly directed walks that do not start with an edge out of $\mathbf{T}$ even if they are blocked by $\mathbf{Z}$.
Condition 2.\@ is equivalent to Condition 2.\@ in \cref{lemma: alternative adjustment criterion};
since we need to verify that no such walk reaches $Y$, the reachability algorithm will need to continue walking possibly directed walks that start with an edge out of $\mathbf{T}$ even if they contain a node in $\mathbf{Z}$.
Condition 3.\@ is equivalent to the blocking Condition 3.\@ in \cref{lemma: alternative adjustment criterion};
since the (non-)existence of blocked non-causal paths does not appear in any of the conditions, the reachability algorithm can stop walking non-causal walks upon reaching a blocking node in $\mathbf{Z}$;
however, the blocking condition poses a problem as verifying whether a path is blocked requires a non-local check to verify that it is of definite-status.
In the following subsection, we show how to verify blocking in CPDAGs with a reachability algorithm while avoiding this non-local definite-status check.

\subsection{D-separation via a Reachability Algorithm for CPDAGs}
\label{app:blockingincpdags}

In this section, we show how to verify blocking in CPDAGs by a reachability algorithm without needing to discern the
non-local property of a walk being definite-status or not.
This is necessary to enable the use of reachability algorithms with local decision rules to verify the blocking condition on definite-status walks in the modified adjustment criterion.
We show this in 5 steps:
\begin{itemize}
    \item \cref{lemma: definite status suffices}: \nameref{lemma: definite status suffices}
    \item \cref{lemma: path iff walk}: \nameref{lemma: path iff walk}
    \item \cref{lemma: naive reachable works}: \nameref{lemma: naive reachable works}
    \item \cref{lemma: smarter reachable works}: \nameref{lemma: smarter reachable works}
    \item \cref{lemma: complicated reachable works too}: \nameref{lemma: complicated reachable works too}
\end{itemize}

The following lemma by \citet{henckel2019graphical} characterizes d-separation in a CPDAG 
in terms of definite-status paths.

\begin{Lemma}[Indefinite-status paths are irrelevant for d-separation in CPDAGs]
Consider node sets $\mathbf{T},\mathbf{Y}$ and $\mathbf{Z}$ in a CPDAG $\g$. Then $\mathbf{T}$ is d-separated from $\mathbf{Y}$ given $\mathbf{Z}$ in every DAG $\mathcal{D} \in [\g]$ if an only if every definite-status path from $\mathbf{T}$ to $\mathbf{Y}$ is blocked given $\mathbf{Z}$ in $\g$.
\label{lemma: definite status suffices}
\end{Lemma}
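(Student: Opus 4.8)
The plan is to prove both implications by contraposition, relying throughout on a single structural fact: \emph{the collider/non-collider status of every node on a definite-status path in $\g$ coincides with its status in every DAG $\mathcal{D}\in[\g]$}. This holds because any edge oriented in $\g$ is oriented identically in each $\mathcal{D}\in[\g]$, so a collider $U\to V\leftarrow W$ of $\g$ is a collider everywhere; and a definite non-collider is either forced by a compelled out-edge or is of the form $U\arrow V\arrow W$ with $U,W$ non-adjacent, in which case $V$ can never become a collider in any $\mathcal{D}$ since $U\to V\leftarrow W$ would be a v-structure absent from $\g$. I will also use that $\g$ and each $\mathcal{D}\in[\g]$ share skeleton and v-structures and that $\g$ is closed under Meek's orientation rules.

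For the direction ``blocking of all definite-status paths $\Rightarrow$ d-separation in every $\mathcal{D}$'', I argue the contrapositive: if some definite-status path $p$ from $\mathbf{T}$ to $\mathbf{Y}$ is open given $\mathbf{Z}$ in $\g$, then $p$ is open in every $\mathcal{D}\in[\g]$. By the status-invariance fact, each non-collider of $p$ stays a non-collider in $\mathcal{D}$ and, being off $\mathbf{Z}$, does not block; each collider $C$ of $p$ stays a collider in $\mathcal{D}$ and satisfies $\de(C,\g)\cap\mathbf{Z}\neq\emptyset$ via a fully directed path of $\g$ that persists, edge for edge, in $\mathcal{D}$, so $C$ remains active. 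Hence $p$ d-connects $\mathbf{T}$ and $\mathbf{Y}$ in some (indeed every) $\mathcal{D}$, and d-separation fails.

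For the converse, ``d-separation in every $\mathcal{D}$ $\Rightarrow$ blocking of all definite-status paths'', I again argue contrapositively: starting from a DAG $\mathcal{D}\in[\g]$ in which $\mathbf{T}$ and $\mathbf{Y}$ are d-connected given $\mathbf{Z}$, I must exhibit an open definite-status path in $\g$. I take a \emph{shortest} open path $p=(V_0,\dots,V_k)$ in $\mathcal{D}$ and claim it is definite-status in $\g$. If some internal $V_i$ were of indefinite status in $\g$, an inspection of the possible edge configurations shows---using closure under Meek's rule~1 to rule out patterns such as $V_{i-1}\to V_i\arrow V_{i+1}$ with $V_{i-1},V_{i+1}$ non-adjacent---that $V_{i-1}$ and $V_{i+1}$ must be adjacent in $\g$, hence in $\mathcal{D}$; one then shortcuts $V_i$ to obtain a strictly shorter path and argues it is still open, contradicting minimality. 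With $p$ definite-status, its non-colliders inherit ``$\notin\mathbf{Z}$'' from the openness of $p$ in $\mathcal{D}$.

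The main obstacle is the collider bookkeeping in this converse direction: a collider $C$ on $p$ is active in $\mathcal{D}$ through a directed path to $\mathbf{Z}$ that may traverse edges undirected in $\g$, so $\de(C,\g)\cap\mathbf{Z}\neq\emptyset$ is not immediate. The resolution exploits the adjacencies that Meek's rules force around such configurations: these adjacencies either supply a fully directed $\g$-path from $C$ into $\mathbf{Z}$ or provide an alternative, more direct definite-status path that is open in $\g$, for instance replacing a blocked collider triple by a shorter collider triple whose collider lies in $\mathbf{Z}$. Carefully checking that the shortcut and rerouting operations preserve openness while strictly decreasing length---so that the minimal-length choice is genuinely definite-status and open in $\g$---is where the real work lies; the remaining steps are direct consequences of the status-invariance fact and the definitions of blocking.
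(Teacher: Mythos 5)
The paper never proves this lemma itself --- it is imported from \citet{henckel2019graphical} --- so your attempt has to stand on its own; there is no in-paper argument to compare against. The parts of your proposal that work: the status-invariance fact is correct (directed edges of $\g$ persist in every $\mathcal{D}\in[\g]$, and a definite non-collider of the form $\arrow V\arrow$ with non-adjacent endpoints can never become a collider without creating a v-structure absent from $\g$), and together with $\de(C,\g)\subseteq\de(C,\mathcal{D})$ this gives a complete proof that an open definite-status path in $\g$ is open in every $\mathcal{D}\in[\g]$. (A harmless slip: your two ``contrapositives'' are interchanged --- your first argument is the contrapositive of ``d-separation in every $\mathcal{D}$ implies all definite-status paths blocked'' and your second is the contrapositive of its converse --- but jointly they do cover both implications.)

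The genuine gap is the direction you yourself defer to ``where the real work lies'', and the inductive framework you set up for it provably cannot be completed. Your plan selects a shortest open path $p$ in $\mathcal{D}$ and repairs defects by operations that \emph{strictly decrease length}, so it needs the invariant that a shortest open path in $\mathcal{D}$ which is definite-status in $\g$ is also open in $\g$. That invariant is false. Take the CPDAG $\g$ on $\{T,C,Y,Z\}$ with edges $T\to C$, $Y\to C$, $T\to Z$, $Y\to Z$, and $C\arrow Z$; this is a valid CPDAG, representing exactly the two DAGs obtained by orienting the undirected edge either way (same skeleton, same v-structures $T\to C\leftarrow Y$ and $T\to Z\leftarrow Y$). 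Let $\mathcal{D}$ be the member with $C\to Z$ and let $\mathbf{Z}=\{Z\}$. Then $p\colon T\to C\leftarrow Y$ is a shortest open path in $\mathcal{D}$ (the collider $C$ has descendant $Z$ in $\mathcal{D}$, and $T,Y$ are non-adjacent), and $p$ is definite-status in $\g$, yet $p$ is \emph{blocked} in $\g$ under the blocking definition you (and the paper) use, because $\de(C,\g)=\{C\}$: the activation of $C$ runs entirely through the edge that is undirected in $\g$. The lemma survives only because the \emph{other} length-two path $T\to Z\leftarrow Y$ is open in $\g$ --- a path of the same length, which no length-decreasing shortcut or rerouting applied to $p$ can ever produce, so your induction is stuck in this state; and this is not a corner case but the generic picture whenever a collider is activated in $\mathcal{D}$ through edges undirected in $\g$. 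What the known proofs do here is a \emph{length-preserving} rerouting: normalize the possibly directed path from $C$ to $\mathbf{Z}$ into an undirected prefix inside $C$'s chain component followed by a directed suffix, let $W$ be the last chain-component node on it, and use the CPDAG property that $A\to C$ and $C\arrow W'$ force $A\to W'$ (iterated along the chain component) to replace the collider triple $A\to C\leftarrow B$ on the path by $A\to W\leftarrow B$, which satisfies $\de(W,\g)\cap\mathbf{Z}\neq\emptyset$; one must then restore a definite-status path from the resulting walk and induct on a finer measure than path length. None of this is in your proposal, so the hard direction of the lemma remains unproven.
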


Checking whether a collider is open on a definite-status path requires checking a non-local condition, as we need to consider all descendants of the collider. In DAGs we can circumvent that by considering walks instead, as it is possible to show that an open path exists if and only if an open walk exists. We now show that a similar result holds for CPDAGs, connecting definite-status paths and connecting definite-status walks. 

\begin{Lemma}[Existence of open definite-status walks or paths coincides in CPDAGs]
Consider node sets $\mathbf{T},\mathbf{Y}$ and $\mathbf{Z}$ in a CPDAG $\g$. Then there exists a definite-status path from $\mathbf{T}$ to $\mathbf{Y}$ that is open given $\mathbf{Z}$ if and only if there exists a definite-status walk from $\mathbf{T}$ to $\mathbf{Y}$ that is open given $\mathbf{Z}$.
\label{lemma: path iff walk}
\end{Lemma}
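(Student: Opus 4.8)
The plan is to prove the two implications by different means. The key subtlety making neither direction immediate is that the blocking rule for \emph{paths} (a collider is open iff it has a descendant in $\mathbf{Z}$) differs from the rule for \emph{walks} (a collider is open iff the collider itself lies in $\mathbf{Z}$), so even though every path is trivially a walk, an \emph{open} path need not be an \emph{open} walk.

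For the forward direction (open definite-status path $\Rightarrow$ open definite-status walk) I would take an open definite-status path $p$ from $\mathbf{T}$ to $\mathbf{Y}$ and convert it into a walk by \emph{activating} its colliders. Non-colliders block a path iff they block a walk, and a collider $C\in\mathbf{Z}$ is open under both rules, so the only problematic nodes are colliders $C\notin\mathbf{Z}$. Since $p$ is open, each such $C$ has a directed path into $\mathbf{Z}$; I choose the one reaching the closest node $Z_C\in\mathbf{Z}$, so that its interior avoids $\mathbf{Z}$. I then splice into $p$, at $C$, the detour that runs down this directed path and back up, $C\to\cdots\to Z_C\leftarrow\cdots\leftarrow C$. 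A routine inspection of local edge configurations then shows that on the resulting walk every interior detour node is a definite-status non-collider outside $\mathbf{Z}$, that $Z_C$ is a collider lying in $\mathbf{Z}$, and that the two occurrences of $C$ are definite-status non-colliders outside $\mathbf{Z}$; the original non-colliders of $p$ are untouched. Hence the walk is definite-status and open given $\mathbf{Z}$.

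For the backward direction (open definite-status walk $\Rightarrow$ open definite-status path) I would leave $\g$ and pass to any DAG $\mathcal{D}\in[\g]$. The crucial observation is that along a definite-status walk $w$ in $\g$, every collider remains a collider and every definite-status non-collider remains a non-collider in $\mathcal{D}$: colliders and case (a)/(b) non-colliders keep their orienting arrow, while a case (c) non-collider $U\arrow V\arrow W$ with $U,W$ nonadjacent cannot become a collider in $\mathcal{D}$, as that would introduce a v-structure absent from $\g$, contradicting that $\mathcal{D}$ and $\g$ share the same v-structures. Since $\mathcal{D}$ and $\g$ have the same skeleton, $w$ is a walk in $\mathcal{D}$ carrying identical collider/non-collider labels and is therefore open given $\mathbf{Z}$ in the DAG $\mathcal{D}$. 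By the walk/path equivalence for DAGs, $\mathbf{T}$ and $\mathbf{Y}$ are d-connected given $\mathbf{Z}$ in $\mathcal{D}$, and the contrapositive of \cref{lemma: definite status suffices} then yields an open definite-status path from $\mathbf{T}$ to $\mathbf{Y}$ in $\g$.

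I expect the bookkeeping around definite status to be the main obstacle in both directions: verifying that the collider-activation detour introduces only definite-status nodes (in particular no indefinite $U\to V\arrow W$ configuration), and establishing the status-preservation claim into every $\mathcal{D}\in[\g]$ via the shared-v-structure property. Once these two structural facts are in hand, the reductions to the known DAG walk/path equivalence and to \cref{lemma: definite status suffices} are routine.
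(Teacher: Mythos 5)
Your proof is correct, and while the forward direction coincides with the paper's argument, your converse direction takes a genuinely different route. For the direction from open path to open walk, you do exactly what the paper does: activate each collider $C \notin \mathbf{Z}$ by splicing in a detour $C \to \cdots \to Z_C \leftarrow \cdots \leftarrow C$ along a directed path whose interior avoids $\mathbf{Z}$, and check that all new nodes and both occurrences of $C$ are definite-status and open; this matches the paper's concatenation $p(T,C_1)\oplus q_1(C_1,Z_1)\oplus q_1(Z_1,C_1)\oplus\cdots$ essentially verbatim. For the converse, however, the paper stays inside the CPDAG: it iteratively contracts repeated nodes of the walk and, where a contraction destroys definite status, performs CPDAG-specific shortcutting (using structural facts such as $A \to I \arrow B$ forcing an edge $A \to B$), maintaining the invariant that non-colliders avoid $\mathbf{Z}$ and colliders have descendants in $\mathbf{Z}$ until a definite-status open path remains. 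You instead reduce to the DAG case: you observe that directed edges of $\g$ persist in every $\mathcal{D}\in[\g]$, so colliders on the walk stay colliders, and a case-(c) non-collider $U \arrow V \arrow W$ with $U,W$ nonadjacent cannot become a collider in $\mathcal{D}$ without creating a v-structure absent from $\g$; hence the walk is open in $\mathcal{D}$, the standard DAG walk/path equivalence gives d-connection in $\mathcal{D}$, and the contrapositive of \cref{lemma: definite status suffices} returns an open definite-status path in $\g$. Your route is more modular and sidesteps the delicate shortcut case analysis, which is the hardest part of the paper's proof, at the price of leaning on two external results (\cref{lemma: definite status suffices}, cited from prior work, and the DAG walk/path equivalence, which the paper also treats as known); the paper's argument is self-contained and fully constructive within the CPDAG, which aligns more closely with the reachability-algorithm machinery it is meant to support.
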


\begin{proof}
    Let $p$ be a definite-status path from some $T\in \mathbf{T}$ to some $Y\in \mathbf{Y}$ that is open given $\mathbf{Z}$ in $\g$. Let $C_1,\dots,C_k$ be all colliders on $p$. By assumption there exist directed paths $q_1,\dots,q_k$ from $C_i$ to some node $Z_i \in \mathbf{Z}$ that we choose to not contain any other node in $\mathbf{Z}$. Then $w = p(T,C_1) \oplus q_1(C_1,Z_1) \oplus q_1(Z_1,C_1) \oplus \dots \oplus p(C_k,Y)$ is a definite-status walk from $T$ to $Y$ that is open given $\mathbf{Z}$.

    For the converse direction consider a walk $w$ and let $I$ be the node closest to $T$ on $w$ that appears twice $w$, i.e., $w=w(T,I) \oplus w(I,I) \oplus w(I,Y)$. Consider the walk $w'=w(T,I) \oplus w(I,Y)$. We will now show that either $w'$ itself is a definite-status walk from $T$ to $Y$ such that no no-collider is in $\mathbf{Z}$ and every collider has a descendant in $\mathbf{Z}$ or that we can construct a shortcut walk that is. Since $w'$ contains at least one repeating node less than $w$ we can then iterate this contraction to obtain a definite-status path open given $\mathbf{Z}$. Every node on $w'$ inherits their definite status from the path $w$ except for $I$. Suppose $I \in \mathbf{Z}$, then $I$ must be a collider whenever it appears on $w$ and therefore it also a definite-status collider on $w'$ which is therefore of the claimed form. Suppose now that $I \notin \mathbf{Z}$. Then it must be a non-collider whenever it appears on $w$. There are three cases to consider: a) $I$ is a collider on, $w'$ b) $I$ is a definite-status non-collider on $w'$ and c) $I$ is not of definite status on $w'$. In case a) $w$ must have been of the form $T \cdots \rightarrow I \rightarrow \cdots \leftarrow I \leftarrow \cdots Y$. Therefore $w$ must contain a collider that is a descendant of $I$. Therefore $\de(I,\g) \cap \mathbf{Z} \neq \emptyset$. In case b) $w'$ trivially fulfills the required conditions. 
    
    In case c) we again consider three subcases: a) $A \rightarrow I - B$, b) $A - I \leftarrow B $ and c) $A - I - B$. In all three cases $A$ and $B$ are definite-status non-colliders on $w'$ as they inherited their status from $w$ and they cannot have been colliders. This also implies that $A,B \notin \mathbf{Z}$. In case a) there must also exist an edge $A \rightarrow B$ and we can replace $w'$ with $w'(T,A) \oplus w'(B,Y)$. The node $A$ is a definite non-collider on this new walk. If $B$ is also we are done. If it is not, i.e., we have the structure $A \rightarrow B - B'$ we can replace $w'$ again by repeating the argument we just made and taking the shortcut to $B'$. We can do so iteratively, until we encounter either a definite-status non-collider or $Y$ itself. Either way we obtain a definite-status walk such that every non-collider is not in $\mathbf{Z}$ and every collider has a descendant that is. Case b) follows by the exact same argument reversing the roles of $A$ and $B$. In case c) we must have an edge $A - B$. Again we replace $w'$ by taking the shortcut. If $A$ is not of definite status on the new walk, i.e, it contains the segment $A' - A - B$, there must exist an edge $A' - C$. We can again iteratively take shortcuts until we either obtain an $A'$ that is a definite-status non-collider or arrive at $T$. If $B$ is not of definite status we repeat the same procedure untile we arrive at definite-status $B'$ or $Y$. In all cases, we obtain a walk $w'$ of definite status that is open given $\mathbf{Z}$.
\end{proof}

Based on \cref{lemma: path iff walk} we can propose a reachability d-separation algorithm that additionally tracks and discerns whether a walk in a CPDAG is definite-status or not. The algorithm is a gensearch algorithm \citep[][Algorithm 6]{wienobst2022finding} using the rule table given in \cref{table:reachable naive} to traverse the graph. We now prove that this table is correct for d-separation in CPDAGs.

\begin{Lemma}[Reachability algorithm with non-local decision rules for d-connectedness in CPDAGs.]
    Consider a node set $\mathbf{T}$ in a CPDAG $\g$ and let $\mathbf{Z}$ be a node set in $\g$.
    The output of a reachability algorithm (gensearch by \citet{wienobst2022finding}) with the rule table given in \cref{table:reachable naive} is the set of all nodes $Y \in \mathbf{V}$ that are d-connected with $\mathbf{T}$ given $\mathbf{Z}$ in $\g$.
    \begin{table}[h]
    \centering
    \begin{tabular}{c|c|c}
         case& continue to $W$ & yield $W$ \\ \midrule
         init $T - W$ & always &  always \\
         init $T \rightarrow W$ & always & always\\
         init $T \leftarrow W$ & $W\notin \mathbf{Z}$ & always\\
         $ - V - W$ & $V \notin \mathbf{Z}$ and $V$ of definite status &  $V \notin \mathbf{Z}$ and $V$ of definite status\\
         $ - V \rightarrow W$ & $V \notin \mathbf{Z}$ & $V \notin \mathbf{Z}$\\
         $ - V \leftarrow W$ & never &  never \\
         $ \rightarrow V - W$ & never & never \\
         $ \rightarrow V \rightarrow W$ & $V \notin \mathbf{Z}$  & $V \notin \mathbf{Z}$ \\
         $ \rightarrow V \leftarrow W$ & $V \in \mathbf{Z}$ & $V \in \mathbf{Z}$ \\
         $ \leftarrow V - W$ & $V \notin \mathbf{Z}$ & $V \notin \mathbf{Z}$ \\
         $ \leftarrow V \rightarrow W$ & $V \notin \mathbf{Z}$ & $V \notin \mathbf{Z}$ \\
         $ \leftarrow V \leftarrow W$ & $V \notin \mathbf{Z}$ & $V \notin \mathbf{Z}$ \\
    \end{tabular}
    \caption{Rule table for gensearch algorithm \citep{wienobst2022finding} to compute all d-connected nodes in a CPDAG.}
    \label{table:reachable naive}
\end{table}

\label{lemma: naive reachable works}
\end{Lemma}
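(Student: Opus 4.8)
The plan is to reduce the claim to a statement purely about open definite-status \emph{walks} and then verify, by an exhaustive case analysis, that the rule table in \cref{table:reachable naive} permits exactly the walk-steps that keep a walk both definite-status and open. First I would chain the two preceding lemmas: by \cref{lemma: definite status suffices}, $Y$ is d-connected to $\mathbf{T}$ given $\mathbf{Z}$ in $\g$ if and only if some definite-status path from $\mathbf{T}$ to $Y$ is open given $\mathbf{Z}$, and by \cref{lemma: path iff walk} this holds if and only if some definite-status \emph{walk} from $\mathbf{T}$ to $Y$ is open given $\mathbf{Z}$. Hence it suffices to show that the gensearch algorithm of \citet{wienobst2022finding} driven by \cref{table:reachable naive} yields precisely the set of nodes reachable from $\mathbf{T}$ by an open definite-status walk, where ``yield $W$'' records $W$ as reached and ``continue to $W$'' recurses so that $W$'s own departures are later adjudicated by the non-init rows.

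The core of the argument is the verification of the rule table. For each triple $U\,a\,V\,b\,W$ --- where $a$ is the edge by which the walk arrives at the current node $V$ and $b$ is the candidate edge from $V$ to $W$ --- I would classify $V$ using the collider and definite-non-collider definitions of \cref{supp:preliminaries}: $V$ is a collider exactly in the case $\to V \leftarrow W$ (arrowheads on both sides); $V$ is a definite non-collider exactly when there is a tail at $V$ on either side (arrival $\leftarrow V$ or departure $V\to W$) or when both incident edges are undirected and $U,W$ are non-adjacent; and $V$ is of indefinite status in the remaining cases $\to V - W$, $-V\leftarrow W$, and $-V-W$ with $U,W$ adjacent. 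Matching this against the walk-blocking rule (a non-collider transmits iff it lies outside $\mathbf{Z}$, a collider transmits iff it lies inside $\mathbf{Z}$) reproduces the table line by line: the two indefinite-status transitions become \textsf{never}, the collider transition $\to V\leftarrow W$ becomes $V\in\mathbf{Z}$, and every definite-non-collider transition becomes $V\notin\mathbf{Z}$, with the extra adjacency test guarding the $-V-W$ row so that only its definite-status instances are taken. The init rows simply launch the walk into each neighbour of $T$: such a neighbour is always d-connected to $T$ and hence always yielded, and continuation is always permitted except when the neighbour is reached along an edge oriented into $T$ (the $T\leftarrow W$ row), in which case it is necessarily a definite non-collider and transmits only if it lies outside $\mathbf{Z}$. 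This case analysis gives both directions: an induction on walk length shows every step the algorithm takes extends an open definite-status walk whose new endpoint is the yielded node (soundness), while any open definite-status walk decomposes into permitted steps, so each of its nodes is reached (completeness). Termination and the linear visit bound then follow from the generic correctness of gensearch, each state being expanded at most once.

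The main obstacle I anticipate is the $-V-W$ row, whose decision is \emph{non-local}: whether $V$ is of definite status depends on the adjacency of the predecessor $U$ and the successor $W$, not on $V$ and its arrival type alone. I would treat this row as an augmented transition that consults the adjacency of $U$ and $W$ at the moment the step is taken, and argue completeness is unaffected because \cref{lemma: path iff walk} guarantees that every d-connection is witnessed by a walk all of whose internal nodes are already of definite status, so each witnessing walk is realized by a sequence of permitted (possibly non-local) steps. The only other point needing care is that the collider condition is $V\in\mathbf{Z}$ --- the walk-based blocking rule --- rather than the path-based ``some descendant of $V$ lies in $\mathbf{Z}$'' rule; this substitution is exactly what \cref{lemma: path iff walk} licenses, since that lemma replaces the descendant-of-collider detour by an explicit excursion from the collider through a node of $\mathbf{Z}$ and back, which the table traverses as two ordinary directed steps. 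The non-locality of the $-V-W$ row is precisely what the subsequent lemmas remove, so here I only need it to be correct, not local.
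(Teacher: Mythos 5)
Your proposal is correct and follows essentially the same route as the paper's proof: reduce d-connection to the existence of an open definite-status walk via \cref{lemma: path iff walk}, verify that the rule table permits exactly the steps extending such walks, and conclude with a two-directional induction (on algorithm steps for soundness, on shortest-walk length for completeness). The only difference is presentational --- you spell out the collider/definite-non-collider/indefinite-status case analysis that the paper compresses into a single sentence, and your extra appeal to \cref{lemma: definite status suffices} is harmless since d-connection in a CPDAG is already defined via definite-status paths.
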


\begin{proof}
    Every node adjacent to some $T\in \mathbf{T}$ is d-connected with $\mathbf{T}$ given $\mathbf{Z}$. Therefore, the initialization step of the reachability algorithm is correct. Now suppose that if we arrive at a node $V$ in the reachability algorithm that there exists some definite-status walk $w$ from some $T \in \mathbf{T}$ to $V$ that is open $\mathbf{Z}$ and consider a proper step of the reachability algorithm continuing from $V$. Based on the rule table we continue and yield $W$ precisely when appending the edge between $V$ and $W$ to $w$ results in a definite-status d-connecting walk $w'$ from $T$ to $W$, i.e., if $V$ is definite non-collider on $w'$ and $V \notin \mathbf{Z}$ or if $V$ is a collider on $w'$ and $V \in \mathbf{Z}$. By induction it follows that for every reachable node $Y$ there exists a definite-status d-connecting walk from $T$ which by \cref{lemma: path iff walk} suffices to conclude that $\mathbf{T}$ and $Y$ are d-connected given $\mathbf{Z}$.

    We now show that if a node $Y$ is d-connected with some $T \in \mathbf{T}$ given $\mathbf{Z}$ then it will be returned as reachable. By \cref{lemma: path iff walk} there exist a definite-status d-connecting walk from $T$ to $Y$. We now make an induction argument on the length $l$ of the shortest such walk. If $l=1$, the algorithm clearly returns $Y$, so suppose the algorithm returns all nodes with shortest paths of length $l=k-1$ and suppose for $Y$ the shortest walk $w$ is of length $l=k$. Let $w'$ be the walk we obtain by removing the final edge from $w$. It's a walk of length $l$ that is d-connecting given $\mathbf{Z}$ and therefore this holds for it's end node $Y'$. By the induction hypothesis $Y'$ is reachable. Further, since $w$ is a definite-status d-connecting walk we can see that by applying the rule table to $Y'$ the algorithm will also return $Y$.
\end{proof}

From an implementation perspective, it is problematic that in the fourth row of \cref{table:reachable naive} we need to check whether $V$ is of definite status, since this requires storing adjacent nodes for previously visited nodes, that is, this rule is non-local. We now show that we can simply drop this check without modifying the output of the algorithm and in this way obtain a local algorithm.

\begin{Lemma}[We may treat some indefinite-status walks as definite-status]
    Consider a node set $\mathbf{T}$ in a CPDAG $\g$ and let $\mathbf{Z}$ be a node set in $\g$. Suppose we modify the rule table in \cref{table:reachable naive} by treating the $\arrow V\arrow W$ case as if $V$ were of definite status (irrespective of whether it actually is).
    The resulting reachability algorithm has the same output as the original algorithm.
    \label{lemma: smarter reachable works}
\end{Lemma}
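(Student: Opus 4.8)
The plan is to prove the two inclusions between the output of the original algorithm (\cref{table:reachable naive}) and the modified one. One inclusion is immediate: the modified table continues and yields in every case the original does, differing \emph{only} by additionally continuing in the $\arrow V \arrow W$ case when $V \notin \mathbf{Z}$ but $V$ is not of definite status. Hence every node returned by the original algorithm is returned by the modified one, and it remains to establish the reverse inclusion: every node the modified algorithm yields is in fact d-connected to $\mathbf{T}$ given $\mathbf{Z}$. By \cref{lemma: naive reachable works} such a node is then also returned by the original algorithm, giving equality of outputs.

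To this end I would first characterise the walks the modified algorithm traverses. Arguing exactly as in the proof of \cref{lemma: naive reachable works}, a node $Y$ is yielded precisely when there is a walk $w$ from some $T\in\mathbf{T}$ to $Y$ that is locally admissible under the modified rules. The only indefinite-status nodes that can occur on such a $w$ are of the form $U \arrow V \arrow W$ with $U$ and $W$ adjacent: the remaining indefinite configurations $\rightarrow V \arrow W$ and $\arrow V \leftarrow W$ carry the instruction never to continue in \emph{both} tables and thus never appear. Moreover every such $V$ satisfies $V \notin \mathbf{Z}$, and all genuine non-colliders of $w$ lie outside $\mathbf{Z}$ while all its colliders lie in $\mathbf{Z}$, so $w$ is open in every respect except that some undirected interior nodes fail the definite-status test.

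The core step is to convert $w$ into a genuine definite-status walk $w^\ast$ open given $\mathbf{Z}$ with the same endpoints. I would decompose $w$ into maximal runs of consecutive undirected edges. Because consecutive undirected edges keep all run-nodes in one chain component and a CPDAG contains no partially directed cycle, every edge among run-nodes — in particular the edge $U\arrow W$ forced by adjacency — is undirected, so each run lies inside a single chain component. The never-continue rules further guarantee that each run is entered and left through a directed edge whose tail sits at the boundary run-node (or through an endpoint of $w$), so every boundary node is a definite non-collider or endpoint regardless of how the run is rewritten, and every node of the run lies outside $\mathbf{Z}$. Within the chain component I would then replace the run — a walk between its two boundary nodes through non-$\mathbf{Z}$ vertices — by a \emph{shortest} such undirected path. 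A shortest path is chordless, so the two path-neighbours of each interior node are non-adjacent, making every interior node a definite non-collider outside $\mathbf{Z}$; colliders occur only at the directed junctions between runs and are left untouched, as are the directed edges themselves. Performing this for every run yields a definite-status walk $w^\ast$ from $T$ to $Y$ open given $\mathbf{Z}$. By \cref{lemma: path iff walk} a definite-status open path then exists, so $T$ and $Y$ are d-connected given $\mathbf{Z}$, and by \cref{lemma: naive reachable works} the original algorithm returns $Y$, closing the reverse inclusion.

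I expect the main obstacle to be the bookkeeping in the rewriting step: one must verify that replacing an undirected run can never silently turn a non-collider into an (open-blocking) collider nor create a fresh indefinite node. The never-continue rules do the essential work here, forcing every directed edge incident to a run to point \emph{out} of its boundary nodes; this is precisely what guarantees that even degenerate runs — for instance one that returns to its starting node — collapse to a definite non-collider rather than to a collider, and it is the point that will most need care.
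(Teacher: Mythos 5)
Your proposal is correct, and it takes a genuinely different route from the paper's own proof. The paper argues operationally on the two executions: it considers the first node $V$ at which the algorithms disagree --- necessarily a configuration $V' \arrow V \arrow W$ with $V \notin \mathbf{Z}$ and $V'$ adjacent to $W$, where the absence of partially directed cycles in a CPDAG forces the edge between $V'$ and $W$ to be undirected --- and then backtracks along the walk: either $V'$ was entered via a directed edge with tail at $V'$, in which case the row $\leftarrow V' \arrow W$ of \cref{table:reachable naive} lets the original algorithm reach $W$ directly, or $V'$ was entered via an undirected edge, in which case the argument repeats with the predecessor of $V'$, terminating at worst at $T$ itself where the init rule applies; in every case $W$ is reached by the original algorithm in a state (arrived by an undirected edge) with at least the continuation power of the modified algorithm's state, which closes the induction. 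You instead argue semantically: monotonicity of the search output in the rule table gives one inclusion, and for the converse you characterize the walks admissible under the modified table, rewrite every maximal run of undirected edges into a chordless shortest path inside its chain component (invoking the same no-partially-directed-cycle property), and so obtain a genuine definite-status walk open given $\mathbf{Z}$; then \cref{lemma: path iff walk} and \cref{lemma: naive reachable works} finish the argument. The paper's approach buys brevity and locality --- no global walk surgery is needed. Your approach buys modularity and a slightly stronger, reusable conclusion: you re-establish that the modified table is itself sound for d-connection, reducing the lemma entirely to previously proven results, and your rewriting step mirrors the shortcut technique the paper already uses to prove \cref{lemma: path iff walk}. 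The points you flag as delicate --- that the two never-continue rows force every run to be entered and left through directed edges whose tails sit at the boundary nodes, so those nodes remain definite non-colliders however the run is rewritten, and that degenerate runs collapse to non-colliders rather than colliders --- are exactly the bookkeeping required, and you resolve them correctly.
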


\begin{proof}
    The two algorithms agree locally in all cases except in the case $-V-W$, with $V \notin \mathbf{Z}$ not of definite status, where the original algorithm does not continue to $W$ and the modified algorithm does and also considers all vertices such that $W - W'$ or $W \rightarrow W'$ if $W \notin \mathbf{Z}$. Suppose that starting from some $T \in \mathbf{T}$, $V$ is the first node where the two algorithms disagree, i.e., there exists a $V'$ reachable by both algorithms such that $V' - V - W$, $V \notin \mathbf{Z}$ and $V' - W$. Since $V'$ is reachable with the original algorithm and we continue onto an undirected edge, $V' \notin \mathbf{Z}$ and we arrive at $V'$ via an edge of the form $\leftarrow V'$ or $-V$. The original algorithm will therefore reach $W$, either via $\leftarrow V'-W$ or $-V'-W$ unless in the latter case $V'$ is not of definite status. If the latter is the case we can repeat the argument to obtain a new $V'$ until we either arrive at a $V'$ that is of definite status or the walk $T - W$. In either case, $W$ is reachable and if $W \notin \mathbf{Z}$ we will consider all vertices such that $W - W'$ or $W \rightarrow W'$.
\end{proof}

Finally, we will use a d-separation reachability algorithm within our new reachability algorithm to verify adjustment validity. Here, we also need to verify whether $\mathbf{Z}$ contains possibly causal nodes, that is, whether there exists a proper possibly directed path from $\mathbf{T}$ to $\mathbf{Y}$ that contains a node in $\mathbf{Z}$ (see \cref{lemma: alternative adjustment criterion}). To do so, we may have to continue along segments of the form $\rightarrow N \arrow$ if $N \notin \mathbf{Z}$ which are of indefinite status.
We now show that we can further modify the d-separation rule tables to accommodate this without changing the output of the d-separation reachability algorithm.
This will allow us to run all three checks required in the validity algorithm simultaneously.
If we were only interested in d-separation, the rule table from \cref{lemma: smarter reachable works} is more computationally efficient.

\begin{Lemma}[We may treat some more indefinite-status walks as definite-status]
    Consider a node set $\mathbf{T}$ in a CPDAG $\g$ and let $\mathbf{Z}$ be a node set in $\g$. Suppose we modify the rule table in \cref{table:reachable naive} by treating the $\arrow V\arrow W$ case as if $V$ were of definite status and the $\rightarrow V \arrow$ case by proceeding if $V \notin \mathbf{Z}$ (analogous to the rule for definite-status non-colliders).
    The resulting reachability algorithm has the same output as the original algorithm.
    \label{lemma: complicated reachable works too}
\end{Lemma}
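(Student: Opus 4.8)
The plan is to build on \cref{lemma: smarter reachable works}: since that result already shows the singly-modified (``smarter'') rule table reproduces the output of the original table in \cref{table:reachable naive}, it suffices to compare the doubly-modified table against the smarter one and show they yield the same set of nodes. The two differ in exactly one entry, the $\rightarrow V \arrow W$ case, which changes from ``never'' to ``proceed/yield iff $V \notin \mathbf{Z}$''; all other entries (including the $\arrow V \arrow W$ entry, which both modify identically) agree. Consequently the doubly-modified algorithm can only reach a superset of the nodes reached by the smarter algorithm, and the whole task reduces to showing it reaches no new node.

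Mirroring the proof of \cref{lemma: smarter reachable works}, I would consider the first point at which the two algorithms diverge. Since all other rule-table entries agree, the divergence must occur at a $\rightarrow V \arrow W$ continuation: the smarter algorithm sits at a node $V$ reached through an arrowhead $U \rightarrow V$ with $V \notin \mathbf{Z}$, refuses to continue along $V \arrow W$, whereas the doubly-modified algorithm proceeds to $W$; here $U$, $V$, and everything preceding $V$ are reached identically by both. First I would record that taking the step $U \rightarrow V$ forces $U \notin \mathbf{Z}$, since every rule-table entry with an outgoing directed edge requires the current node to lie outside $\mathbf{Z}$ and the initial node $T$ lies outside $\mathbf{Z}$ as well. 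The key structural step is then to invoke the orientation rules characterizing CPDAGs \citep{meek1995causal}: from $U \rightarrow V \arrow W$ with $U$ and $W$ non-adjacent, one would be forced to orient $V \rightarrow W$, contradicting $V \arrow W$; hence $U$ and $W$ are adjacent. A second application excludes $W \rightarrow U$, because $W \rightarrow U \rightarrow V$ together with $W \arrow V$ would force $W \rightarrow V$. Thus the edge between $U$ and $W$ has a tail at $U$, i.e.\ it is either $U \rightarrow W$ or $U \arrow W$.

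With this adjacency in hand I would show $W$ is already reachable by the smarter algorithm. If $U \rightarrow W$, the smarter algorithm continues directly from $U$ to $W$ along the outgoing directed edge, permissible because $U \notin \mathbf{Z}$. If $U \arrow W$, then whenever $U$ was reached through a tail, through an undirected edge, or as the initial node, the smarter algorithm continues from $U$ to $W$ along $U \arrow W$ (using that it treats $U$ as definite-status in the undirected--undirected case and that $U \notin \mathbf{Z}$). The only remaining possibility is that $U$ was itself reached through an arrowhead $U' \rightarrow U$; but then $U' \rightarrow U \arrow W$ is a configuration of the same type, so I would repeat the argument verbatim with $(U',U)$ in place of $(U,V)$, again obtaining $U' \notin \mathbf{Z}$ and a tail at $U'$ on the edge between $U'$ and $W$. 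Each repetition steps one node back along a directed sub-walk $\cdots \rightarrow U' \rightarrow U \rightarrow V$, all of whose nodes precede $V$ and are therefore reached identically by the smarter algorithm, so the recursion is finite and terminates either at a node reached through a non-arrowhead (where the $\leftarrow V \arrow W$ or $\arrow V \arrow W$ rule lets the smarter algorithm continue to $W$) or at the initial node $T$ (where the init rules reach $W$ irrespective of whether the edge between $T$ and $W$ is directed or undirected). In every case $W$ lies in the smarter algorithm's output, contradicting the assumed divergence.

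The main obstacle I anticipate is this structural--recursive core: applying the CPDAG orientation rules correctly at each level to guarantee that the node adjacent to $W$ always has a tail toward $W$ and lies outside $\mathbf{Z}$, and checking that the backward recursion along the directed sub-walk always terminates with a rule the smarter (hence original) algorithm is permitted to fire. Combining the reduction to the smarter table, the one-sided inclusion, and this no-new-node argument then gives that the doubly-modified algorithm has the same output as the original, as claimed.
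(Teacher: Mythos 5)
Your overall route is the paper's: reduce to the singly-modified (``smarter'') table via \cref{lemma: smarter reachable works}, observe that the doubly-modified table is strictly more permissive so its output can only grow, locate the first use of the extra $\rightarrow V \arrow W$ rule, and use CPDAG structural properties to show the smarter algorithm reaches $W$ anyway via an edge from the predecessor $U$. Your Meek-rule derivation of the $U$--$W$ adjacency and of the tail at $U$ is correct, and in fact more explicit than the paper, which simply asserts $V' \rightarrow W$. (Your residual case $U \arrow W$ is vacuous: together with $U \rightarrow V \arrow W$ it would form the partially directed cycle $U \rightarrow V \arrow W \arrow U$, which cannot occur in a CPDAG since CPDAGs are chain graphs; this is why the paper can assert the directed edge outright, and why your backward recursion is never actually invoked.)

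There is, however, a genuine gap at the end: showing that $W$ itself lies in the smarter algorithm's output does not yet contradict a divergence in \emph{outputs}. The two algorithms reach $W$ by different edge types --- the doubly-modified one via the undirected edge $V \arrow W$, the smarter one via the arrowhead $U \rightarrow W$ --- and the continuation rules from $W$ depend on the arrival type. In particular, if $W \notin \mathbf{Z}$ and an edge $W \arrow W'$ exists, the doubly-modified algorithm continues from its $\arrow W$ arrival to $W'$, while the smarter algorithm cannot take that step from its $\rightarrow W$ arrival (the row $\rightarrow W \arrow W'$ is exactly the forbidden one). Hence a node witnessing the output difference may lie strictly beyond $W$, and your ``first divergence'' argument does not restart cleanly, because the premise that both algorithms have so far behaved identically (including arrival modes) now fails. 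The repair is to iterate your own structural step forward along the entire undirected segment traversed by the doubly-modified walk: from $U \rightarrow W \arrow W'$ one gets $U \rightarrow W'$, then $U \rightarrow W''$, and so on, so the smarter algorithm reaches every node of that segment directly from $U$ (permissible since $U \notin \mathbf{Z}$ or $U$ is an initial node); and once the doubly-modified walk exits the segment via a directed edge $X \rightarrow Z$ with $X \notin \mathbf{Z}$, both algorithms are in the same arrival state $\rightarrow Z$ and agree from there on. This forward iteration is the content --- stated informally --- of the final sentences of the paper's proof, and it is the piece your write-up is missing.
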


\begin{proof}
We have already established in \cref{lemma: smarter reachable works}, that we can ignore the definite-status check without modifying the output so consider an algorithm based on this rule table and compare it to an algorithm with the additional rule modification stated in the lemma. The two algorithms agree locally in all cases except in the case $\rightarrow V-W$, with $V \notin \mathbf{Z}$, where the original algorithm does not continue to $W$ and the modified algorithm does and also considers all vertices such that $W - W'$ or $W \rightarrow W'$ if $W \notin \mathbf{Z}$. Suppose that starting from some $T \in \mathbf{T}$, $V$ is the first node where the two algorithms disagree, i.e., there exists a $V'$ reachable by both algorithms such that $V' \rightarrow V - W$, $V \notin \mathbf{Z}$ and $V' \rightarrow W$. Since $V'$ is reachable with the original algorithm and we continue onto an undirected edge, $V' \notin \mathbf{Z}$ and we arrive at $V'$ via an edge of the form $\leftarrow V'$ or $-V$. The original (and the modified) algorithm will therefore reach $W$, either via $\leftarrow V'\rightarrow W$ or $-V'\rightarrow W$. In either case, $W$ is reachable and if $W \notin \mathbf{Z}$ we will consider all vertices such that $W - W'$ or $W \rightarrow W'$ and if $W \in \mathbf{Z}$ we will consider all vertices such that $W \leftarrow W'$. This means $W'$ is reachable either way and we will in fact move onto a larger number of the adjacent nodes of $W$, regardless. The extra check we make in the modified algorithm therefore does not modify the output. 
\end{proof}

\subsection{Walk-Status in Reachability Algorithms}
\label{app:walkstatus}

In addition to the result from \cref{app:modifiedcriterion}
and
\cref{app:blockingincpdags}, we require one more idea in order to be able to verify the adjustment criterion with a reachability algorithm: when verifying the adjustment criterion a walk may at first not violate the adjustment criterion but as we append edges to it it may become a walk whose existence violates the adjustment criterion. For example, a directed walk starting from $\mathbf{T}$ does not violate the adjustment criterion until it either encounters a node in $\mathbf{Z}$ or turns into an open definite-status non-causal walk. In order to track such walks, we need to carry forward information about the current walk's status when traversing the graph;
specifically, we require that a quinary walk status be propagated forward.
Knowing a walk's status allows us to use more complex local rules about when to continue a walk
(for example, only stopping on a blocked walk when the walk is non-causal)
and assigning different tags to a node depending on the status of the walk with which we reached it;
two examples of this are
a)
adding a non-amenable tag to nodes we can reach with a possibly directed walk that begins with an undirected edge,
and
b)
adding a not-validly-adjusted-for tag to nodes we can reach with a possibly directed walk that contains a node in $\mathbf{Z}$, since this is a walk that contains a possibly causal node.

Assume we start the algorithm in $\mathbf{T}$ given some $\mathbf{Z}$,
by construction we never walk back into $\mathbf{T}$, that is, all walks are proper walks.
Also, we never visit the same node via the same edge on a walk of the same type twice,
such that our algorithm has runtime guarantee $O(p+m)$ analogous to the Bayes-Ball and gensearch algorithm.
The walk status is quinary and one of the following:
\begin{description}

\item[$\walkcausalopen$] – These are possibly directed walks that started with an edge pointing out of $T$ are not blocked by $\mathbf{Z}$.
Reaching a node $Y$ by a $\walkcausalopen$ walk does not tell us anything about whether $(\g,T,Y)$ is amenable or whether $\mathbf{Z}$ is a valid adjustment set for $(T,Y)$. 
Instead, we need to keep walking such a walk as it may turn into a blocked (possibly directed walk that started with an edge pointing out of $T$) walk upon passing through $\mathbf{Z}$ or a non-causal walk upon traversing along a backward-facing edge $\gets$, which are walks that contain information about amenability or validity of adjustment for the nodes reached.

\item[$\walknamopen$] – These are possibly directed walks that started with an undirected edge out of $T$ and are not blocked by $\mathbf{Z}$.
Reaching a node $Y$ by a $\walknamopen$ walk tells us that $(\g,T,Y)$ is not amenable (which implies that $\mathbf{Z}$ cannot be a valid adjustment set for $(T,Y)$) as Condition 1 in the Modified Adjustment Criterion is violated.
We need to keep walking such a walk as other nodes reached by it are also not amenable and as it may turn into a blocked (possibly directed walk that started with an undirected edge out of $T$) or a non-causal walk (which we need to check are blocked).

\item[$\walkcausalblocked$] – These are possibly directed walks that started with an edge pointing out of $T$ and contain a node in $\mathbf{Z}$.
Reaching a node $Y$ by a $\walkcausalblocked$ walk tells us that $\mathbf{Z}$ is not a valid adjustment set for $(T,Y)$ as Condition 2 in the Modified Adjustment Criterion is violated (the walk must have passed through a node in $\mathbf{Z}$).
We need to keep walking such a walk as $\mathbf{Z}$ is also not a valid adjustment set for other nodes reached by this walk and as it may turn into a non-causal walk (which we need to check are blocked).

\item[$\walknamblocked$] – These are possibly directed walks that started with an undirected edge out of $T$ and are blocked by $\mathbf{Z}$.
Reaching a node $Y$ by a $\walknamblocked$ walk tells us that $(\g,T,Y)$ is not amenable (which implies that $\mathbf{Z}$ cannot be a valid adjustment set for $(T,Y)$) as Condition 1 in the Modified Adjustment Criterion is violated.
We need to keep walking such a walk as other nodes reached by it are also not amenable and as it may turn into a non-causal walk (which we need to check are blocked).

\item[$\walknoncausal$] — These are walks that have passed through at least one backward-facing edge and are thus non-causal and are not blocked by $\mathbf{Z}$;
if they were blocked by $\mathbf{Z}$ we would just stop walking such a non-causal blocked walk.
Reaching a node $Y$ by a $\walknoncausal$ walk tells us that $\mathbf{Z}$ is not a valid adjustment set for $(T,Y)$ as Condition 3 in the Modified Adjustment criterion is violated.

\end{description}

To summarise and help intuition,
we provide the following illustration
of the possible walk-status changes:

\centerline{ %
\definecolor{colA}{HTML}{358132}
\definecolor{colB}{HTML}{6e2b2d}
\definecolor{colC}{HTML}{484BB7}
\begin{tikzpicture}
\node(cb) at(6.5,-1) {$\walkcausalblocked$};
\node[draw](co) at(0,-1) {$\walkcausalopen$};
\node(ci) at(0,-1.5) {{\color{colC}\small start $T \to W$}};
\node[draw](nc) at(-6.5,0) {$\walknoncausal$};
\node(nci) at(-8.5,0) {{\color{colC}\small start $T \gets W$}};
\node[draw](no) at(0,1) {$\walknamopen$};
\node(ni) at(0,1.5) {{\color{colC}\small start $T \arrow W$}};
\node(nb) at(6.5,1) {$\walknamblocked$};

\node[colA] at (-2.8,0) {when walking $\to V \gets W, V\in\mathbf{Z}$};
\node[colB] at (3.3,0) {when walking
$\begin{Bmatrix}\to V \to W \\ \arrow V \to W \\ \arrow V \arrow W \\ \to V \arrow W \end{Bmatrix}, V\in\mathbf{Z}$};

\draw[->,dashed,colA] (co) -| (nc);
\draw[->,dashed,colA] (no) -| (nc);

\draw[->,dashed,colB] (co) -- (cb);
\draw[->,dashed,colB] (no) -- (nb);
\end{tikzpicture}
}

An instructive first example of a reachability algorithm is \cref{algorithm: amenability} to check amenability, which we use also in our implementation of the identification strategies for CPDAGs.
Here, the routine simplifies considerably, since we only start walking $\walknamopen$ and $\walknamblocked$ walks from $T$ and all nodes $Y$ reached by a such a walk are nodes such that $(\g,T,Y)$ is not amenable.

Finally, in \cref{algorithm: VAS check} we present the key to efficiently implementing adjustment verification for our adjustment-based identification distances:
Given a graph (DAG or CPDAG) $\g$, treatment $T$, and candidate adjustment set $\mathbf{Z}$ with $T\notin \mathbf{Z}$,
\cref{algorithm: VAS check} returns in $O(p+m)$ time two lists  
a) \texttt{NAM} (``not amenable'') containing all $Y\notin \mathbf{T}$ such that $(\g,T,Y)$ is not amenable,
and b) \texttt{NVA} (``not validly adjusted for'') containing all $Y\notin\mathbf{T}\cup\mathbf{Z}$ such that $\mathbf{Z}$ is not a valid adjustment set for $(T,Y)$ in $\g$ and all $Y\in \mathbf{Z}$.

\vfill

\begin{algorithm}
\caption{Check amenability of a CPDAG $\G$ relative to $(\mathbf{T},Y)$ for a given set $\mathbf{T}$ of treatment nodes and all possible $Y$}
\label{algorithm: amenability}
\begin{algorithmic}[1]
\State \textbf{Input}: CPDAG $\G$ and a set of treatment nodes $\mathbf{T}$ in $\G$
\State \textbf{Output}: Set \texttt{NAM} of nodes $Y\not\in \mathbf{T}$ in $\G$ such that $\G$ is \textbf{n}ot \textbf{am}enable relative to $(T,Y)$
\vspace{.5em}
\Function{visit}{\texttt{arrivedby, $V$}}
        \State \texttt{visited.insert($V$)}
        \If{\texttt{arrivedby == init}}
        \Comment{Start walking proper possibly directed walks that do not start out of $\mathbf{T}$}
            \For{$W$ in $\adjacents(V)\setminus\mathbf{T}$}
                \If{\texttt{$W$ not in visited}}
                    \State \Call{visit}{\texttt{$\arrow$, $W$}}
                \EndIf
            \EndFor
        \Else
        \Comment{Continue walking proper possibly directed walks}
            \State \texttt{NAM.push($V$)}
            \Comment{Reached $V$ by a proper possibly directed walk that does not start out of $\mathbf{T}$}
                \For{$W$ in $\adjacents(V)\setminus\mathbf{T}$}
                    \If{\texttt{$W$ not in visited}}
                        \State \Call{visit}{\texttt{$\arrow$, $W$}}
                    \EndIf
                \EndFor
                \For{$W$ in $\ch(V)\setminus\mathbf{T}$}
                    \If{\texttt{$W$ not in visited}}
                        \State \Call{visit}{\texttt{$\to$, $W$}}
                    \EndIf
                \EndFor
        \EndIf
\EndFunction
\vspace{.5em}
\State Initialise \texttt{NAM} as empty set
\State Initialise \texttt{visited} as empty HashSet
\vspace{.5em}
\For{$V$ in $\mathbf{T}$}
  \Call{visit}{\texttt{init, $V$}}
\EndFor
\vspace{.5em}
\State \Return{\texttt{NAM}}
\end{algorithmic}
\end{algorithm}

\begin{algorithm}[ht]
\caption{Validate $\mathbf{Z}$ as adjustment set relative to $(\mathbf{T},Y)$ for a given set $\mathbf{T}$ of treatment nodes and all possible $Y$ in $\G$}
\label{algorithm: VAS check}
\begin{algorithmic}[1]
\State \textbf{Input}: CPDAG (or DAG) $\G$,
a set of treatment nodes $\mathbf{T}$ in $\G$,
and a set of adjustment nodes $\mathbf{Z}$ in $\G$ with $\mathbf{T}\cap\mathbf{Z}=\emptyset$
\State \textbf{Output}:
Set \texttt{NAM} of nodes $Y\not\in \mathbf{T}$ in $\G$ such that $\G$ is \textbf{n}ot \textbf{am}enable relative to $(T,Y)$
\State \phantom{\textbf{Output}:}
Set \texttt{NVA} of nodes $Y\not\in \mathbf{T}$ in $\G$ such that $\mathbf{Z}$ is \textbf{n}ot a \textbf{v}alid \textbf{a}djustment set for $(T,Y)$ in $\G$
\vspace{.5em}

\Function{nextsteps}{\texttt{arrivedby, $V$}}
\Comment{Return \texttt{(moveonby, $W$, blocked)} triplets}
\State Initialise \texttt{next} as empty set
\If{\texttt{arrivedby == $\to$}}
    \For{$W$ in $\pa(V)\setminus\mathbf{T}$}
    \Comment{collider $\to V \gets W$}
        \State \texttt{next.push(($\gets$, $W$, $\mathbb{1}(V \not\in \mathbf{Z})$))}
    \EndFor
\ElsIf{\texttt{arrivedby in $\{\texttt{init}, \gets\}$}}
    \For{$W$ in $\pa(V)\setminus\mathbf{T}$}
    \Comment{$\gets V \gets W$}
        \State \texttt{next.push(($\gets$, $W$, $\mathbb{1}(V \in \mathbf{Z})$))}
    \EndFor
\EndIf
    \For{$W$ in $\adjacents(V)\setminus\mathbf{T}$}
    \Comment{$\to V \arrow W$ or $\arrow V \arrow W$ or $\gets V \arrow W$}
        \State \texttt{next.push(($\arrow$, $W$, $\mathbb{1}(V \in \mathbf{Z})$))}
    \EndFor
\For{$W$ in $\ch(V)\setminus\mathbf{T}$}
\Comment{$\to V \to W$ or $\arrow V \to W$ or $\gets V \to W$}
    \State \texttt{next.push(($\to$, $W$, $\mathbb{1}(V \in \mathbf{Z})$))}
\EndFor
\State \Return \texttt{next}
\Comment{omits steps to $\mathbf{T}$ and $\arrow V \gets W$}
\EndFunction

\vspace{.5em}
\Function{visit}{\texttt{(arrivedby, $V$, walkstatus)}}
        \State \texttt{visited.insert((arrivedby, $V$, walkstatus))}
        \If{\texttt{walkstatus in $\{\walknamopen,\walknamblocked\}$}}
            \State \texttt{NAM.push($V$)} and \texttt{NVA.push($V$)}
            \Comment{Reached $V$ by a possibly directed walk that does not start out of $\mathbf{T}$}
        \ElsIf{\texttt{walkstatus == $\walknoncausal$}}
            \State \texttt{NVA.push($V$)}
            \Comment{Reached $V$ by a non-causal walk that is not blocked by $\mathbf{Z}$}
        \ElsIf{\texttt{walkstatus == $\walkcausalblocked$}}
            \State \texttt{NVA.push($V$)}
            \Comment{Reached $V$ by a possibly directed walk that is blocked by $\mathbf{Z}$}
        \EndIf
        \For{\texttt{(moveonby, $W$, blocked) in }\Call{nextsteps}{\texttt{arrivedby, $V$}}}
            \State \texttt{next = none}
            \If{\texttt{walkstatus == init}}
                \If{\texttt{moveonby == $\to$}}
                    {\texttt{next = ($\to$, $W$, \walkcausalopen)}}
                    \Comment{Start possibly directed walk $\textbf{T} \to$}
                \ElsIf{\texttt{moveonby == $\arrow$}}
                    {\texttt{next = ($\arrow$, $W$, \walknamopen)}}
                    \Comment{Start possibly directed walk $\textbf{T}\arrow$}
                \ElsIf{\texttt{moveonby == $\gets$}}
                    {\texttt{next = ($\gets$, $W$, \walknoncausal)}}
                    \Comment{Start non-causal walk}
                \EndIf
            \ElsIf{\texttt{walkstatus in $\{\walkcausalopen,\walkcausalblocked\}$}}
                \If{\texttt{moveonby in $\{\to,\arrow\}$}}
                    \If{\texttt{blocked == false}}
                        {\texttt{next = (moveonby, $W$, walkstatus)}}
                    \ElsIf{\texttt{blocked == true}}
                        {\texttt{next = (moveonby, $W$, $\walkcausalblocked$)}}
                    \EndIf
                \ElsIf{\texttt{moveonby == $\gets$} and \texttt{blocked == false} and \texttt{walkstatus == $\walkcausalopen$}}
                    \State {\texttt{next = (moveonby, $W$, $\walknoncausal$)}}
                \EndIf
            \ElsIf{\texttt{walkstatus in $\{\walknamopen,\walknamblocked\}$}}
                \If{\texttt{moveonby in $\{\to,\arrow\}$}}
                    \If{\texttt{blocked == false}}
                        {\texttt{next = (moveonby, $W$, walkstatus)}}
                    \ElsIf{\texttt{blocked == true}}
                        {\texttt{next = (moveonby, $W$, $\walknamblocked$)}}
                    \EndIf
                \ElsIf{\texttt{moveonby == $\gets$} and \texttt{blocked == false} and \texttt{walkstatus == $\walknamopen$}}
                    \State {\texttt{next = (moveonby, $W$, $\walknoncausal$)}}
                \EndIf
            \ElsIf{\texttt{walkstatus == $\walknoncausal$} and \texttt{blocked == false}}
                \State {\texttt{next = (moveonby, $W$, $\walknoncausal$)}}
            \EndIf
        \If{\texttt{next is not none} and \texttt{next not in visited}} 
            \Call{\texttt{visit}}{\texttt{next}}
        \EndIf
        \EndFor
\EndFunction
\vspace{.5em}
\State Initialise \texttt{NAM} as empty set
\State Initialise \texttt{NVA}=$\mathbf{Z}$
\State Initialise \texttt{visited} as empty HashSet
\vspace{.5em}
\For{$V$ in $\mathbf{T}$}
  \Call{visit}{\texttt{(init, $V$, init)}}
\EndFor
\vspace{.5em}
\State \Return{\texttt{NAM} and \texttt{NVA}}
\end{algorithmic}
\end{algorithm}

\FloatBarrier

\subsection{Our Algorithm Enables Efficient Calculation of Parent-, Ancestor-, and Oset-AID}
\label{app:efficientalg}

For the three distances, the Parent-AID, Ancestor-AID, and Oset-AID,
we need to identify adjustment sets in $\Gguess$ with an additional amenability check in case $\Gguess$ is a CPDAG
and then verify the proposed adjustment sets in $\Gtrue$.
While 
algorithms
with
$O(p+m)$ runtime
exist
for each involved computation,
the algorithmic development in the preceding subsections is crucial to enable efficient calculation of the distances:
\cref{algorithm: VAS check} enables us to verify adjustment sets for
all
$\{(T,Y') \mid Y'\in \mathbf{V}\setminus\{T\}\}$
with one $O(p+m)$ run, instead of performing $(p-1)$ separate runs of a valid adjustment verifier algorithm.
For simplicity and as an instructive example,
we first discuss our implementation of the Parent-AID for DAGs
and then present the general routine for implementing our distances.

\subsubsection{Calculating the Parent Adjustment Identification Distance Efficiently}

To calculate the Parent-AID between two DAGs $\Gtrue$ and $\Gguess$ over $p$ nodes and $m$ edges,
we need to iterate over all tuples $(T,Y)$ of nodes,
obtain the parent set of the treatment in $\Gguess$,
and check whether this set is a valid adjustment set in $\Gtrue$ with respect to $(T,Y)$.
For their SID implementation, \cite{peters2015structural} report a worst-case runtime of
$O(p \cdot \log_2(p) \cdot p^3)$
where the factor $p^3$ corresponds to squaring of the adjacency matrix of $\Gtrue$
which is done $\lceil\log_2(p)\rceil$ times to assemble a path matrix that codes which nodes
are
reachable 
from each of the $n$ treatment nodes.\footnote{%
One may be able to reduce the cubic runtime for the matrix multiplication
if the adjacency matrices exhibit extra known structure,
though, the algorithm with the best known asymptotic runtime to date of $O(p^{2.37})$
is a galactic algorithm and not usable in practice \citep{alman2020refined}. %
For certain adjacency matrices, the Strassen algorithm for matrix multiplication may enable a reduction to $O(p^{\log_2(7)}) \approx O(p^{2.8})$.
}

Combining the above algorithms, we can calculate the Parent-AID with an algorithm
with runtime $O(p(p+m))$
as follows:

\begin{itemize}
\item Initialise the mistake count $c = 0$
\item For each node $T$ (each of the following steps can be completed in $O(p + m)$ time)
\begin{itemize}
\item Obtain $\mathbf{Z}$ as the set of parents of $T$ in $\Gguess$
\item Obtain $\mathbf{ND}$ as the set of non-descendants of $T$ in $\Gtrue$
\item Obtain $\mathbf{NVA}$ as the the set of nodes $Y$ such that $\mathbf{Z}$ is not a valid adjustment set for $(T,Y)$ in $\Gtrue$
\item Add 
\[
\underbrace{|\mathbf{Z} \setminus \mathbf{ND}|}_{\text{guessed no effect, but descendant in $\Gtrue$}}
+
\underbrace{|\mathbf{Z}^\complement \cap \mathbf{NVA}|}_{\mathbf{Z}\text{ valid adjustment set in $\Gguess$, but not in $\Gtrue$}}
\]
to the mistake count $c$
\end{itemize}
\item Return $d^{\mathcal{I}_P}(\Gtrue, \Gguess) = c$
\end{itemize}

Our Parent-AID coincides with the SID only as distance between DAGs, but, in contrast to the SID, generalizes to CPDAGs.
The multi-set SID between CPDAGs requires exponential runtime,
while the Parent-AID between CPDAGs is still $O(p(p+m))$ as shown in the next subsection.

\subsubsection{Calculating Adjustment Identification Distances Efficiently}

We fix the treatment $T$ and apply our algorithm to all tuples $(T,Y), T \neq Y$ simultaneously. We also group our identifying formulas as follow: For each $T$, the identification strategy algorithm 
returns a vector of $($node, identifying formula$)$ tuples
which we code as a triple $(\mathbf{A},\mathbf{B},(Y,\mathbf{Z}(Y))_{Y \in \mathbf{C}})$ consisting of a) the set nodes $\mathbf{A}$ for which the causal effect from $T$ is not identifiable, b) a set of nodes $\mathbf{B}$ for which the causal effect from $T$ is zero and c) a set of two-tuples consisting of the remaining nodes $Y \in \mathbf{C}=(\mathbf{A} \cup \mathbf{B})^c$ and for each $Y$ a corresponding valid adjustment set $\mathbf{Z}(Y)$. To compute this vector, we first apply a reachability algorithm to compute $\mathbf{I} = \mathrm{amen}(T,\Gguess)$ (\cref{algorithm: amenability}), where $\mathrm{amen}(T,\g)$ denotes the set of nodes $Y$ such that $(\g,T,Y)$ is amenable . For the parent strategy we then compute $\mathbf{P}=\pa(T,\Gguess)$ and return $( \mathbf{I}^c,\mathbf{P},(Y,\mathbf{P})_{Y \in \mathbf{I}\setminus \mathbf{P}})$. For the ancestor strategy we compute $\mathbf{A}=\an(T,\g)$, $\mathbf{D}=\de(T,\g)$ and return $(\mathbf{I}^c,\mathbf{D}^c \cap \mathbf{I},(N,\mathbf{A})_{N \in \mathbf{D}})$. For the Oset strategy, we compute $\mathbf{D}$, $\opt{\g}=\pa(\mathbf{D} \cap \an(Y,\G)) \setminus \mathbf{D}$ for each $Y \in \mathbf{D}$ and return $(\mathbf{I}^c, \mathbf{D}^c \cap \mathbf{I},(Y,\mathbf{O}(T,Y,\Gguess))_{Y \in \mathbf{D}})$. 
We repeat these steps for each $T$ and return a vector of three-tuples. The overall complexity is therefore $O(p(p+m))$ for the parent and ancestor strategies, and $O(p^2(p+m))$ for the Oset strategy. The additional $p$ is due to $\opt{\g}$ depending on $Y$, whereas $\pa(T,\g)$ and $\an(T,\g)$ do not depend on $Y$.

Consider now the verification step for a fixed treatment $T$ and the corresponding triple $(\mathbf{A},\mathbf{B},(Y,\mathbf{Z}(Y))_{Y \in \mathbf{C}})$.
We compute $\mathbf{I}' = \mathrm{amen}(T,\Gtrue)$ and $\mathbf{D}'=\de(Y,\Gtrue)$ in $O(p+m)$ using reachability algorithms. For each unique $\mathbf{Z}(Y)$ we then apply a reachability algorithm to compute set of nodes $N \in \mathbf{V}(T,\mathbf{Z}(Y),\Gtrue)$ such that $\mathbf{Z}(Y)$ is a valid adjustment set relative to $(T,N)$ in $\Gtrue$ (\cref{algorithm: VAS check}).
For the parent and ancestor strategy there is only one $\mathbf{Z}(Y)$, so we only need to do this step once.
We then add 
$$|\mathbf{A} \cap \mathbf{I}'| + |\mathbf{B} \cap \mathbf{D}'| + \#\{Y \in \mathbf{C} \setminus  \mathbf{V}(T,\mathbf{Z}(Y),\Gtrue)\}$$
to the distance between $\Gtrue$ and $\Gguess$. As we have to repeat this for each $T$, the overall runtime for the verifier is $O(p(p+m))$ for the parent and ancestor strategies, and $O(p^2(p+m))$ for the Oset strategy.

\section{Empirical Analysis of Algorithmic Time Complexity}\label{app:complexityexperiment}

To empirically analyze the runtime
complexity of our distance implementations,
we evaluate the algorithms on inputs of varying size $p$ and measure the 
runtime
$r_{\mathrm{emp}}(p)$
(here, we use the wall time).
For a given complexity, such as $O(p^2)$,
we then project the runtime we would expect for any $p$ based on the runtime observed for the smallest size $p$. We denote the projected runtime for $p$ as $r_{\mathrm{proj}}(p)$.
The idea is that the ratio of the projected runtime $r_{\mathrm{proj}}(p)$ and the observed empirical runtime $r_{\mathrm{emp}}(p)$
approaches $1$ in the limit of $p\to\infty$ if the implementation has the complexity used to compute $r_{\mathrm{proj}}(p)$.
For a given algorithm $\texttt{distance}(\Gtrue, \Gguess)$ we proceed as follows.

\begin{itemize}

\item \emph{(Grid of graph sizes)}\\
Specify a grid of graph sizes $P$, for example, $P=(8, 16, 32, 64, 128, 256, 512, 1024)$.

\item \emph{(Observed empirical runtimes)}\\
Sample, for each $p\in P$, $11$ pairs of DAGs
$\Gtrue=(\mathbf{V},\mathbf{E}_\text{true})$
and
$\Gguess=(\mathbf{V},\mathbf{E}_\text{guess})$
with $|\mathbf{V}|=p$ and edges drawn independently with
probability $20/(p-1)$ for sparse and $0.3$ for dense graphs
(the expected number of edges is thus linear in the number of nodes for sparse, and quadratic for dense graphs). We run $\texttt{distance}$ on those $11$ pairs and compute the average runtime over these $11$ runs, denoted $r_{\mathrm{emp}}(p)$.

\item \emph{(Project the runtime under the given time complexity $O(c(p))$ where, for example $c(p)=p^2$)}\\
We obtain the projected runtime for inputs of size $p$ under the given time complexity based on the smallest input size $p^*=\min(P)$ as
\[
\text{projected runtime:}\quad r_{\mathrm{proj}}(p) = c(p)\frac{r_{\mathrm{emp}}(p^*)}{c(p^*)}.
\]

\item \emph{(Relative projected runtime)}\\
To compare the projected runtime to the observed empirical runtime,
we then visualize the projected runtime as a fraction of the observed empirical runtime
\[
\text{relative projected runtime:}\quad \frac{r_{\mathrm{proj}}(p)}{r_{\mathrm{emp}}(p)}.
\]

\end{itemize}

As we consider asymptotic complexity,
we need to evaluate for large enough $p$
to assess the following trends of relative projected runtimes for increasing $p$.
If, when comparing to $O(c(p))$, the
relative projected runtime increases with the number of nodes $p$,
this indicates that the empirical time complexity is lower than $O(c(p))$.
If, when comparing to $O(c(p))$, the
relative projected runtime decreases with the number of nodes $p$,
this indicates that the empirical time complexity is larger than $O(c(p))$.
If the algorithm has time complexity $O(c(p))$
then we expect the relative projected runtime for this complexity
to be close to constant.

\section{Distance Comparison}\label{app:comparison}

The simulation study described in \cref{sec: distance comparison} is part of a larger study in which we consider $8$ parameter settings.
Specifically, we consider all possible combinations of the following three choices:
i) $\Gtrue$ is a dense, respectively sparse graph randomly drawn as described in \cref{sec: empirical runtime},
ii) $\Gguess$ is $\Gtrue$ with one edge randomly removed, respectively $\Gguess$ is randomly drawn in the same way as $\Gtrue$ and
iii) $\Gtrue$ and $\Gguess$ are $30$-node, respectively $100$-node graphs. For each of these $8$ settings we randomly draw $300$ pairs of $\Gtrue$ and $\Gguess$ graphs. For each pair we compute the three proposed adjustment identification distances and the SHD.
To summarise the results we obtain correlation tables across the distances for each of the $8$ experiments.
We also provide corresponding scatter plots; to reduce clutter we do so only for the $30$-node graphs.

\subsection{Distances between a random graph and a graph with one edge removed} \label{app:removal}

\cref{tab:edge-removal-dense} contains correlation tables for the setting that $\Gtrue$ is a random dense graph and $\Gguess$ is $\Gtrue$ with one edge removed as well as the average value for each distance over the $300$ pairs. The left table contains the correlations for the $p=30$ case and the right those for the $p=100$ case. We also provide a corresponding scatter plot for the $p=30$ case in \cref{fig:dense-removal}. We do not include the SHD, as the SHD between $\Gtrue$ and $\Gguess$ is by construction $1$.

When $\Gguess$ graphs are close to the true graphs $\Gtrue$ the correlation between the three distances is surprisingly small; particularly the correlations between the Oset-AID, respectively the Ancestor-AID and the Parent-AID are small. This may be explained by how the Parent-AID treats node tuples $(T,Y)$ for which $Y \notin \de(T,\Gguess)$ differently from the Ancestor- and Oset-AID: For such tuples, both the Oset and ancestor adjustment strategy return $f(y)$, whereas parent adjustment only does so if $Y \in \pa(T,\Gguess)$. As a result, small differences between $\Gtrue$ and $\Gguess$ tend to lead to a larger number of mistakes if we apply parent adjustment as opposed to Oset or ancestor adjustment. This is also reflected in the larger average of the Parent-AID. Another interesting pattern visible in the scatter plot, is the number of cases where a large Parent-AID coincides with a small or even zero Ancestor-AID. This illustrates how $\Gguess$ graphs that respect the causal orders of $\Gtrue$ may nonetheless be deemed very distant by the Parent-AID. Overall, the results indicate that the three distances are meaningfully different, that is, they capture distinct information.

\begin{table}[h]
\centering
\small{
\begin{tabular}{l|lll}
\toprule
$p=30$ & Ancestor-AID & Oset-AID & Parent-AID \\
\midrule
Ancestor-AID & 1 & 0.7281 & 0.0886 \\
Oset-AID & 0.7281 & 1 & 0.2080 \\
Parent-AID & 0.0886 & 0.2080 & 1 \\
\bottomrule
Average dist. & 2.0 & 5.9 & 11.2 \\
\end{tabular}
\hfill
\begin{tabular}{l|lll}
\toprule
$p=100$ & Ancestor-AID & Oset-AID & Parent-AID \\
\midrule
Ancestor-AID & 1 & 0.7441 & 0.3717 \\
Oset-AID & 0.7441 & 1 & 0.3114 \\
Parent-AID & 0.3717 & 0.3114 & 1 \\
\bottomrule
Average dist. & 3.4 & 13.6 & 39.8 \\
\end{tabular}}
    \caption{Correlation tables for the case that $\Gtrue$ is a random dense graph and $\Gguess$ is $\Gtrue$ with one edge removed.}
    \label{tab:edge-removal-dense}
\end{table}

\cref{tab:edge-removal-sparse} contains correlation tables for the setting that $\Gtrue$ is a random sparse graph and $\Gguess$ is $\Gtrue$ with one edge removed as well as the average value for each distance over the $300$ pairs. The left table contains the correlations for the $p=30$ case and the right those for the $p=100$ case. We also provide a corresponding scatter plot for the $p=30$ case in \cref{fig:sparse-removal}. We do not include the SHD, as the SHD between $\Gtrue$ and $\Gguess$ is by construction $1$.

The results are overall qualitatively similar to the results for dense graphs, which indicates that the distinct characteristics of the three distances are not specific to sparse or small graphs but are in fact a characteristic of the three distances.

\begin{table}[h!]
\centering
\small{
\begin{tabular}{l|lll}
\toprule
$p=30$ & Ancestor-AID & Oset-AID & Parent-AID \\
\midrule
Ancestor-AID & 1 & 0.8564 & 0.4377 \\
Oset-AID & 0.8564 & 1 & 0.3749 \\
Parent-AID & 0.4377 & 0.3749 & 1 \\
\bottomrule
Average dist. & 1.0& 2.4& 8.3\\
\end{tabular}
\hfill
\begin{tabular}{l|lll}
\toprule
$p=100$ & Ancestor-AID & Oset-AID & Parent-AID \\
\midrule
Ancestor-AID & 1 & 0.7280 & 0.3019 \\
Oset-AID & 0.7280 & 1 & 0.2685 \\
Parent-AID & 0.3019 & 0.2685 & 1 \\
\bottomrule
Average dist. & 4.1& 24.8& 42.7 \\
\end{tabular}}
    \caption{Correlation tables for the case that $\Gtrue$ is a random sparse graph and $\Gguess$ is $\Gtrue$ with one edge removed.}
    \label{tab:edge-removal-sparse}
\end{table}

\clearpage

\begin{figure}
\centering
\resizebox{!}{.44\textheight}{\includegraphics[draft=false]{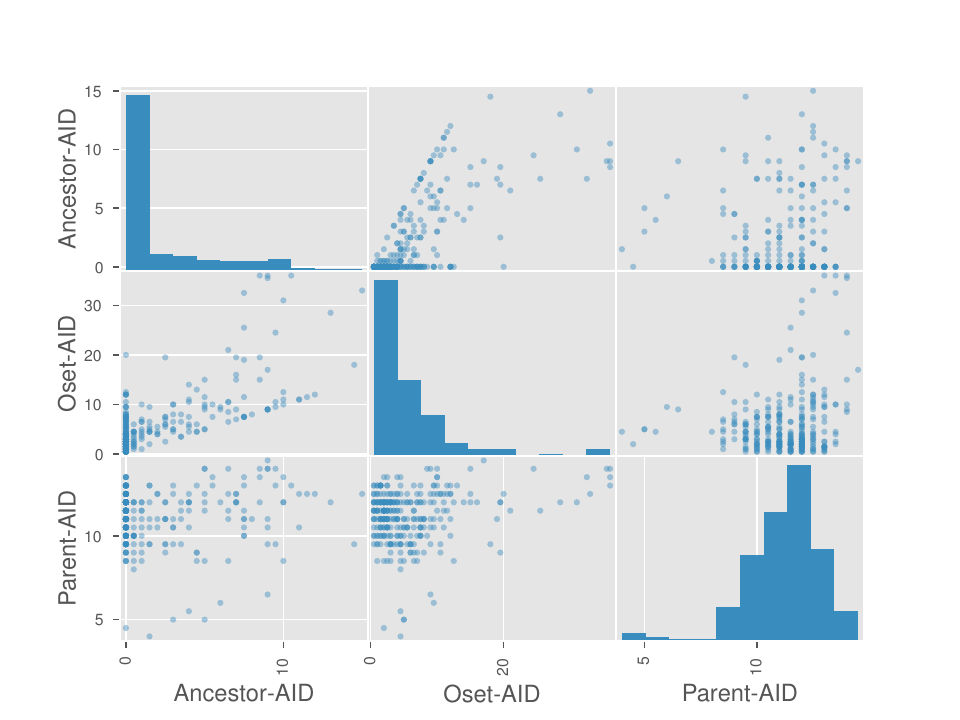}}
    \caption{Scatter plot for the case that $\Gtrue$ is a random $30$-node dense graph and $\Gguess$ is $\Gtrue$ with one edge removed.}
    \label{fig:dense-removal}
\end{figure}

\begin{figure}
    \centering
\resizebox{!}{.44\textheight}{\includegraphics[draft=false]{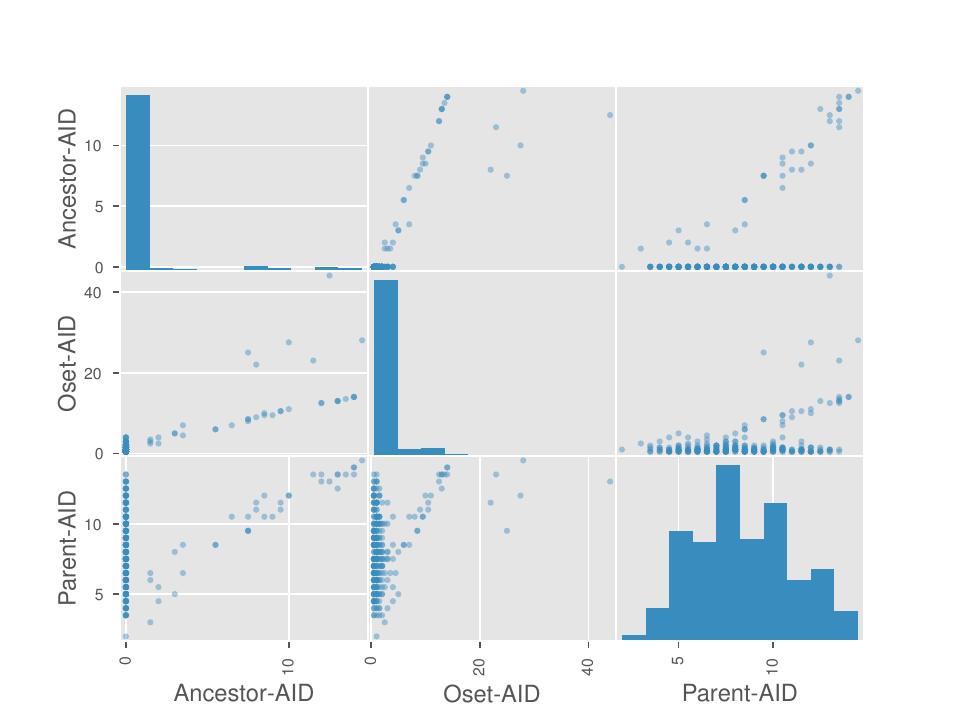}}
    \caption{Scatter plot for the case that $\Gtrue$ is a random $30$-node sparse graph and $\Gguess$ is $\Gtrue$ with one edge removed.}
    \label{fig:sparse-removal}
\end{figure}

\clearpage

\subsection{Distances between two random graphs} \label{app:random}

\cref{tab:random-dense} contains correlation tables for the setting that $\Gtrue$ and $\Gguess$ are random dense graphs. The left table contains the correlations for the $p=30$ case and the right those for the $p=100$ case. We also provide a corresponding scatter plot for the $p=30$ case in \cref{fig:dense-random}.

When comparing two independently drawn random dense graphs, the distances are more strongly correlated than what was observed in \cref{app:removal}. Especially, the Ancestor-AID and the Oset-AID are very strongly correlated.
Overall, the results indicate that, while the distances have distinct characteristics, 
the practical difference may be less relevant when comparing vastly different graphs
as opposed to close-by graphs as in \cref{app:removal}.

\begin{table}[h]
\centering
\scriptsize{
\begin{tabular}{l|lllr}
\toprule
$p=30$ & Ancestor-AID & Oset-AID & Parent-AID & SHD \\
\midrule
Ancestor-AID & 1 & 0.9474 & 0.7429 & 0.4673 \\
Oset-AID & 0.9474 & 1 & 0.5715 & 0.5233 \\
Parent-AID & 0.7429 & 0.5715 & 1 & 0.1769 \\
SHD & 0.4673 & 0.5233 & 0.1769 & 1 \\
\bottomrule
Average dist. &253.6 & 258.7 & 383.0 & 202.4\\
\end{tabular}
\hfill
\begin{tabular}{l|lllr}
\toprule
$p=100$ & Ancestor-AID & Oset-AID & Parent-AID & SHD \\
\midrule
Ancestor-AID & 1 & 0.9819 & 0.8512 & 0.4038 \\
Oset-AID & 0.9819 & 1 & 0.7704 & 0.3838 \\
Parent-AID & 0.8512 & 0.7704 & 1 & 0.2959 \\
SHD & 0.4038 & 0.3838 & 0.2959 & 1 \\
\bottomrule
Average dist. & 3469.8& 3475.6& 4539.0&2299.7\\
\end{tabular}}
    \caption{Correlation tables for the case that $\Gtrue$ and $\Gguess$ are random dense graphs.}
    \label{tab:random-dense}
\end{table}

\cref{tab:random-sparse} contains correlation tables for the setting that $\Gtrue$ and $\Gguess$ are random sparse graphs. The left table contains the correlations for the $p=30$ case and the right those for the $p=100$ case. We also provide a corresponding scatter plot for the $p=30$ case in \cref{fig:sparse-random}.

The results for sparse graphs are overall qualitatively similar to the results for dense graphs.
One notable difference is the scatter plot between the Ancestor-AID and the Oset-AID which shows that in many cases the Ancestor-AID and the Oset-AID are the same but that in the cases where they differ they are rarely just slightly different. We are uncertain what drives this behavior and why it is less pronounced in dense graphs. 
One potential explanation is that the Ancestor-AID and the Oset-AID count a mistake whenever two variables are in ancestral relationship in $\Gguess$ but not in $\Gtrue$ or vice versa. Similarly, they do not count a mistake when two variables that are not in ancestral relationship in $\Gguess$ also are not in ancestral relationship in $\Gtrue$.
When comparing two random sparse graphs, this behavior may cover the large majority of node pairs and therefore the Ancestor-AID and the Oset-AID between random sparse graphs are often very similar.
The distances only possibly differ for node pairs $(T,Y)$ where $T$ is ancestor of $Y$ in both graphs
and only one of the two adjustment strategies applied to $\Gguess$ yields an adjustment set that is also a valid adjustment set in $\Gtrue$.

\begin{table}[h]
\centering
\scriptsize{
\begin{tabular}{l|lllr}
\toprule
$p=30$ & Ancestor-AID & Oset-AID & Parent-AID & SHD \\
\midrule
Ancestor-AID & 1 & 0.9782 & 0.9390 & 0.8170 \\
Oset-AID & 0.9782 & 1 & 0.9000 & 0.8264 \\
Parent-AID & 0.9390 & 0.9000 & 1 & 0.8026 \\
SHD & 0.8170 & 0.8264 & 0.8026 & 1 \\
\bottomrule
Average dist. & 316.0& 316.8& 356.0& 289.2\\
\end{tabular}
\hfill
\begin{tabular}{l|lllr}
\toprule
$p=100$ & Ancestor-AID & Oset-AID & Parent-AID & SHD \\
\midrule
Ancestor-AID & 1 & 0.9668 & 0.6965 & 0.3124 \\
Oset-AID & 0.9668 & 1 & 0.5212 & 0.2914 \\
Parent-AID & 0.6965 & 0.5212 & 1 & 0.1324 \\
SHD & 0.3124 & 0.2914 & 0.1324 & 1 \\
\bottomrule
Average dist. & 3229.3& 3242.9& 4639.4& 1694.5\\
\end{tabular}}
    \caption{Correlation tables for the case that $\Gtrue$ and $\Gguess$ are random sparse graphs.}
    \label{tab:random-sparse}
\end{table}

\clearpage

\begin{figure}
\centering
\resizebox{!}{.44\textheight}{\includegraphics[draft=false]{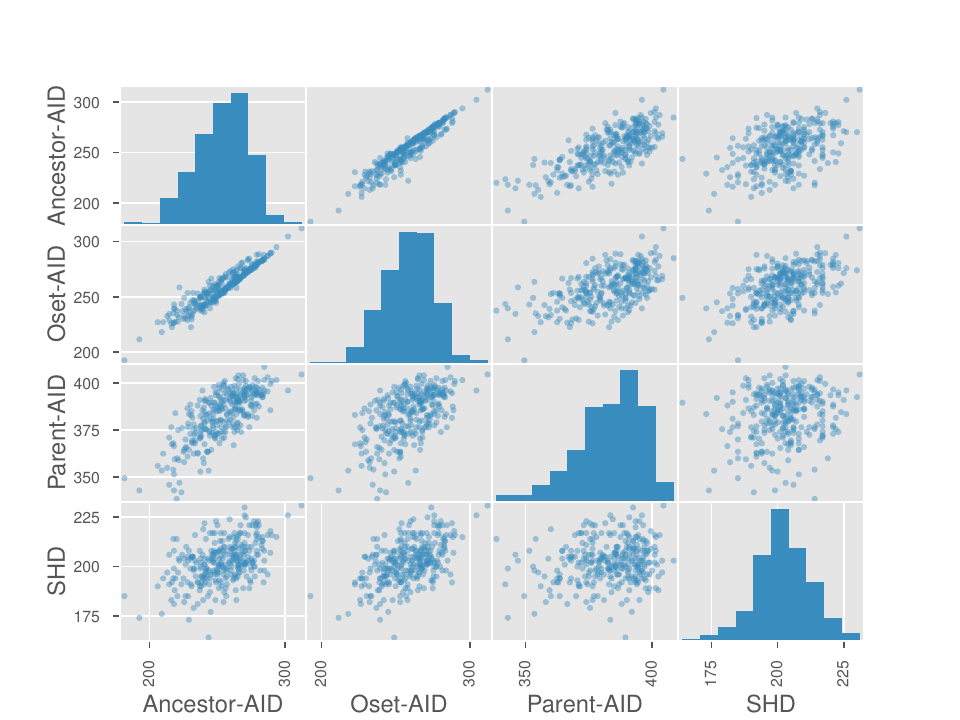}}
    \caption{Scatter plot for the case that $\Gtrue$ and $\Gguess$ are random dense graphs with $30$ nodes.}
    \label{fig:dense-random}
\end{figure}

\begin{figure}
\centering
\resizebox{!}{.44\textheight}{\includegraphics[draft=false]{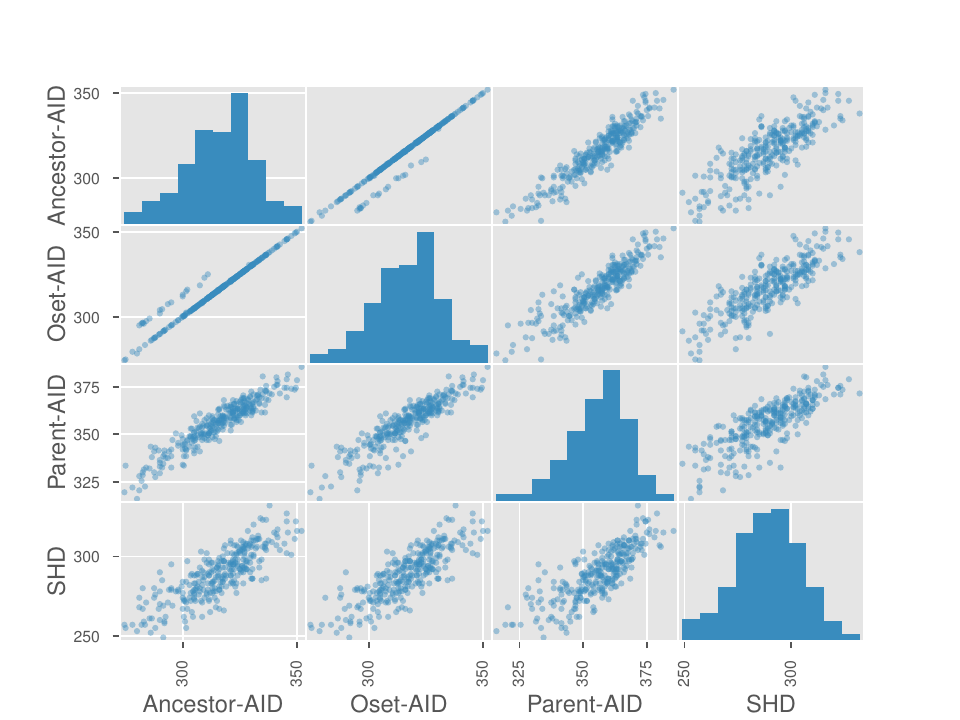}}
    \caption{Scatter plot for the case that $\Gtrue$ and $\Gguess$ are random sparse graphs with $30$ nodes.}
    \label{fig:sparse-random}
\end{figure}

\end{document}